%% file: main.tex
\documentclass[sigconf]{acmart}

\AtBeginDocument{%
  }

\setcopyright{acmlicensed}
\copyrightyear{2026} 
\acmYear{2026} 

\acmDOI{XXXXXXX.XXXXXXX}

\acmConference[CCS '26]{Proceedings of the 2026 ACM SIGSAC Conference on Computer and Communications Security}{November 15-19, 2026}{The Hague, The Netherlands.}
\acmISBN{978-1-4503-XXXX-X/2018/06}

\usepackage{hyperref}
\usepackage{xspace}

\usepackage{graphicx}
\usepackage{wrapfig}
\usepackage{float}
\usepackage{afterpage}
\usepackage{placeins}

\usepackage[compatibility=false]{caption}
\usepackage{subcaption}
\usepackage{booktabs}
\usepackage{multirow}
\usepackage{multicol}
\usepackage{makecell}
\usepackage{caption}
\usepackage{enumerate} 
\usepackage{stfloats}

\usepackage{amsthm}
\usepackage{amsmath}
\usepackage{glossaries}
\usepackage{bbm}
\usepackage{bm}

\usepackage{algorithm}
\usepackage{algorithmic}

\usepackage[T1]{fontenc}
\usepackage{xcolor}
\usepackage{color}
\usepackage{comment}
\usepackage{bbding}
\usepackage{pifont}
\usepackage[utf8]{inputenc}
\usepackage{soul}       
\usepackage{xcolor}     

\sethlcolor{yellow}

\input{math_commands.tex}

\newtheorem{proposition}{Proposition}

\newtheorem{definition}{Definition}
\newtheorem{Assumption}{Assumption}

\def\eg{\emph{e.g.,}\xspace}

\def\ie{\emph{i.e.,}\xspace}
\def\etal{\emph{et al.}\xspace}

\begin{document}

\title{Rethinking Backdoor Adversarial Unlearning through the Lens of Catastrophic Forgetting in Continual Learning}


\author{Zhenqian Zhu}
\email{23b3510101@stu.hit.edu.cn}
\affiliation{
  \institution{Harbin Institute of Technology, Shenzhen}
  \city{Shenzhen}
  \state{Guangdong}
  \country{China}
}

\author{Yamin Hu}
\email{huyamin@hit.edu.cn}
\affiliation{%
  \institution{Harbin Institute of Technology, Shenzhen}
  \city{Shenzhen}
  \state{Guangdong}
  \country{China}
}

\author{Yujiang Liu}
\email{23s151125@stu.hit.edu.cn}
\affiliation{%
  \institution{Harbin Institute of Technology, Shenzhen}
  \city{Shenzhen}
  \state{Guangdong}
  \country{China}
}

\author{Luping Wei}
\email{2416752344@qq.com}
\affiliation{%
 \institution{Harbin Institute of Technology, Shenzhen}
 \city{Shenzhen}
 \state{Guangdong}
 \country{China}
}

\author{Wenbo Hou}
\email{24b951083@stu.hit.edu.cn}
\affiliation{%
  \institution{Harbin Institute of Technology, Shenzhen}
  \city{Shenzhen}
  \state{Guangdong}
  \country{China}
}

\author{Bin Li}
\email{libin@szu.edu.cn}
\affiliation{%
  \institution{Shenzhen Key Laboratory of Media Security, Shenzhen University}
  \city{Shenzhen}
  \state{Guangdong}
  \country{China}
}

\author{Haodong Li}
\email{lihaodong@szu.edu.cn}
\affiliation{%
  \institution{Shenzhen Key Laboratory of Media Security, Shenzhen University}
  \city{Shenzhen}
  \state{Guangdong}
  \country{China}
}

\author{Wenjian Luo}
\email{luowenjian@hit.edu.cn}
\affiliation{%
  \institution{Harbin Institute of Technology, Shenzhen}
  \city{Shenzhen}
  \state{Guangdong}
  \country{China}
}


\begin{abstract}
Existing studies reveal that current backdoor defenses exhibit limited robustness and often fail against specific types of attacks. More concerningly, 
prevailing safety tuning strategies tend to provide only superficial safety protection, as they fall short of completely eliminating the backdoor effects.
In this work, we present a novel formulation of backdoor learning and unlearning as a sequential, three-stage process from a continual learning perspective.
Within this framework, we formally define complete backdoor unlearning and further derive the necessary conditions for achieving it based on the mechanism of 
catastrophic forgetting. Guided by these insights, we propose Blind Inversion-Backdoor Adversarial Unlearning (BI-BAU), which formulates the generation of 
adversarial examples satisfying the unlearning conditions as a blind inversion problem. We solve this by integrating the bi-level optimization process of 
adversarial training into an Expectation-Maximization (EM) algorithm framework to optimize the maximum a posteriori (MAP) objective. Furthermore, BI-BAU is extended 
to untargeted adversarial scenarios with unknown target classes, as well as to multi-modal contrastive learning tasks, enhancing its applicability to real-world 
deployment scenarios where pre-trained models may be compromised. Extensive experiments demonstrate that our method exhibits general applicability across a wide 
spectrum of backdoor attacks, and can effectively and thoroughly eliminate the backdoor effects from a backdoor model.
\end{abstract}


\begin{CCSXML}
<ccs2012>
   <concept>
       <concept_id>10002978</concept_id>
       <concept_desc>Security and privacy</concept_desc>
       <concept_significance>500</concept_significance>
       </concept>
 </ccs2012>
\end{CCSXML}

\ccsdesc[500]{Security and privacy}


\keywords{Backdoor Attack and Defense, Continual Learning, AI Security}


\maketitle

\input{section/Introduction}
\input{section/Related_Work}

\input{section/Preliminary}
\input{section/Methodology}

\input{section/Experiments}

\input{section/Conclusion}

\input{section/Acknowledgment}

\bibliographystyle{ACM-Reference-Format}
\bibliography{Reference}


\input{section/Supplementary}
\end{document}

%% file: math_commands.tex
\def\vtheta{{\bm{\theta}}}

\DeclareMathAlphabet{\mathsfit}{\encodingdefault}{\sfdefault}{m}{sl}
\SetMathAlphabet{\mathsfit}{bold}{\encodingdefault}{\sfdefault}{bx}{n}

\def\x{\bm{x}}

%% file: section/Introduction.tex
\section{Introduction}

Extensive research~\cite{yin2024physical, li2022backdoor,gu2019badnets} has demonstrated that deep neural networks (DNNs) are vulnerable to backdoor attacks,
where an adversary could implant a hidden `shortcut'~\cite{geirhos2020shortcut,yu2022availability,huang2022backdoor}  that links a predefined trigger to 
adversary-specified malicious model behavior through data poisoning. As a result, the compromised model behaves normally on clean inputs but produces attacker-desired 
outputs whenever the trigger is present. To mitigate such threats and enable the secure deployment of models, a variety of safety tuning strategies have been proposed, 
which can be broadly categorized into three categories, including trigger synthesis-based methods~\cite{wang2019neural,qiao2019defending,xu2024towards}, 
model reconstruction-based methods~\cite{li2021neural,zheng2022pre,wu2021adversarial,liu2018fine}, and backdoor adversarial unlearning methods~\cite{gao2023effectiveness,zeng2021adversarial,wei2023shared}.

\begin{figure}[!h]
	\graphicspath{{figures/re_activation/}}
	\centering
	\begin{subfigure}[b]{0.23\textwidth}
		\centering
		\includegraphics[width=\linewidth]{BadNets.pdf}
		\caption{BadNets}
	\end{subfigure}
	\begin{subfigure}[b]{0.23\textwidth}
		\centering
		\includegraphics[width=\linewidth]{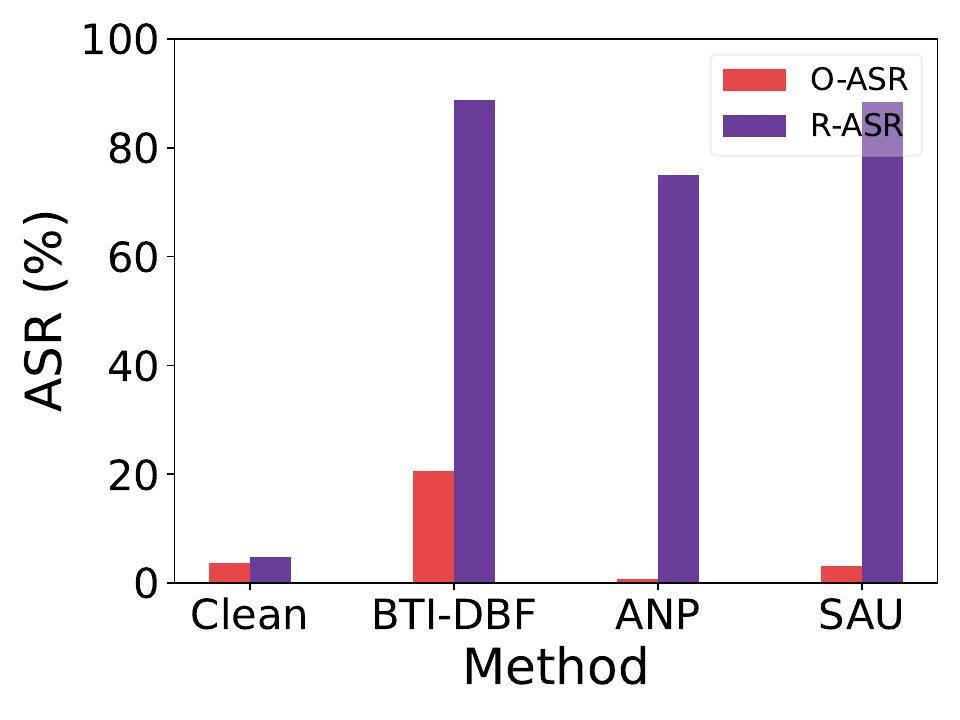}
		\caption{Refool}
	\end{subfigure}
	\begin{subfigure}[b]{0.23\textwidth}
		\centering
		\includegraphics[width=\linewidth]{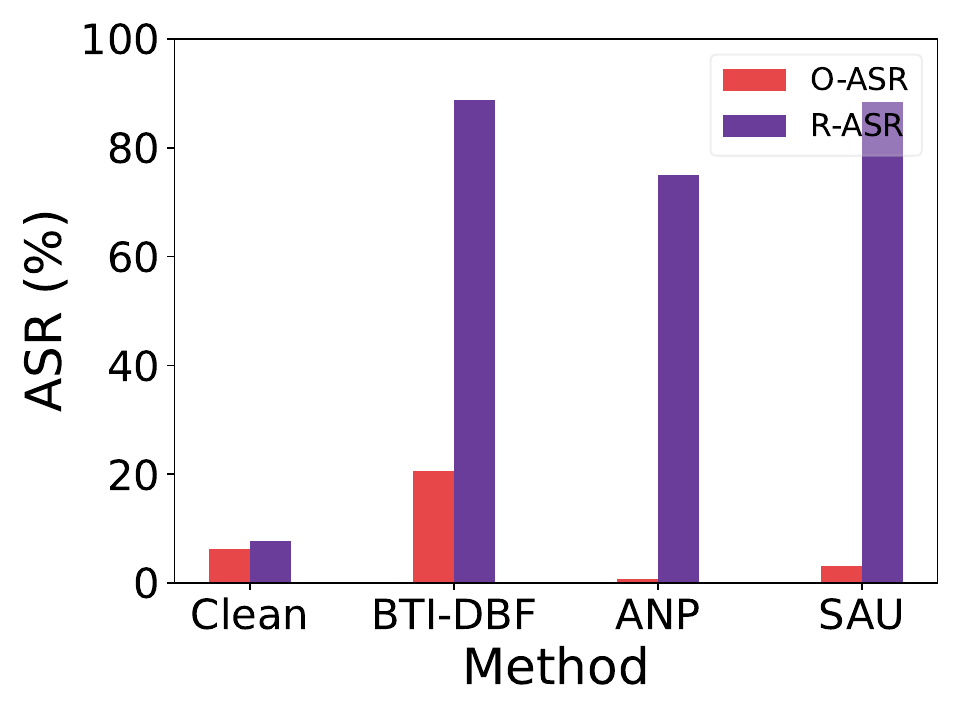}
		\caption{IAD}
	\end{subfigure}
  \begin{subfigure}[b]{0.23\textwidth}
		\centering
		\includegraphics[width=\linewidth]{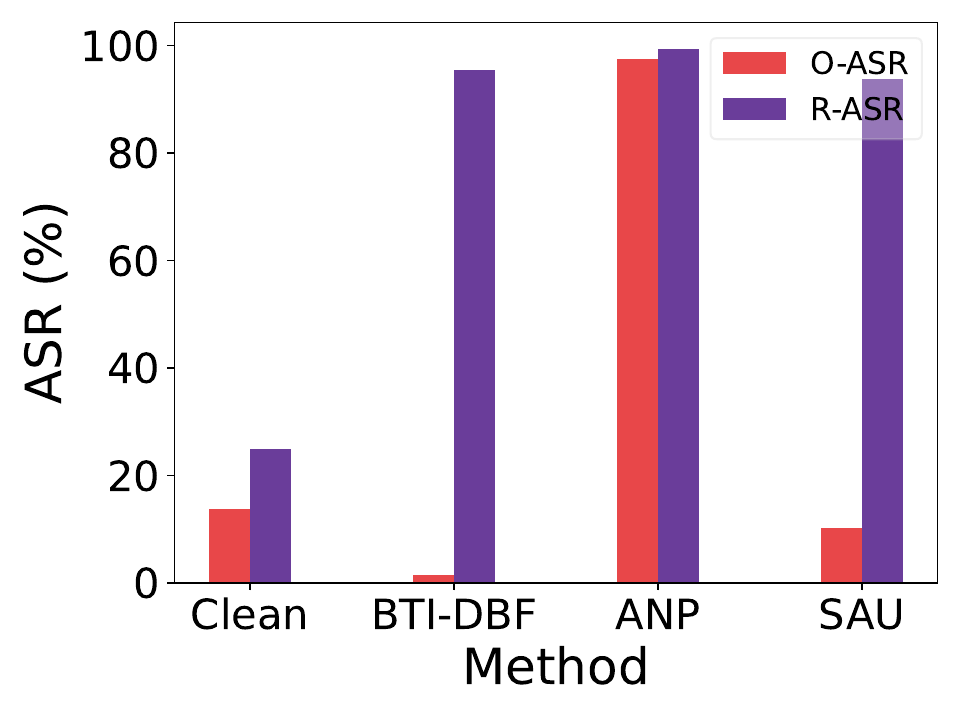}
		\caption{Narcissus}
	\end{subfigure}
	\caption{\small Robustness performance of state-of-the-art safety tuning strategies, including BTI-DBF~\cite{xu2024towards}, ANP~\cite{wu2021adversarial} 
	and SAU~\cite{wei2023shared} under various attack settings. The O-ASR metric denotes the attack success rate (ASR) under the original defense methods, 
	while the R-ASR indicates the ASR after applying the Retuning Attack. }
	\vspace{-3mm}
	\label{fig:retuning_attack}
\end{figure}

Despite the promising progress in safety tuning strategies, prior studies~\cite{qi2023revisiting, zhang2024exploring} show that their robustness remains limited,
particularly against adaptive attacks or those characterized by low orthogonality or low linearity. More alarmingly, growing evidence~\cite{zhu2024breaking,min2024uncovering} 
suggests that these defenses may merely provide a false sense of security: the purified models still retain residual backdoor features that can be readily reactivated.
As illustrated by the Retuning Attack proposed by~\citet{min2024uncovering}, the backdoor can be effectively reactivated by fine-tuning a purified model 
for a few epochs using a dataset that contains only a small number of poisoned samples. As shown in Figure~\ref{fig:retuning_attack}, although state-of-the-art 
safety tuning methods achieve low attack success rates (ASR) against diverse backdoor attacks, they remain notably vulnerable to the Retuning Attack (RA). 
When exposed to RA, the backdoor can be rapidly reactivated, causing the R-ASR (\ie the ASR after applying RA) to surge above 80\% in most cases, whereas clean models
exhibit strong resilience under the same attack settings.  These observations indicate that existing safety tuning techniques fail to fundamentally remove embedded 
backdoor features. Query-based Reactivation Attacks (QRA)~\cite{min2024uncovering} further confirm that even in a black-box setting, residual backdoors can be 
efficiently reactivated.


Most existing defense strategies focus on output-level robustness, addressing backdoor effects solely through prediction outcomes. Such approaches lack 
a principled understanding of how backdoor knowledge is encoded in model parameters, leaving residual backdoor effects that make purified models vulnerable to 
reactivation attacks. Although model reconstruction-based methods~\cite{li2021neural,zheng2022pre,wu2021adversarial,liu2018fine} attempt to directly identify 
and remove backdoor-related parameters, they struggle to balance backdoor mitigation and model performance, particularly when clean and backdoor features are 
entangled, as in attacks with low orthogonality or low linearity~\cite{zhang2024exploring}.

In this work, we conceptualize backdoor effects as a form of task-specific memory embedded in model parameters, aiming to establish a theoretical framework 
for deep backdoor memory removal while preserving clean task knowledge, which guides the design of backdoor unlearning methods. Continual learning (CL)~\cite{thrun1995lifelong}, 
which studies memory retention and forgetting across sequential tasks, provides a natural theoretical foundation. It allows us to systematically analyze the dynamics 
of backdoor forgetting through unlearning and derive foundational design principles. Compared with direct unlearning approaches, the forgetting perspective in 
Continual learning avoids directly operating in the parameter space and instead provides a principled framework for selectively forgetting backdoor-specific memory 
while preserving clean task knowledge through task alignment.

Continual learning (CL)~\cite{thrun1995lifelong,chen2022lifelong,wang2024comprehensive} is a setting in which an agent must learn from an incoming stream of data 
throughout its lifetime. A major challenge is Catastrophic Forgetting (CF)~\cite{kirkpatrick2017overcoming,bennani2020generalisation,doan2021theoretical}, 
where learning a new task could abruptly degrade performance on previously learned tasks. Prior work~\cite{nguyen2019toward} shows that forgetting is closely 
tied to task similarity: higher alignment between tasks results in more severe CF. \citet{doan2021theoretical} futher provide a general definition of CF and 
introduce the Neural Tangent Kernel (NTK) overlap matrix as a measure of task similarity. Recently, Zhang~\etal~\cite{zhang2024exploring} first formulate 
backdoor learning as a continual learning problem, with two independent phases: \ding{172} an initial rapid learning phase in which the backdoor task is 
acquired within just a few training epochs, followed by \ding{173} a subsequent phase of gradual adaptation to the clean task, which provides a novel perspective 
for understanding backdoor attacks.

Inspired by these studies, we build upon adversarial training and rethink backdoor adversarial unlearning through the lens of continual learning (CL).
Our goal is to establish a formal definition of backdoor unlearning based on the catastrophic forgetting (CF) mechanism\cite{bennani2020generalisation,doan2021theoretical} 
and investigate its core challenges, particularly the fundamental principles that govern effective backdoor removal.

\emph{What conditions must an unlearning task satisfy to completely eliminate backdoor effects while preserving performance on clean tasks?}

In this paper, we address the above problem through the lens of catastrophic forgetting in continual learning. First, we uniformly consider the processes of 
backdoor learning and backdoor unlearning as a stream of three-stage learning tasks $\mathcal{T}=\{\tau_c, \tau_b, \tau_u\}$, where $\mathcal{T}$ consists of 
a sequential order of clean task $\tau_c$, backdoor task $\tau_b$, and backdoor unlearning task $\tau_u$. Then, we provide a formal definition of complete backdoor unlearning,
in which the backdoor effects are completely removed after the unlearning operation, and the resulting purified model achieves performance on the clean task 
that is comparable to that of the clean model. Furthermore, we introduce a necessary condition that the unlearning task $\tau_u$ must satisfy to approximately 
achieve complete backdoor unlearning: the unlearning task $\tau_u$ should be aligned with the backdoor task $\tau_b$ while remaining orthogonal to the clean task $\tau_c$. 
Specifically, for the problem of backdoor adversarial unlearning, we propose a fundamental principle for crafting adversarial examples during adversarial training,
\ie the adversarial perturbation should closely align with the backdoor features while preserving the primary semantic content of the original sample.

Subsequently, we formally cast backdoor adversarial unlearning as a blind inversion problem with a maximum a posteriori (MAP) objective. Furthermore, we integrate 
a bi-level formulation of adversarial training within the Expectation-Maximization (EM) framework to effectively optimize the MAP$_{\theta}$ objective. 
Based on the above analysis, we propose a robust backdoor adversarial unlearning method, termed Blind Inversion-Backdoor Adversarial Unlearning (BI-BAU). 
We further extend BI-BAU to untargeted adversarial scenarios with unknown target classes $y_t$, as well as to multi-modal contrastive learning settings, thereby 
broadening its applicability in real-world scenarios. Extensive experiments demonstrate the universal effectiveness of BI-BAU against a wide range of backdoor attacks,
including those characterized by low orthogonality or low linearity. Moreover, BI-BAU exhibits strong post-purification robustness, maintaining a low ASR even 
under backdoor re-activation attacks.

In conclusion, our main contributions are three-fold:
\begin{itemize}
    \item \textbf{Theoretical foundation for backdoor unlearning.} To the best of our knowledge, we are the first to systematically investigate the fundamental 
	challenges of backdoor unlearning from the perspective of continual learning. By leveraging the catastrophic forgetting mechanism, we provide 
	a formal definition of complete backdoor unlearning and derive the necessary condition that an unlearning task must satisfy to achieve thorough removal 
	of backdoor effects.
	
    \item \textbf{Blind Inversion-Backdoor Adversarial Unlearning (BI-BAU).} Building on the theoretical insights, we propose BI-BAU, which formalizes 
	backdoor adversarial unlearning as a blind inversion problem with a $\text{MAP}_{\theta}$ optimization objective. We solve this problem through a bi-level 
	adversarial training process integrated within the Expectation-Maximization (EM) framework. BI-BAU is further extended to untargeted attacks with unknown 
	target classes and to multi-modal contrastive learning scenarios, enhancing its applicability to realistic deployment settings.
	
    \item \textbf{Comprehensive robustness evaluation.} We conduct a comprehensive evaluation of existing backdoor attacks, demonstrating that our method is 
	broadly effective across a wide spectrum of attacks, including more challenging cases characterized by low orthogonality or low linearity, 
	while achieving near-complete removal of backdoor effects while maintaining high performance on clean tasks.
	
\end{itemize}

%% file: section/Related_Work.tex
\section{Related Work}

\subsection{Continual Learning}

Continual learning has been introduced to meet the needs of intelligent systems to incrementally acquire, update, accumulate, and exploit knowledge throughout 
their lifetime, enabling them to adapt to dynamic environments~\cite{wang2024comprehensive}. Continual learning is characterized by learning from dynamic data 
distributions, where the model receives a stream of learning tasks with training samples drawn from different distributions.
Despite significant advancements in the field, continual learning remains explicitly constrained by catastrophic forgetting~\cite{mccloskey1989catastrophic}, 
wherein learning a new task often leads to a severe degradation in performance on previously learned tasks. 
In the context of continual learning, existing research on catastrophic forgetting~\cite{kirkpatrick2017overcoming,doan2021theoretical} primarily focuses on mitigating 
or avoiding its detrimental effects, such as the substantial reduction in the model's ability to retain previous distributions  when adapting to new ones.

However, in contrast to studies~\cite{kirkpatrick2017overcoming,doan2021theoretical} that seek to overcome forgetting, we shift our perspective to investigate 
its potential benefits. We aim to incorporate the problem of backdoor unlearning into the continual learning framework, so that we can leverage it as a theoretical 
tool to analyze how backdoors can be effectively erased while preserving the model's performance on clean tasks through the lens of catastrophic forgetting.

\subsection{Backdoor Attacks}

Gu~\etal~\cite{gu2019badnets} pioneered backdoor attacks in deep learning with BadNets, initiating extensive research in this area. Early attacks were largely 
sample-agnostic, using uniform triggers easily detected by synthesis-based defenses~\cite{wang2019neural,qiao2019defending}. This prompted the emergence of 
sample-specific backdoor attacks~\cite{li2021invisible,nguyen2020input,salem2022dynamic} with varying triggers. To enhance stealthiness, various invisibility strategies~\cite{nguyen2021wanet,liu2020reflection}
have been proposed. However, dirty-label attacks~\cite{gu2019badnets,chen2017targeted,nguyen2020input} remain detectable via label-image inconsistencies, clean-label attacks~\cite{turner2019label,zeng2023narcissus}
circumvent this by aligning poisoned samples’ labels with target classes. In addition to the various forms of backdoor attacks, research has also delved into exploring
their underlying mechanisms. Existing work~\cite{geirhos2020shortcut,huang2022backdoor} suggests that the effectiveness of backdoor attacks largely stems from 
shortcut learning, which induces a \emph{latent separability} phenomenon: feature representations of poisoned samples tend to form distinct clusters that are 
well separated from those of clean samples. This observation provides a key theoretical foundation upon which many existing backdoor defense methods are built.

However, Qi~\etal\cite{qi2023revisiting} challenge this assumption by proposing two adaptive attacks: Adaptive-Blend and Adaptive-Patch, as counter-examples.
Moreover, Zhang~\etal~\cite{zhang2024exploring} formulate backdoor attacks as a continual learning task and introduce two key properties of backdoor attacks: 
orthogonality and linearity. They conduct a systematic study on existing mainstream backdoor attacks and defenses and show that existing defenses are particularly
vulnerable to attacks with low orthogonality or low linearity. Collectively, these findings indicate that existing defenses offer only limited robustness.

\subsection{Backdoor Defenses}

Safety tuning strategies have been proposed to remove backdoor effects from a pre-deployed backdoor model, including trigger synthesis-based methods, 
model reconstruction-based methods, and backdoor adversarial unlearning methods. Trigger synthesis-based methods~\cite{wang2019neural,qiao2019defending,xu2024towards} 
reconstruct triggers via model inversion techniques and then unlearn the hidden backdoor, whereas model reconstruction-based approaches~\cite{li2021neural, zheng2022pre,wu2021adversarial,liu2018fine} 
prune backdoor-related neurons exhibiting abnormal sensitivities to poisoned samples and recover clean accuracy using a small clean set.
Backdoor adversarial unlearning~\cite{weng2020trade,gao2023effectiveness,zeng2021adversarial, wei2023shared} leverages adversarial training (AT) to enhance robustness,
though naive AT often incurs accuracy degradation due to training instability~\cite{weng2020trade,gao2023effectiveness}. Most existing work~\cite{zeng2021adversarial, wei2023shared} 
largely prioritize improving model performance on clean tasks, while offering only limited investigation into the core challenges of backdoor defense.

Despite the promising progress made by existing defenses, recent studies~\cite{zhu2024breaking,min2024uncovering} have revealed that current safety tuning techniques 
merely provide a false sense of security. Residual backdoor features remain in so-called purified models and can be readily reactivated by adversaries through 
sample-specific perturbations generated solely via model queries. Although the Path-Aware Minimization (PAM) method~\cite{min2024uncovering} achieves promising 
results in deep backdoor removal, its reliance on BTI-DBF~\cite{xu2024towards} for generating backdoor samples inherently limits its effectiveness against attacks 
characterized by low orthogonality or low linearity, due to BTI-DBF's constrained trigger reverse-engineering capability. Consequently, genuinely removing backdoors 
remains a critical and unresolved challenge in practice.



%% file: section/Preliminary.tex
\section{Preliminary}

\subsection{Notation}
\label{section:notation}

Let $\mathcal{D} \subseteq \mathbb R^{n_0} \times \mathbb R^{K}$ denote the training set, with $\mathcal{X}=\left\{x: (x,y)\in \mathcal{D}\right\}$ and $\mathcal{Y}=
\left\{y: (x,y)\in \mathcal{D}\right\}$ representing the input and label sets, respectively. In the context of backdoor attacks,  we consider a $K$-class ($K\geq 2$) 
classification problem. The model $f$ is implanted with a backdoor by training on a corrupted dataset $D$, which consists of both a clean subset $D_{c}$ and 
a poisoned backdoor subset $D_{p}$, \ie $D = D_{c} \cup D_{p}$. The backdoor model misclassifies any input $x$ carrying the backdoor trigger as the target label $y_t$.

For clarity and without loss of generality, we consider a fully-connected feed-forward network $f= FC \circ f_{L} \circ f_{L-1} \circ ...\circ f_{1}$ with $L$ hidden layers,
where the \textit{l}-th layer is denoted by $f_{l}$ for $l = 1, ..., L$, and FC denotes the fully-connected layers. Overall, the architecture of the model $f$ can be viewed as
consisting of a feature extractor $f_{\text{backbone}} = f_L \circ f_{L-1} \circ \dots \circ f_1 $ (i.e., the Backbone Network) followed by a fully connected (FC) classification head.
For an input $x\in\mathbb R^{n_0}$, let $h^l(x)$ and $z^l(x)\in\mathbb R^{n_l}$ denote the pre-activation and post-activation representations at layer $l$, respectively. 
In particular, we define the output of the feature extractor as $z(x) = f_{\text{backbone}}(x)$. The recurrence relation for a feed-forward network is defined as 
\begin{align}
    \label{eq:recurrence}
    \begin{cases}
        h^{l+1} =z^{l}w^{l+1} + b^{l+1}\\
        z^{l+1} =\phi\left(h^{l+1}\right) 
    \end{cases}
\end{align}
where $\phi$ is a pointwise activation function, and $w^{l+1}\in \mathbb R^{n_l\times n_{l+1}}$ and $b^{l+1}\in\mathbb R^{n_{l+1}}$ represent the weight matrix 
and bias vector, respectively.  

In continual learning (CL), we encounter a sequence of supervised learning tasks $\mathcal{T}_{\tau}$, where $\tau \in [T]$ serves as the task identifier, 
with $T \in \mathbb{N}^{*}$. Here, $X^{\tau}$ = $\{(x^{\tau}_{j}, y^{\tau}_{j})\}_{j=1}^{n_{\tau}}$ represents the dataset associated with task $\mathcal{T}_{\tau}$.
The objective is to learn a predictor $f_{\theta}: \mathcal{X} \times \mathcal{T} \to \mathcal{Y}$ parameterized by $\theta$ that achieves the prediction accuracy as high as possible.
In the CL setting, access to data from previous tasks is restricted unless explicitly stored in a memory buffer \cite{lopez2017gradient,parisi2019continual}.  
We denote the Euclidean norm of a vector or the spectral norm of a matrix by $\|\cdot\|_2$ and use $\left \langle \cdot,\cdot \right \rangle$ to represent the Euclidean dot product. 
The task index is denoted by $\tau$, and learnable parameters by $\theta$, where $\theta_{\tau}$ refers to the parameters during training on task $\tau$. 
Moreover, the symbol $*$ represents the state of a variable at the end of a task, so $\theta_{\tau}^{*}$ represents the learned parameters upon completing task $\tau$.
A comprehensive list of all notations in Appendix~\ref{appendix:summary_of_symbols}.

\subsection{Threat Model}
\label{section:threat_model}

To formally study backdoor adversarial unlearning, we define the capabilities of the adversary and defender in a realistic deployment scenario.

\textbf{Adversary}. We consider a realistic deployment scenario in which practitioners primarily obtain pre-trained models from third-party platforms (\eg HuggingFace~\cite{jain2022hugging}) 
to reduce training costs and accelerate downstream development. Despite their strong performance and practical convenience, such resource models may be compromised 
during pre-training and thus contain latent backdoors. To ensure security and reliability, these models are typically subjected to safety tuning or purification 
before being fine-tuned for downstream tasks or deployed in real-world systems.

In this work, we focus on post-purification security, motivated by recent evidence~\cite{zhu2024breaking,min2024uncovering} that existing safety tuning strategies 
are insufficient to fundamentally eliminate backdoor behaviors. This allows these compromised models to be fine-tuned for downstream tasks while the backdoors persist.
We consider an adversary aiming to reactivate these residual backdoors following the defender's application of safety tuning. Importantly, the adversary has no 
access to the training data, purification procedure, or model parameters. Instead, the attacker operates in a query-only black-box setting, which closely reflects 
practical deployment scenarios. Specifically, the adversary can only submit inputs to the post-purified model and observe the corresponding outputs. 
Under this threat model, the attacker can launch Query-based Reactivation Attacks (QRA)~\cite{min2024uncovering}, which exploit model queries to generate 
sample-specific perturbations that trigger residual backdoor behaviors.

\textbf{Defender}. Consistent with existing safety tuning approaches~\cite{zeng2021adversarial,wei2023shared}, we assume the defender has access to a small clean 
dataset $\mathcal{D}_{cl} = \left\{(x_i, y_i) \right\}_{i=1}^{N}$ for post-hoc fine-tuning. We further define $\mathcal{D}_{-y_t} = \left\{(x_i, y_i) | (x_i, y_i) \in \mathcal{D}_{cl}, y_i \ne y_t \right\}$
as the subset of these non-target samples. It is important to note that $\mathcal{D}_{cl}$ is limited, and the defender is unable to train a new model from scratch
based on $\mathcal{D}_{cl}$.

The defender's objective is to eliminate the backdoor effects embedded in the suspicious model while preserving its performance on clean tasks. 
To this end, we formulate backdoor adversarial unlearning as the following optimization problem:
\begin{equation}
    \label{eq:backdoor_adversarial_unlearning}
    \min_{\theta} \mathcal{R}_{cl} + \mathcal{R}_{adv}
\end{equation}
where 
\begin{equation}
\label{eq:items_for_backdoor_adversarial_unlearning}
\left\{\begin{matrix}
    \underset{\theta}{\min}\;\mathcal{R}_{cl} = \underset{\theta}{\min}\;\mathbb{E}_{(x,y)\sim\mathcal{D}_{cl}} (\ell(f(x;\theta),y)),\\
    \underset{\theta}{\min}\;\mathcal{R}_{adv} = \underset{\theta}{\min}\;\underset{\delta \in S}{\max}\;\mathbb{E}_{(x,y)\sim\mathcal{D}_{-y_t}} (\ell_{adv}(f(\tilde{x};\theta),y)).
\end{matrix}\right.
\end{equation}
The first term, $\mathcal{R}_{cl}$ in Eq.~\ref{eq:backdoor_adversarial_unlearning}, represents the classification risk of the model on the clean dataset,
where $\theta$ denotes the model parameters, and the second term, $\mathcal{R}_{adv}$, refers to the adversarial risk. Adversarial training (AT) formulates a minimax 
optimization problem to reduce this risk, where $\tilde{x} = x + \delta $ is the adversarial example and the perturbation set $S \subseteq \mathbb{R}^{n_0}$ 
characterizes the adversary's manipulative capability.

\subsection{NTK framework for Continual Learning}
\citet{lee2019wide} demonstrate that, under the NTK regime, neural networks evolve as a linear model:
\begin{equation}
    \label{eq:NTK_for_CL}
    f_{\tau}^{*}(x)=f_{\tau-1}^{*}(x)+\langle \nabla_{\theta}f_{0}(x), \theta_{\tau}^{*} - \theta_{\tau-1}^{*} \rangle
\end{equation}
with $\theta_{\tau}^{*}$ being the final weight after training on task $\tau$. The latter formulation implies the feature maps $\nabla_{\theta}f_{0}(x) \in \mathbb{R}^{1 \times p}$ 
are constant over time. Building on this framework, \citet{bennani2020generalisation} showed that continual learning (CL) dynamics can be characterized as 
a recursive kernel regression process. As a result, the time evolution of the linearized neural network can be analytically derived without running gradient descent. 


\subsection{Catastrophic Forgetting Under the NTK Regime}
In line with the theoretical framework established by \citet{doan2021theoretical}, we adopt their definitions of Catastrophic Forgetting under the Neural Tangent Kernel (NTK) regime, 
which serve as a basis for our analysis.
\begin{definition} [\textbf{Drift}] 
    \label{definition:drift}
    Let $\tau_S$ (respectively $\tau_T$) be the source task (respectively target task), $\mathcal{D}_{\tau_S}$ the source test set, the CF of task $\tau_S$ after training on all the subsequent tasks up to the target task $\tau_T$ is defined as:
    \begin{align}
        \delta^{\tau_S \rightarrow \tau_T}(X^{\tau_S})= \Bigl(f^{\star}_{\tau_T}(x)-f^{\star}_{\tau_S}(x)\Bigr)_{(x,y)\in \mathcal{D}_{\tau_S}}.
    \end{align}
\end{definition}
Note that $\delta^{\tau_S \rightarrow \tau_T}(X^{\tau_S})$ represents prediction changes for inputs from the task $\tau_S$ after learning task $\tau_T$.
In the classification setting, we consider the K-dimensional output of $f_{\tau}^{\star}$. To quantify the overall extent of forgetting on task $\tau_S$,
we use the squared norm of this change vector.

\begin{definition} [\textbf{Catastrophic Forgetting}] \
\label{definition:catastrophic_forgetting}
Let $\tau_S$ (respectively $\tau_T$) be the source task (respectively target task), $\mathcal{D}_{\tau_S}$ the source test set, the Catastrophic Forgetting of a 
source task $\tau_S$ with respect to a target task $\tau_T$ is given by:
\begin{align}
    \label{eq:CF}
    \Delta^{\tau_S \rightarrow \tau_T}(X^{\tau_S}) & = \|\delta^{\tau_S \rightarrow \tau_T}(X^{\tau_S})\|_2^2 \nonumber \\
    & = \sum_{(x,y)\in \mathcal{D}_{\tau_S}} (f^{\star}_{\tau_T}(x)-f^{\star}_{\tau_S}(x))^{2} .
\end{align}
Let $ \{ \theta_{\tau}^{\star}, \forall \tau \in [T] \}$ be the weight at the end of the training of task $\tau$, 
based on Eq.~\ref{eq:NTK_for_CL}, the catastrophic forgetting~\cite{lee2019wide} is further defined as follows:
\begin{align}
    \Delta^{\tau_S \rightarrow \tau_T}(X^{\tau_S}) & = \|\delta^{\tau_S \rightarrow \tau_T}(X^{\tau_S})\|_2^2 \nonumber \\
    & = \left \|\nabla_{\theta}f_{0}(X^{\tau_S})(\theta_{\tau_T}^{*} - \theta_{\tau_S}^{*})\right \|_{2}^{2}.
\end{align}
\end{definition}
This linear formulation under the NTK regime provides clear insights into the dynamics of catastrophic forgetting.
This framework allows us to quantify how backdoor features can be forgotten or preserved across tasks, which is critical for designing effective backdoor 
unlearning strategies.

%% file: section/Methodology.tex
\section{Methodology}

In Section~\ref{section:problem_formulation}, we first unify backdoor learning and subsequent unlearning within a continual learning framework, introduce 
a formal definition of complete backdoor unlearning, and further derive the key condition for achieving it.
Building on this, Section~\ref{section:blind_inversion} casts the problem as a blind inversion problem and proposes a solution based on maximum a posteriori estimation 
using the Expectation-Maximization (EM) algorithm. Finally, in section~\ref{section:proposed_method}, we propose Blind Inversion-Backdoor Adversarial Unlearning (BI-BAU) 
and extend it to the more general settings, including untargeted adversarial training and multimodal contrastive learning scenarios.

\subsection{Problem Formulation}
\label{section:problem_formulation}

Prior study~\cite{li2021anti} views backdoor learning as a composition of a clean task $\tau_c$ and a backdoor task $\tau_b$. \citet{zhang2024exploring} further 
characterize this process as a two-stage task flow: $\mathcal{T}=\{\tau_b, \tau_c\}$, consisting of an initial rapid learning phase of the backdoor task $\tau_b$ 
within a few training epochs followed by a subsequent phase of gradually learning over the clean task $\tau_c$. They also observe that the gradients from the two tasks 
tend to be orthogonal, indicating an orthogonality between the backdoor and clean task. \citet{doan2021theoretical} introduce the concept of task similarity 
and demonstrate that orthogonal tasks can be considered independent, implying that they can be executed separately without interfering with each other.

Building on these insights, we define backdoor learning process as $\mathcal{T}=\{\tau_b, \tau_c\}$. By incorporating the subsequent unlearning task $\tau_u$, 
the overall process of backdoor learning and unlearning can be expressed as $\mathcal{T} = \left\{\tau_b, \tau_c, \tau_u \right\}$. Given the orthogonality 
between tasks $\tau_b$ and $\tau_c$, we can conceptually reorder them for analysis, leading to the following assumption. 
\begin{Assumption}
    \label{assumption:backdoor_unlearning_as_continual_learning}
    \textbf{(Backdoor Learning and Unlearning as Continual Learning).} 
    The processes of backdoor learning and the subsequent backdoor unlearning can be formalized as a continual learning framework 
    consisting of a three-stage supervised learning task flow: $\mathcal{T}=\{\tau_c, \tau_b, \tau_u\}$, which includes a clean task $\tau_c$ on the clean dataset $X^{\tau_c}$, 
    a backdoor task $\tau_b$ on the poisoned dataset $X^{\tau_b}$, and a backdoor unlearning task $\tau_u$ on the unlearning dataset $X^{\tau_u}$.   
\end{Assumption}
We provide experimental validation supporting the plausibility of Assumption~\ref{assumption:backdoor_unlearning_as_continual_learning} in Appendix~\ref{subsection: validating_assumption_1}.

Building on Assumption~\ref{assumption:backdoor_unlearning_as_continual_learning}, we formalize backdoor learning and unlearning as a three-stage continual learning 
process, which enables us to characterize the  backdoor unlearning objective via catastrophic forgetting. Recalling our optimization objective in Eq.~\ref{eq:backdoor_adversarial_unlearning}: 
``Effectively and completely eliminate backdoor effects while preserving the model's performance on clean tasks as much as possible''. Accordingly, we define 
Complete Backdoor Unlearning to closely approximate this objective.
\begin{definition}
    \label{definition:complete_backdoor_unlearning}
    \textbf{(Complete Backdoor Unlearning).} Let the clean model $f_{\tau_c}^{*}(.;\theta_{\tau_c}^{*})$ is trained solely on the clean task $\tau_{c}$, and let 
    $f_{\tau_b}^{*}(.;\theta_{\tau_b}^{*})$ denote the backdoored model. Based on Assumption~\ref{assumption:backdoor_unlearning_as_continual_learning}, the purified model 
    $f_{\tau_u}^{*}(.;\theta_{\tau_u}^{*})$ can be viewed as the outcome of a continual learning process that starts from the source clean task $\tau_c$, proceeds
    the intermediate backdoor task $\tau_{b}$, and ultimately reaches the target unlearning task $\tau_u$. We say that task $\tau_u$ achieves complete backdoor unlearning,
    if the following conditions hold:
    \begin{equation}
        \label{eq:unlearning_objective}
        \left\{\begin{matrix}
        \Delta^{\tau_c->\tau_u}(X^{\tau_c}) = 0,\\
        \Delta^{\tau_c->\tau_u}(X^{\tau_b}) = 0.\\
        \end{matrix}\right.
    \end{equation}
    or equivalently,
    \begin{equation}
        \label{eq:complete_backdoor_unlearning_01}
         \left\{\begin{matrix}
           f^{*}_{\tau_u}(X^{\tau_b}) = f^{*}_{\tau_c}(X^{\tau_b}) ,\\
           f^{*}_{\tau_u}(X^{\tau_c}) = f^{*}_{\tau_c}(X^{\tau_c}) .\\
        \end{matrix}\right.
    \end{equation}  
\end{definition}

A concise interpretation of Definition~\ref{definition:complete_backdoor_unlearning} is as follows. Eq.~\ref{eq:unlearning_objective} characterizes complete backdoor 
unlearning through its goals: task $\tau_u$ minimizes the forgetting of knowledge from the clean dataset $X^{\tau_c}$ while simultaneously eliminating the backdoor 
effect of the poisoned dataset $X^{\tau_b}$. Eq.~\ref{eq:complete_backdoor_unlearning_01} offers an equivalent functional view, requiring the purified model 
$f_{\tau_u}^{*}$ to match the clean model $f_{\tau_c}^{}$ on both clean and poisoned inputs.

Let us further analyze the underlying implications of Definition~\ref{definition:complete_backdoor_unlearning}. According to the characteristics of backdoor attacks, 
where ``the backdoor model behaves normally on clean datasets, but predicts the poisoned sample into the target label $y_t$'', we can conclude that for the backdoor 
model $f^{*}_{\tau_b}$, for $\forall x_c \in X^{\tau_c}$, we have $f^{*}_{\tau_b}(x_c) = f^{*}_{\tau_c}(x_c)$. However, for $\forall x_b \in X^{\tau_b}$, we have 
$f^{*}_{\tau_b}(x_b) = y_t$, which differs from the clean model's prediction $f^{*}_{\tau_c}(x_b)$, \ie $f^{*}_{\tau_b}(x_b) \ne f^{*}_{\tau_c}(x_b) $.
By incorporating Eq.~\ref{eq:complete_backdoor_unlearning_01}, we obtain
\begin{equation}
    \label{eq:complete_backdoor_unlearning_02}
     \left\{\begin{matrix}
       f^{*}_{\tau_u}(X^{\tau_b})-f^{*}_{\tau_b}(X^{\tau_b}) = -(f^{*}_{\tau_b}(X^{\tau_b})-f^{*}_{\tau_c}(X^{\tau_b})) \ne 0,\\
       f^{*}_{\tau_u}(X^{\tau_c})-f^{*}_{\tau_b}(X^{\tau_c}) = -(f^{*}_{\tau_b}(X^{\tau_c})-f^{*}_{\tau_c}(X^{\tau_c})) = 0.\\
    \end{matrix}\right.
\end{equation}
Eq.~\ref{eq:complete_backdoor_unlearning_02} directly reflects the objective in Eq.~\ref{eq:unlearning_objective}. Specifically, for the objective $\Delta^{\tau_c->\tau_u}(X^{\tau_b}) = 0$, 
the drift induced by the backdoor task $\tau_b$ is compensated by the subsequent unlearning task $\tau_u$. However, for the objective $\Delta^{\tau_c->\tau_u}(X^{\tau_c}) = 0$,
there is no drift at all, implying that the subsequent unlearning task $\tau_u$ does not interfere with the prior clean task $\tau_c$ in any way. 
As a result, the backdoor knowledge associated with $X^{\tau_b}$ is completely forgotten, while the clean knowledge from $X^{\tau_c}$ is preserved.

Subsequently, we further investigate the conditions that the unlearning task $\tau_u$ must fulfill to approximately achieve complete backdoor unlearning,
leading to the following proposition.
\begin{proposition}
    \label{proposition:proposition_1}
    In a continual learning setting under the NTK regime, consider a three-stage learning task flow: $\mathcal{T}=\{\tau_c, \tau_b, \tau_u\}$, which 
    consists of a clean task $\tau_c$ on the dataset $X^{\tau_c}$, a backdoor task $\tau_b$ on $X^{\tau_b}$, and a backdoor unlearning task $\tau_u$ on $X^{\tau_u}$. 
    To approximately achieve complete backdoor unlearning, the task $\tau_u$ should be antagonistic and parallel to the backdoor task $\tau_b$, while remaining 
    orthogonal to the clean task $\tau_c$. Formally, the following conditions should hold:
    \begin{equation}
        \label{eq:condition_2_for_complete_backdoor_unlearning}
         \left\{\begin{matrix}
            \nabla_{\theta}f_{0}(x_u,\theta) \parallel(\theta_{\tau_{b}}^{*} - \theta_{\tau_{c}}^{*}), \forall x_{u} \in X_{\tau_u}, \\
            \nabla_{\theta}f_{0}(x_u,\theta)  \perp (\theta_{\tau_{c}}^{*} - \theta_{0}),  \forall x_{u} \in X_{\tau_u},\\
        \end{matrix}\right.
    \end{equation}
where $\nabla_{\theta}f_{0}(x_u,\theta)$ denotes the gradient of the network with respect to the weights  $\theta$ at samples from $X^{\tau_u}$. $\theta_{0}$ is 
is the initial model parameter, $\theta_{\tau_{b}}^{*} - \theta_{\tau_{c}}^{*}$ and $\theta_{\tau_{c}}^{*} - \theta_{0}$ approximately denote the gradient directions 
associated with the backdoor task $\tau_{b}$ and the clean task $\tau_{c}$, respectively, as interpreted under the NTK linearization.
\end{proposition}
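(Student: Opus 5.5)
We work entirely in the linearized NTK picture of Eq.~\ref{eq:NTK_for_CL} and translate the two requirements of Definition~\ref{definition:complete_backdoor_unlearning}, in the drift form of Eq.~\ref{eq:complete_backdoor_unlearning_02}, into constraints on the parameter increment $\Delta\theta_u := \theta^{*}_{\tau_u}-\theta^{*}_{\tau_b}$ produced by the unlearning task. Under the NTK regime the features $\nabla_\theta f_0(x)$ are frozen, so for any input $x$
\begin{equation*}
f^{*}_{\tau_u}(x)-f^{*}_{\tau_b}(x)=\langle \nabla_\theta f_0(x),\Delta\theta_u\rangle,\qquad f^{*}_{\tau_b}(x)-f^{*}_{\tau_c}(x)=\langle \nabla_\theta f_0(x),\theta^{*}_{\tau_b}-\theta^{*}_{\tau_c}\rangle .
\end{equation*}
The first line of Eq.~\ref{eq:complete_backdoor_unlearning_02} then asks that $\langle\nabla_\theta f_0(x_b),\Delta\theta_u+(\theta^{*}_{\tau_b}-\theta^{*}_{\tau_c})\rangle=0$ for all $x_b\in X^{\tau_b}$. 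The second line asks that $\langle\nabla_\theta f_0(x_c),\Delta\theta_u\rangle=0$ for all $x_c\in X^{\tau_c}$. A sufficient choice, and the one we aim to justify as the natural approximate condition, is $\Delta\theta_u\approx-(\theta^{*}_{\tau_b}-\theta^{*}_{\tau_c})$. This means the unlearning step is antagonistic and parallel to the backdoor direction, while also lying in the orthogonal complement of the clean-task directions.

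Next we use the recursive kernel-regression characterization of \citet{bennani2020generalisation}. When task $\tau_u$ is trained by gradient descent from $\theta^{*}_{\tau_b}$ in the linear model, the increment lies in the row space of the unlearning features:
\begin{equation*}
\Delta\theta_u=\nabla_\theta f_0(X^{\tau_u})^{\top}\alpha
\end{equation*}
for some coefficient vector $\alpha$. Explicitly, $\alpha=K_u^{-1}(Y^{\tau_u}-f^{*}_{\tau_b}(X^{\tau_u}))$ with $K_u$ the NTK Gram matrix on $X^{\tau_u}$. Consequently $\Delta\theta_u$ can point along $-(\theta^{*}_{\tau_b}-\theta^{*}_{\tau_c})$ only if the span of $\{\nabla_\theta f_0(x_u)\}$ contains that direction. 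The cleanest way to guarantee this, with no leakage into other directions, is to have each $\nabla_\theta f_0(x_u)$ parallel to $\theta^{*}_{\tau_b}-\theta^{*}_{\tau_c}$; this gives the first condition of Eq.~\ref{eq:condition_2_for_complete_backdoor_unlearning}. Given that, the sign and magnitude of $\alpha$ are fixed by the labels $Y^{\tau_u}$ (non-target labels for adversarial examples), and these make the step antagonistic.

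For the clean-preservation part we use the same kernel-regression view on task $\tau_c$ started from $\theta_0$: $\theta^{*}_{\tau_c}-\theta_0=\nabla_\theta f_0(X^{\tau_c})^{\top}\beta$. This is the sense in which $\theta^{*}_{\tau_c}-\theta_0$ represents the clean-task gradient direction. If every $\nabla_\theta f_0(x_u)$ is orthogonal to $\theta^{*}_{\tau_c}-\theta_0$, and more strongly to the clean feature span, then $\langle\nabla_\theta f_0(x_c),\Delta\theta_u\rangle=\sum_u\alpha_u\langle\nabla_\theta f_0(x_c),\nabla_\theta f_0(x_u)\rangle$ vanishes. The NTK overlap between $\tau_c$ and $\tau_u$ is then zero, so $\Delta^{\tau_c\to\tau_u}(X^{\tau_c})=0$ by the forgetting formula of Definition~\ref{definition:catastrophic_forgetting}. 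This uses the orthogonality of $\tau_b$ and $\tau_c$ from Assumption~\ref{assumption:backdoor_unlearning_as_continual_learning} to keep the backdoor drift itself clean-neutral.

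The main obstacle is the gap between the statement's condition and what the derivation naturally gives. Orthogonality to the single vector $\theta^{*}_{\tau_c}-\theta_0$ is weaker than orthogonality to the whole clean feature span, and $\theta^{*}_{\tau_b}-\theta^{*}_{\tau_c}$ is only an aggregate of backdoor gradients. We handle this by presenting the conditions as approximate and necessary in a first-order sense. If $\nabla_\theta f_0(x_u)$ has a component along $\theta^{*}_{\tau_c}-\theta_0$, the drift on $X^{\tau_c}$ picks up a nonzero term $\langle\nabla_\theta f_0(x_c),\theta^{*}_{\tau_c}-\theta_0\rangle$-weighted contributions. If it has a component orthogonal to the backdoor direction, residual backdoor drift remains. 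This establishes both conditions as necessary up to the NTK approximation.
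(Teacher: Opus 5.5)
Your sufficiency argument is sound and at least as careful as the paper's proof. It shares the paper's skeleton: linearize both drifts through $\theta^{*}_{\tau_b}$, then use the clean/backdoor orthogonality $\nabla_\theta f_0(X^{\tau_c})\,v=0$, with $v:=\theta^{*}_{\tau_b}-\theta^{*}_{\tau_c}$, to reduce complete unlearning to $\nabla_\theta f_0(X^{\tau_c})\,\Delta\theta_u=0$ and $\nabla_\theta f_0(X^{\tau_b})(\Delta\theta_u+v)=0$.

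You differ in how you pass from these conditions on the increment to per-sample gradient conditions. The paper does it by fiat. It declares $\theta^{*}_{\tau_u}-\theta^{*}_{\tau_b}$ and $\theta^{*}_{\tau_c}-\theta_0$ to be ``approximate gradient directions'' of $\tau_u$ and $\tau_c$ and swaps increments for gradients. It also reads $\Delta\theta_u=-v$ off the vanishing backdoor drift, although that equation only makes $\Delta\theta_u+v$ orthogonal to the backdoor features.

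You use the kernel-regression span representation instead. This gives ``gradient direction of a task'' a precise meaning and turns the argument into a genuine sufficiency proof. Zero NTK overlap between $\tau_u$ and $\tau_c$, together with orthogonality, removes the clean drift, and gradients parallel to $v$ with the right targets give $\theta^{*}_{\tau_u}=\theta^{*}_{\tau_c}$. The paper's route is shorter; yours shows exactly where approximation enters. It also reveals that, under the standing orthogonality, the second condition is implied by the first: if $\nabla_\theta f_0(x_u)=c_u v$ then $\langle \nabla_\theta f_0(x_u),\theta^{*}_{\tau_c}-\theta_0\rangle=c_u\,\beta^{\top}\nabla_\theta f_0(X^{\tau_c})\,v=0$. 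That condition therefore only has independent content in the low-orthogonality regime the paper discusses afterwards.

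You overreach in the final sentence. Neither your first-order remarks nor the paper's proof show that the per-sample conditions are necessary, and they are not. Apply your span lemma to all three tasks:
\begin{itemize}
\item With unlearning targets $y_u=f^{*}_{\tau_c}(x_u)$, the fitted step is $\Delta\theta_u=-P_u v$, where $P_u$ is the orthogonal projection onto $\operatorname{span}\{\nabla_\theta f_0(x_u)\}$. Use a pseudoinverse when $K_u$ is singular, as it is in your rank-one case.
\item Since $v\in\operatorname{span}\{\nabla_\theta f_0(x_b)\}$, zero drift on $X^{\tau_b}$ forces $\|(I-P_u)v\|^2=\langle (I-P_u)v,v\rangle=0$.
\item So, with these targets, complete unlearning holds iff $v\in\operatorname{span}\{\nabla_\theta f_0(x_u)\}$.
\end{itemize}
Take two samples with gradients $s-v$ and $s$, where $s=\theta^{*}_{\tau_c}-\theta_0$. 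They satisfy this span condition and give $\theta^{*}_{\tau_u}=\theta^{*}_{\tau_c}$ exactly, yet they violate both stated conditions. So present those conditions as a sufficient design principle rather than a necessary one.

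Also make your remark about the labels precise. What fixes the antagonistic sign and magnitude is $y_u=f^{*}_{\tau_c}(x_u)$, which is the requirement of Proposition~\ref{proposition:proposition_2}, not merely non-target labels.
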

Proof. see Appendix~\ref{proof:proof_of_proposition_1}.

\begin{figure}[!h]
	\graphicspath{{figures/continual_learning/}}
	\centering
    \includegraphics[width=\linewidth]{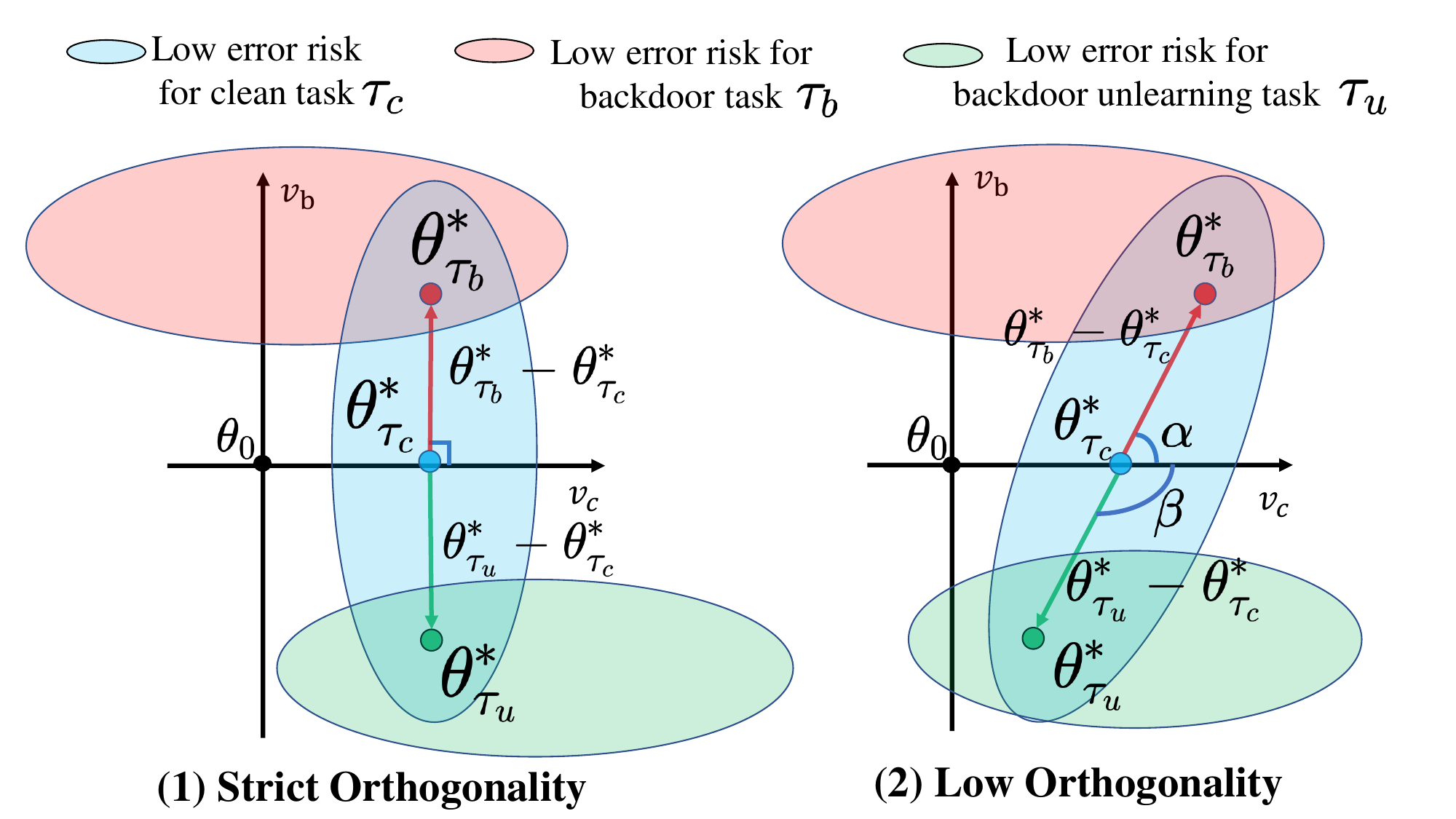}
    \caption{\small 
        Illustration of task similarities within a three-stage learning task flow $\mathcal{T}=\{\tau_c, \tau_b, \tau_u\}$. 
        Parameter regions with low error risk indicate areas yielding good task performance. 
        The vectors $v_b$ and $v_c$ denote the gradient directions associated with backdoor and clean tasks, respectively.
        The vector $\theta_{\tau_{b}}^{*} - \theta_{\tau_{c}}^{*}$ serves as an approximation of $v_b$, and $\theta_{\tau_{u}}^{*} - \theta_{\tau_{c}}^{*}$ 
        approximating the unlearning direction.
        The angles $\alpha$ and $\beta$ measure the similarity of the backdoor task and the unlearning task to the clean task, respectively.
    }
    \label{fig:task_similarity}
    \vspace{-2mm}
\end{figure}
As illustrated in Figure~\ref{fig:task_similarity} (1), Proposition~\ref{proposition:proposition_1} implies that achieving complete backdoor unlearning requires
constructing the unlearning task $\tau_u$ to be antagonistic and parallel to the backdoor task $\tau_b$, thereby maximizing their task similarity and amplifying 
catastrophic forgetting of backdoor features~\cite{doan2021theoretical}. At the same time, the task $\tau_u$ should remain orthogonal to the clean task $\tau_c$
so that the drift induced by $\tau_b$ can be effectively removed without interference with the learned knowledge of $\tau_c$.
In practice, however, the tasks $\tau_c$ and $\tau_b$ are not always strictly orthogonal, especially for attacks exhibiting low orthogonality~\cite{zhang2024exploring, zeng2023narcissus}.
As illustrated in Figure~\ref{fig:task_similarity} (2), for such cases, the angle $\alpha$ falls below $\pi/2$, suggesting a significant overlap or entanglement 
between the clean and backdoor tasks.

In the following, we consider the adversarial training procedure described in the threat model of Section~\ref{section:threat_model} as the unlearning task $\tau_{u}$.
We then focus on constructing adversarial examples in a principled manner to approximate the objective of complete backdoor unlearning as effectively as possible, 
which leads to Proposition~\ref{proposition:proposition_2}.
\begin{proposition}
    \label{proposition:proposition_2}
    In the context of backdoor adversarial unlearning, the adversarial training procedure can be regarded as the unlearning task $\tau_{u}$ with dataset $X^{\tau_u}$, 
    following the clean task $\tau_c$ and the backdoor task $\tau_b$. To approximate complete backdoor unlearning, an adversarial example $\tilde{x} = x + \delta$,
    with perturbation $\delta \in S$, should be constructed such that it preserves the true label $y$ under the clean model $f^{*}_{\tau_c}$  whereas 
    the backdoor model $f^{*}_{\tau_b}$ predicts it as target label $y_t$. Formally, $\forall \tilde{x} \in X^{\tau_u}$, it holds that
    $f^{*}_{\tau_c}(\tilde{x};\theta_{\tau_c}) = y$ and $f^{*}_{\tau_b}(\tilde{x};\theta_{\tau_b}) = y_t$.
\end{proposition}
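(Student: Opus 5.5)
The plan is to deduce Proposition~\ref{proposition:proposition_2} from Proposition~\ref{proposition:proposition_1} and Definition~\ref{definition:complete_backdoor_unlearning}. Under the NTK linearization of Eq.~\ref{eq:NTK_for_CL}, the two conditions of Eq.~\ref{eq:condition_2_for_complete_backdoor_unlearning} will be translated into statements about the \emph{outputs} of the clean and backdoor models at the adversarial examples $\tilde{x}\in X^{\tau_u}$. The key identity, applied at a point $\tilde{x}$, is
\begin{equation*}
f^{*}_{\tau_b}(\tilde{x}) - f^{*}_{\tau_c}(\tilde{x}) = \langle \nabla_{\theta} f_0(\tilde{x}), \theta^{*}_{\tau_b}-\theta^{*}_{\tau_c}\rangle,
\end{equation*}
together with the analogous identity $f^{*}_{\tau_c}(\tilde{x}) - f_0(\tilde{x}) = \langle \nabla_{\theta} f_0(\tilde{x}), \theta^{*}_{\tau_c}-\theta_0\rangle$.

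First, the orthogonality condition $\nabla_\theta f_0(\tilde{x}) \perp (\theta^*_{\tau_c}-\theta_0)$ is treated as follows. Since the unlearning samples must not interfere with the clean task, the update on $\tau_u$ must leave the clean predictions unchanged, so $f^*_{\tau_u}(X^{\tau_c}) = f^*_{\tau_c}(X^{\tau_c})$. The adversarial example $\tilde{x} = x+\delta$ is trained with its true label $y$ in $\mathcal{R}_{adv}$ of Eq.~\ref{eq:items_for_backdoor_adversarial_unlearning}. For this supervision to be consistent with the clean task, $\tilde{x}$ must carry the clean semantics of $x$, and hence the clean model must classify it correctly: $f^*_{\tau_c}(\tilde{x}) = y$. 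I would argue this by contradiction. If $f^*_{\tau_c}(\tilde{x}) \neq y$, then fitting $(\tilde{x},y)$ forces a parameter change with a nonzero component along the clean-task directions. By the linearized CF formula of Definition~\ref{definition:catastrophic_forgetting}, this gives $\Delta^{\tau_c\to\tau_u}(X^{\tau_c})>0$, violating Eq.~\ref{eq:unlearning_objective}.

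Second, the parallelism condition $\nabla_\theta f_0(\tilde{x}) \parallel (\theta^*_{\tau_b}-\theta^*_{\tau_c})$ makes the inner product in the key identity maximal in magnitude. Since $\tilde{x}$ is also orthogonal to $\theta^*_{\tau_c}-\theta_0$, the backdoor model's output at $\tilde{x}$ is dominated by the backdoor drift. For a genuine poisoned input this drift maps to $y_t$, as stated in the discussion after Definition~\ref{definition:complete_backdoor_unlearning}. I would conclude that $\tilde{x}$ behaves like a sample of $X^{\tau_b}$, namely $f^*_{\tau_b}(\tilde{x}) = y_t$. Conversely, if $f^*_{\tau_b}(\tilde{x}) = f^*_{\tau_c}(\tilde{x})$, the inner product would vanish and $\tilde{x}$ would be orthogonal to the backdoor direction. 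Training on it then produces no drift on $X^{\tau_b}$, so the compensating drift required by Eq.~\ref{eq:complete_backdoor_unlearning_02} cannot occur. Finally, minimizing the loss on $(\tilde{x},y)$ moves the parameters opposite to $\theta^*_{\tau_b}-\theta^*_{\tau_c}$, which is the ``antagonistic'' part and cancels the backdoor drift.

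The main obstacle is the step from the vector conditions, which are exact parallelism and orthogonality of gradients, to the discrete label statements. Argmax predictions are not linear, and parallelism only fixes the direction of the gradient, not the size of the output shift. I would handle this by working at the logit level. I would take ``predicts $y_t$'' to mean that the backdoor drift $\langle\nabla_\theta f_0(\tilde{x}),\theta^*_{\tau_b}-\theta^*_{\tau_c}\rangle$ has the same sign pattern and comparable magnitude as on true poisoned samples, and state the conclusion as an approximate, necessary-condition statement, matching the ``approximate'' wording of the proposition.
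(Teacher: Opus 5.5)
There is a genuine gap, and it is in your first step. Your own identity $f^{*}_{\tau_c}(\tilde{x}) - f_0(\tilde{x}) = \langle \nabla_{\theta} f_0(\tilde{x}), \theta^{*}_{\tau_c}-\theta_0\rangle$, combined with the orthogonality condition of Proposition~\ref{proposition:proposition_1}, gives $f^{*}_{\tau_c}(\tilde{x}) = f_0(\tilde{x})$. At $\tilde{x}$ the clean model therefore behaves like the untrained initialization. Orthogonality cannot be what forces $f^{*}_{\tau_c}(\tilde{x}) = y$; it removes all clean-task information at $\tilde{x}$.

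This also sets your two steps against each other. The same orthogonality you use in the second step to say $f^{*}_{\tau_b}(\tilde{x})$ is dominated by the backdoor drift implies that $f^{*}_{\tau_c}(\tilde{x}) = y$ would need $f_0(\tilde{x}) = y$.

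The contradiction argument does not rescue this. Under the linearization, unlearning starts from $\theta^{*}_{\tau_b}$, and its residual at $\tilde{x}$ is $y - f^{*}_{\tau_b}(\tilde{x})$. The drift it induces on $x_c \in X^{\tau_c}$ is governed by the kernel overlaps $\langle \nabla_\theta f_0(x_c), \nabla_\theta f_0(\tilde{x})\rangle$ and those residuals. The value $f^{*}_{\tau_c}(\tilde{x})$ never enters, so ``$f^{*}_{\tau_c}(\tilde{x}) \neq y \Rightarrow \Delta^{\tau_c\to\tau_u}(X^{\tau_c})>0$'' is a non sequitur.

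The second step has a milder version of the same problem. Parallelism to $\theta^{*}_{\tau_b}-\theta^{*}_{\tau_c}$ makes the drift at $\tilde{x}$ extremal in magnitude. It says nothing about that drift matching the drift at a genuine poisoned input. Redefining ``predicts $y_t$'' as ``same sign pattern and comparable magnitude'' replaces the conclusion rather than proving it.

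The missing idea is to compare $\tilde{x}$ with the poisoned samples themselves, not with parameter directions. The paper assumes $\nabla_{\theta} f_0(\tilde{x}) \parallel \nabla_{\theta} f_0(x_b)$ for $x_b \in X^{\tau_b}$. Via $\partial f/\partial w^{l+1} = (\partial f/\partial h^{l+1})\, z^{l}$, it infers that the layer features of $\tilde{x}$ and $x_b$ align, so every model in the task flow treats $\tilde{x}$ like $x_b$. Both labels are then inherited from the defining property of a backdoor: $f^{*}_{\tau_b}(x_b) = y_t$ and $f^{*}_{\tau_c}(x_b) = y$. The clean-label half thus comes from the clean model never having learned the trigger, not from orthogonality to $\theta^{*}_{\tau_c}-\theta_0$.

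Inside your NTK setup this is easy to make precise. If $\nabla_{\theta} f_0(\tilde{x}) = \nabla_{\theta} f_0(x_b)$ and $f_0(\tilde{x}) = f_0(x_b)$, then iterating Eq.~\ref{eq:NTK_for_CL} gives $f^{*}_{\tau}(\tilde{x}) = f^{*}_{\tau}(x_b)$ for every $\tau$, and both conclusions follow at once.
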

Proof. see Appendix~\ref{proof:proof_of_proposition_2}.

Proposition~\ref{proposition:proposition_2} indicates that complete backdoor unlearning requires the adversarial example $\tilde{x}$ to exhibit properties 
similar to a genuine backdoor sample. Specifically, it should display the backdoor features under the backdoor model $f^{*}_{\tau_b}$, while preserving 
the main semantic features of the original sample $x$ when evaluated by the clean model $f^{*}_{\tau_c}$. However, the objective remains practically 
challenging due to the defender's lack of access to the poisoned training dataset, as assumed in the threat model in Section~\ref{section:threat_model}.

\subsection{Blind Inversion}
\label{section:blind_inversion}
As specified in the threat model in Section~\ref{section:threat_model}, we now focus on the adversarial risk term $\mathcal{R}_{adv}$ in the optimization objective, 
as formulated in Eq.~\ref{eq:backdoor_adversarial_unlearning}.
\begin{equation}
    \label{eq:second_term_of_backdoor_adversarial_unlearning}
    \underset{\theta}{\min}~\underset{\delta \in S}{\max}~\mathbb{E}_{(x,y)\sim\mathcal{D}_{-y_t}} (\ell_{adv}(f(\tilde{x};\theta),y)).
\end{equation}
According to Proposition~\ref{proposition:proposition_2}, to approximate complete backdoor unlearning, the adversarial example distribution $q(\tilde{x})$, 
generated by the inner optimization of Eq.~\ref{eq:second_term_of_backdoor_adversarial_unlearning}, should satisfy the conditions:`` $\forall \tilde{x} \in q(\tilde{x})$, 
\ding{172}:$f^{*}_{\tau_b}(\tilde{x};\theta_{\tau_b}) = y_t$ and \ding{173}:$f^{*}_{\tau_c}(\tilde{x};\theta_{\tau_c}) = y$''.
To encode condition \ding{172} in the optimization process, it is natural to formulate it using cross-entropy loss: $\ell_{ce}(f_{\tau_{b}}(\tilde{x}),y_t)$, 
where $f_{\tau_{b}}$ is the backdoor model.

However, since condition \ding{173} depends on the clean model $f_{\tau_{c}}$, which is inherently unknown, the construction of adversarial examples can be naturally 
cast as a blind inversion problem~\cite{gao2021deepgem}: solving an inverse problem to recover an unknown source from indirect measurements, under the assumption 
that the source and measurements are related via a forward model with incomplete or unknown knowledge of parameters.
In this model-based inversion, the adversarial example $\tilde{x}$ represents the unobserved sources, and the label distribution \( y \) corresponds to the 
observed measurements. They are related through a forward clean model $y = f_{\tau_c}(\tilde{x})$ with unknown parameters.
We parameterize the unknown clean model as $f_{\theta}(x) $ and then estimate the true model parameters $\theta^{*}$ by optimizing a maximum a posteriori (MAP) 
objective, denoted as \text{MAP$_{\theta}$}.

\textbf{\text{MAP$_{\theta}$}.} This is achieved by optimizing the forward model parameters $\theta$ over the full distribution $q(\tilde{x})$ of possible adversarial 
examples $\tilde{x}$ by solving 
\label{section:MAP_theta}
\begin{equation}
    \label{eq:MAP_theta}
    \begin{split}
    \theta^{*} &= \arg \max_{\theta}[\log p(\theta|y)] \\
                 &= \arg \max_{\theta}\left\{L(q,\theta)+ KL(q(\tilde{x})||p(\tilde{x}|\theta ,y))\right\} \\
    \end{split}
\end{equation}
where
\begin{equation}
    \left\{\begin{matrix}
        L(q,\theta) = E_{q(\tilde{x})}[log(p(\theta,\tilde{x}|y))] + H(q(\tilde{x})) \\
        KL(q(\tilde{x})||p(\tilde{x}|\theta ,y)) = -\int q(\tilde{x}) \log(\frac{p(\tilde{x}|\theta ,y)}{q(\tilde{x})})d\tilde{x}.
    \end{matrix}\right.
\end{equation}
$L(q,\theta)$ denotes the variational lower bound, $H(q(\tilde{x}))$  represents the entropy of the distribution $q(\tilde{x})$, and $\text{KL}(q(\tilde{x})||p(\tilde{x}|\theta,y))$
is the Kullback–Leibler (KL) divergence between the distribution $q(\tilde{x})$ and the posterior $p(\tilde{x}|\theta,y)$.

The derivation can be found in Appendix~\ref{proof:proof_of_MAP_theta}.

Expectation-Maximization (EM) algorithms have long been employed to efficiently solve \text{MAP$_{\theta}$}~\cite{dempster1977maximum}. 
This process follows an iterative two-step procedure. For instance, at the \textit{t}-th iteration:  
(1) Expectation step (E-Step): computing the posterior of \( \tilde{x} \) given the current forward model $\theta^{(t-1)}$;
(2) Maximization step (M-Step): updating \( \theta^{(t)} \) to maximize the expected value of the log-likelihood function with respect to $\theta$.

\textbf{Expectation Step.} E-step aims to learn a distribution $q(\tilde{x})$ to approximate the posterior distribution of $\tilde{x}$, given the current forward 
model parameters $\theta^{(t-1)}$, by optimizing
\begin{equation}
    \label{eq:E-step}
\resizebox{0.90\linewidth}{!}{$
    \begin{split}
        & \min_{q(\tilde{x})}KL(q(\tilde{x})||p(\tilde{x}|\theta^{(t-1)},y)) \\
        &= \int q(\tilde{x}) \log \frac{q(\tilde{x})}{p(\tilde{x}|\theta^{(t-1)},y)} d\tilde{x} \\
        &= \mathbb{E}_{\tilde{x}\sim q(\tilde{x})}\left[- \log(p(y|\tilde{x},\theta^{(t-1)})) - \log p(\tilde{x}|\theta^{(t-1)}) + \log q(\tilde{x}) \right]
    \end{split}
$}
\end{equation}
where $\log p(y|\tilde{x},\theta^{(t-1)})$ is the data likelihood and $\log p(\tilde{x}|\theta^{(t-1)})$ is a prior on the generated adversarial examples.
We note that when $KL(q(\tilde{x})||p(\tilde{x}|\theta^{(t-1)}, y))$ reaches its minimum value of 0, it holds that $q(\tilde{x}) = p(\tilde{x}|\theta^{(t-1)},y)$.

\textbf{Maximization Step.} M-step is to use the approximate posterior distribution $q(\tilde{x})$ from the prior E-step, to update $\theta$, the parameters of 
the model, yielding $\theta^{(t)}$. This is achieved by optimizing $\max_{\theta} L(q,\theta)$, while keeping the distribution $q(\tilde{x}) = p(\tilde{x}|\theta^{(t-1)},y)$ 
fixed. We have:
\begin{equation}
    \label{eq:M-step}
    \resizebox{0.90\linewidth}{!}{$
    \begin{split}
        \max_{\theta} L(q,\theta) &= \max_{\theta} \mathbb{E}_{\tilde{x}\sim q(\tilde{x})}[\log(p(\theta,\tilde{x}|y))] + H(q(\tilde{x})) \\
        &= \max_{\theta} \mathbb{E}_{\tilde{x}\sim q(\tilde{x})}[\log(p(y|\tilde{x},\theta)) + \log p(\theta)] + \text{const}
    \end{split}
    $}
\end{equation}
where $\log p(\theta)$ is the prior over the model parameters.

\subsection{Proposed Method}
\label{section:proposed_method}
In this section, we introduce the Blind Inversion-Backdoor Adversarial Unlearning (BI-BAU) algorithm, which instantiates the adversarial training objective in Eq.~\ref{eq:backdoor_adversarial_unlearning} 
as a principled EM-style optimization procedure, grounded in Propositions~\ref{proposition:proposition_1} and~\ref{proposition:proposition_2}.

\textbf{Inner Maximization.} In the E-step, we aim to approximate the distribution $q(\tilde{x})$ over adversarial examples $\tilde{x}$ that satisfy the two conditions
outlined in Section~\ref{section:blind_inversion}. Condition \ding{172}, requiring $\tilde{x}$ to trigger the backdoor model, is naturally formulated via 
a cross-entropy loss $\ell_{ce}(f_{\tau_b}(\tilde{x}), y_t)$. Condition \ding{173}, enforcing that $\tilde{x}$ preserves the clean semantics, corresponds to 
the data likelihood term $-\log p(y|\tilde{x},\theta^{(t-1)})$ and a prior term $\log p(\tilde{x}|\theta^{(t-1)})$ that regularizes the adversarial perturbation.
We now elaborate on the two components in light of the discussion in Section~\ref{section:problem_formulation}.

To instantiate the data likelihood term $-\log p(y|\tilde{x},\theta^{(t-1)})$, we adopt a knowledge distillation loss~\cite{hinton2015distilling} that encourages 
$\tilde{x}$ to retain the primary semantic features of the original sample $x$. Unlike conventional knowledge distillation, which transfers knowledge across models 
in the parameter space, We instead leverage knowledge distillation as a mechanism to enforce semantic consistency directly in the input space, even under adversarial 
perturbations.
\begin{equation}
\label{eq:ell_KD}
\resizebox{0.90\linewidth}{!}{$
    \ell_{kd} = \alpha \cdot T^2 \cdot {KL}\big(p_{\theta}(\tilde{x}; T)||p_{\tau_{b}}(x; T)\big) + (1 - \alpha) \cdot {\ell_{ce}}(f_{\theta}(\tilde{x}),y)
$}
\end{equation}
where $\ell_{ce}(f_{\theta}(\tilde{x}),y)$ is the cross-entropy loss between the updated model $f_{\theta}$ and the ground-truth label $y$, 
$\text{KL}(p_{\theta}(\tilde{x};T) || p_{\tau_b}(x;T))$ measures the discrepancy between the softened outputs of $f_{\theta}$ on $\tilde{x}$ and the backdoor 
model $f_{\tau_b}$ on $x$, $T$ is the temperature controlling output smoothness, and $\alpha$ balances the contributions of soft-label (KL) and hard-label (CE) terms.

The prior term $\log p(\tilde{x}|\theta^{(t-1)})$ in Eq.~\ref{eq:E-step} encodes condition in Proposition~\ref{proposition:proposition_1},
\emph{``$\nabla_{\theta}f_{0}(x_u,\theta) \parallel (\theta_{\tau_{b}}^{*} - \theta_{\tau_{c}}^{*}), \forall x_{u} \in X_{\tau_u}$''}, which requires that the gradient 
of the unlearning task aligns with the backdoor direction to maximally eliminate backdoor effects. Rather than enforcing this condition directly in parameter space, 
we realize it in the feature space. We now detail how this transformation is achieved.

For $\nabla_{\theta}f_{0}(x_u,\theta)$, based on Eq.~\ref{eq:recurrence}, we have
\begin{equation}
    \frac{\partial f(x,w)}{\partial w^{l+1}} = \frac{\partial f(x,w)}{\partial h^{l+1}} \cdot \frac{\partial h^{l+1}}{\partial w^{l+1}} = \frac{\partial f(x,w)}{\partial h^{l+1}} \cdot  z^{l}.
\end{equation}
In particular, for the fully connected (FC) classification head that outputs pre-softmax logits and thus does not involve any activation function $\phi$, 
the mapping from $h^{l+1}$ to the output is linear. Consequently, the Jacobian $\partial f / \partial h^{l+1}$ reduces to a constant linear mapping.
As a result, the gradient of the output with respect to the FC-layer weights is simply
\begin{equation}
 \frac{\partial f(x,w)}{\partial w_{FC}} = z(x)
\end{equation}
where $w_{FC}$ denotes the weights of the FC layer and $z(x) = f_{\text{backbone}}(x)$ represents the output of the feature extractor.

Therefore, for a linear FC classification head, the feature representation $z(x) = f_{\text{backbone}}(x)$ provides a first-order approximation to the gradient 
with respect to the FC weights. Accordingly, we treat the feature perturbation $z(\tilde{x}) - z(x)$ as a first-order proxy for the unlearning-task gradient 
with respect to the linear classification head.
Let $w_{\tau_b}$ and $w$ denote the FC-layer weights of the backdoored model and the current model, respectively; the difference $w_{\tau_b} - w$ serves as a proxy
for the backdoor gradient direction. To encourage alignment of the adversarial perturbation with the backdoor direction, we maximize the inner product
\begin{equation}
    \max \left \langle z(\tilde{x}) - z(x), w_{\tau_{b}} - w \right \rangle.
\end{equation}

Considering that the clean and backdoor tasks are not strictly orthogonal in practice~\cite{zhang2024exploring,liu2020reflection,barni2019new,qi2023revisiting}, 
as illustrated in Figure~\ref{fig:task_similarity} (2), in such cases, the angle $\alpha$ between the backdoor-task gradient and the clean-task gradient direction
$v_c$ is typically less than $\pi/2$. The corresponding angle $\beta$ denotes the angle between the unlearning-task gradient and the clean-task gradient direction $v_c$, 
which could be approximated by the angle between the perturbation $z(\tilde{x}) - z(x)$ and the clean feature $z(x)$. This approximation allows us to balance 
the alignment with the backdoor direction while preserving the semantic integrity of clean features. Accordingly, the prior loss is defined as:
\begin{align}
    \label{eq:ell_prior}
    \ell_{prior} = \left \langle z(\tilde{x}) - z(x), w_{\tau_{b}} - w \right \rangle - \cos(\beta + k \cdot \frac{\pi}{2})
\end{align}
where $\beta \approx \arccos \left ( \frac{\left \langle z(\tilde{x}) - z(x), z(x) \right \rangle }{\left | z(\tilde{x}) - z(x) \right | \cdot \left | z(x) \right |} \right )$,
and $k \in [0,1]$ controls the magnitude of the angle $\beta$. Notably, when $k=1.0$, $\ell_{prior}$ is maximized at $\beta = \pi/2$, ensuring the unlearning task 
is orthogonal to the clean task and minimally interferes with it. Conversely, $k=0.0$ aligns $\tilde{x}$ fully with backdoor features, at the cost of clean semantics.
Thus, $k$ balances backdoor alignment and clean feature preservation. The effect of $k$ on the robustness of BI-BAU is analyzed in the Appendix~\ref{appendix:Investigation_of_hyperparameter_k}. 

Combining conditions \ding{172} and \ding{173}, we have the adversarial loss for the inner optimization as follows:
\begin{equation}
    \label{eq:ell_in}
    \mathcal{L}_{adv-in} = - \ell_{ce}(f_{\tau_{b}}(\tilde{x}),y_t) - \lambda_{1} \ell_{kd} + \lambda_{2} \ell_{prior}
\end{equation}
where $\lambda_{1}$ and $\lambda_{2}$ are trade-off weights that balance the preservation of semantic features of the original sample and the strength of 
the adversarial perturbation, respectively.

\textbf{Outer Minimization.} In the M-step, we update the model parameters $\theta$ using the adversarial example distribution $q(\tilde{x})$ from the E-step to 
purify the backdoor model. This is achieved by optimizing $\max_{\theta} L(q, \theta)$, which involves the likelihood term $\log p( y\mid \tilde{x},\theta)$ 
as well as the prior distribution term $\log p(\theta)$ in Eq.~\ref{eq:M-step}.
The likelihood term is implemented via cross-entropy, $\ell_{ce}(f_{\theta}(\tilde{x};\theta), y)$, ensuring that $\tilde{x}$ is labeled 
with its true class to fine-tune the model and improve robustness.

The prior term $\log p(\theta)$ enforces the condition of Proposition~\ref{proposition:proposition_1},
\emph{``$\nabla_{\theta}f_{0}(x_c,\theta) \perp (\theta_{\tau_b}^{*} - \theta_{\tau_c}^{*}) $''}, 
which requires that the clean task $\tau_c$ to be orthogonal to the backdoor task $\tau_b$.
Equivalently, this condition can be expressed as minimizing the Euclidean inner product $\langle \nabla_{\theta} f_{0}(x_c,\theta), \theta_{\tau_b}^{} - \theta_{\tau_c}^{} \rangle$,
which aligns with the concept of Elastic Weight Consolidation (EWC)~\cite{kirkpatrick2017overcoming}, a technique commonly used to alleviate catastrophic forgetting in neural networks. 
Since the clean model is unavailable, we replace $\theta_{\tau_c}^{*}$ with the current model parameter $\theta$, yielding
\begin{equation}
    \ell_{ewc} = (\theta_{\tau_b}^{*} - \theta) ^{T} F_{\tau_c} (\theta_{\tau_b}^{*} - \theta)
\end{equation}
where $F_{\tau_c} = \mathbb{E} \left [ \nabla_{\theta}\log p(X^{\tau_c}|\theta) \nabla_{\theta}\log p(X^{\tau_c}|\theta)^{T} \right ]$ is the Fisher Information Matrix (FIM)
for the clean dataset $X^{\tau_c}$. The term $\ell_{ewc}$ ensures that the model's performance on the clean task is further preserved.

Additionally, Proposition~\ref{proposition:proposition_1} implies that ``$\nabla_{\theta}f_{0}(x_u,\theta) \parallel (\theta_{\tau_b}^{*} - \theta_{\tau_c}^{*}) $'',  
indicating that the unlearning task $\tau_u$ should be antagonistic and parallel to the backdoor task $\tau_b$. Therefore, for unlearning task $\tau_u$,
we adopt the inverse of the EWC loss, clipped to $\left [ 0, 1.0 \right ]$, yielding:
\begin{equation}
        \ell_{inv-ewc} = 1.0 / \left [ 1.0 + (\theta_{\tau_b}^{*} - \theta) ^{T} F_{\tau_u} (\theta_{\tau_b}^{*} - \theta)  \right ] 
\end{equation}
where $F_{\tau_u} = \mathbb{E} \left [ \nabla_{\theta}\log p(X^{\tau_u}|\theta_{\tau_b}^{*}) \nabla_{\theta}\log p(X^{\tau_u}|\theta_{\tau_b}^{*})^{T} \right ]$ is 
the Fisher Information Matrix (FIM) computed on the unlearning dataset $X^{\tau_u}$ under the backdoored model $f_{\tau_b}^{*}(.;\theta_{\tau_b}^{*})$. This design 
encourages parameter updates to follow the backdoor gradient direction, thereby promoting effective forgetting of backdoor-related features.

Therefore, the outer optimization loss is:
\begin{equation}
    \label{eq:ell_out}
    \mathcal{L}_{adv-out} = \ell_{ce}(f_{\theta}(\tilde{x};\theta), y) + \gamma_{1} \ell_{inv-ewc} + \gamma_{2} \ell_{ewc}
\end{equation}
where $\gamma_{1}$ and $\gamma_{2}$ are trade-off weights that balance effective backdoor unlearning and the preservation of clean-task performance, respectively.

\textbf{Overall objective.} In summary, combining Equations~\ref{eq:backdoor_adversarial_unlearning}, ~\ref{eq:ell_in} and~\ref{eq:ell_out}, we propose the bi-level optimization objective:
\begin{equation}
    \label{eq:overall_optimization_objective_for_targeted_adversarial}
    \resizebox{0.95\linewidth}{!}{$
    \begin{split}
    & \min_{\theta}~\mathbb{E}_{(x,y)\sim \mathcal{D}_{cl}} \left [ \ell(x,y;\theta) \right ] 
     + \mathbb{E}_{(x,y)\sim \mathcal{D}_{-y_t}} \left [\mathcal{L}_{adv-out}(\tilde{x},y;\theta)  \right ] \\ 
    & s.t.~~~ \delta^{*} = \arg~\max_{\delta \in S}~\mathcal{L}_{adv-in}(\tilde{x},y_t, y;\theta).
    \end{split}
    $}
\end{equation}

\textbf{Extension to Untargeted Adversarial Training.} In the case where the target label $y_t$ is unknown, for the inner optimization problem, 
we adopt untargeted adversarial examples. Consequently, we define the adversarial loss as $\mathcal{L}_{adv-in} = \ell_{ce}(f_{\tau_{b}}(\tilde{x}),y) -\lambda_{1} \ell_{kd} + \lambda_{2} \ell_{prior}$
and formulate the overall optimization objective as:
\begin{equation}
    \label{eq:overall_optimization_objective_for_untargeted_adversarial}
    \resizebox{0.85\linewidth}{!}{$
    \begin{split}
    & \min_{\theta}~\mathbb{E}_{(x,y)\sim \mathcal{D}_{cl}} \left [ \ell(x,y;\theta) + \mathcal{L}_{adv-out}(\tilde{x},y;\theta) \right ] \\
    & s.t.~~~ \delta^{*} = \arg~\max_{\delta \in S}~\mathcal{L}_{adv-in}(\tilde{x}, y;\theta).
    \end{split}
    $}
\end{equation}

\textbf{Extension to Multi-Modal Contrastive Learning.} 
The extension of BI-BAU to multi-modal contrastive learning settings is presented in Appendix~\ref{appendix:extending_BI-BAU_to_multi_modal_contrastive_learning}.

%% file: section/Experiments.tex
\section{Experiments}
\label{section:experiments}

\subsection{Experiment Settings}
\label{subsection:experiment_settings}

We evaluate our proposed BI-BAU method under two learning scenarios: \emph{supervised learning} and \emph{multi-modal contrastive learning}, considering 
diverse datasets, model architectures, and attack types. Detailed descriptions of the datasets and network architectures are provided in Appendix~\ref{subsection:datasets_and_network}.  

\textbf{Supervised learning.} For standard image classification tasks, we conduct all attack and defense tasks on three benchmark datasets: CIFAR-10~\cite{krizhevsky2009learning}, 
Tiny-ImageNet-200~\cite{wu2017tiny}, and GTSRB~\cite{stallkamp2011german}, using three neural network architectures: ResNet-18~\cite{he2016deep}, VGG19~\cite{simonyan2014very} 
and Vision Transformer (ViT)~\cite{dosovitskiy2020image}. Additional experiments on diverse datasets and models, including Tiny-ImageNet-200 and VGG19, are reported in Appendix~\ref{appendix:experiments_on_diverse_datasets_and_models}.

\textbf{Multi-Modal Contrastive Learning.} For multi-modal evaluation, we adopt OpenAI's open-source CLIP~\cite{radford2021learning} as the pre-trained backbone, 
trained on 400 million image-text pairs. Specifically, the ``ViT-B/32'' variant serves as the visual encoder and a Transformer as the text encoder. We first 
implant a backdoor into a pre-trained CLIP model on ImageNet-1K. Then, we construct a fine-tuning dataset by sampling 50,000 image-text pairs from 
Conceptual Captions (CC3M)~\cite{sharma2018conceptual}, corresponding to only ~0.6\% of the full dataset.

\textbf{Attack Baselines.} For supervised learning, we conduct an extensive investigation and adopt a wide range of representative backdoor attacks to evaluate 
our approach, categorized as follows: (1) Patch-based backdoor attacks: BadNets~\cite{gu2019badnets} and Blended~\cite{chen2017targeted}; (2) Invisible backdoor attacks:
WaNet~\cite{nguyen2021wanet} and Refool~\cite{liu2020reflection}; (3) Sample-specific backdoor attacks: IAD~\cite{nguyen2020input} and ISSBA~\cite{li2021invisible};
(4) Clean-label attacks: SIG~\cite{barni2019new} and Narcissus~\cite{zeng2023narcissus}. We further include adaptive attacks~\cite{qi2023revisiting} 
designed to circumvent latent separation-based defenses, such as Adap-Blend and Adap-Patch. For multi-modal contrastive learning, we evaluate attack baselines, 
include BadNet~\cite{gu2019badnets} and the state-of-the-art (SoTA) BadCLIP~\cite{liang2024badclip}. BadCLIP is specifically designed for multi-modal model and exhibits
strong resilience against existing defense mechanisms.

Moreover, we implement the backdoor re-activation attacks proposed by Min \etal~\cite{min2024uncovering}, including Query-based Re-activation Attack (QRA) 
and Retuning Attack (RA), to verify that our proposed BI-BAU method can effectively and thoroughly eliminate backdoor effects in backdoor models. Detailed attack 
configurations are provided in the Appendix~\ref{subsection:attack_settings}.

\textbf{Defense Baselines.} For supervised learning, we evaluate the effectiveness of our proposed BI-BAU method by comparing it with state-of-the-art post-purification 
approaches, including: (1) trigger synthesis based approaches, such as NC~\cite{wang2019neural} and BTI-DBF~\cite{xu2024towards}; (2) model reconstruction based 
approaches, including ANP~\cite{wu2021adversarial}, EP~\cite{zheng2022pre}, RNP~\cite{li2023reconstructive} and CLP~\cite{zheng2022data}; (3) backdoor adversarial 
unlearning approaches, such as I-BAU~\cite{zeng2021adversarial} and SAU~\cite{wei2023shared}. We further compare with the recently proposed PAM~\cite{min2024uncovering} 
by Min~\etal to highlight the its advantages. For multi-modal contrastive learning, we adopt CleanCLIP~\cite{bansal2023cleanclip} as a baseline defense for comparison.  
In addition, as a comparative reference, we construct an idealized backdoor unlearning scenario (termed IBU), in which we assume that the defender has access to 
the real backdoor samples along with a small subset of clean data, and performs fine-tuning to remove the backdoor from the model.
Further defense settings are provided in the Appendix~\ref{subsection:defense_settings}. 

\textbf{Evaluation Metrics.} By default, all defense methods have access to $5\%$ of clean training data, with evaluation metrics including Clean Accuracy (CA) 
and Attack Success Rate (ASR), where R-ASR and Q-ASR represent ASR after performing Retuning Attack (RA) and Query-based Re-activation Attack (QRA), respectively.
We further verify the thoroughness of backdoor removal by measuring the Backdoor Existence Coefficient (BEC)~\cite{zhu2024breaking} at the neural level.
The implementation details of BEC are provided in Appendix~\ref{appendix:layer-wise_activation_analysis}.~\footnote{In the following tables, we use \scalebox{0.75}{[\colorbox[HTML]{FFCCCC}{red}]} 
to mark some cases where the defense is ineffective, as indicated by a persistently high ASR, and \scalebox{0.75}{[\colorbox[HTML]{d9f2d9}{green}]} to highlight 
the best-performing result among the six baseline methods. ``BI-BAU (T)'' and ``BI-BAU (U)'' refer to the defense performance under targeted and untargeted attack 
settings, respectively.} In addition, a comparison of the efficiency of the defense methods is presented in Section~\ref{subsection:efficiency_and_computational_cost}.

\begin{table*}[!ht]
	\centering
	\caption{Performance (\%) of safety tuning strategies for eliminating backdoor effects on ResNet-18 (CIFAR-10).}
	\label{table:backdoor_unlearning_on_CIFAR-10}
	\renewcommand{\arraystretch}{1.5}
	\resizebox{0.98\textwidth}{!}{\begin{tabular}{c|cc|cc|cc|cc|cc|cc|cc|cc|cc}
		\toprule
		Attack $\to$ & \multicolumn{2}{c|}{BadNets} & \multicolumn{2}{c|}{Blended} & \multicolumn{2}{c|}{WaNet} & \multicolumn{2}{c|}{Refool} & \multicolumn{2}{c|}{IAD} & \multicolumn{2}{c|}{ISSBA} & \multicolumn{2}{c|}{SIG} & \multicolumn{2}{c|}{Narcissus} & \multicolumn{2}{c}{Average}\\ 
		\midrule
		Defense $\downarrow$ & \multicolumn{1}{c}{CA}&\multicolumn{1}{c|}{ASR} & \multicolumn{1}{c}{CA}&\multicolumn{1}{c|}{ASR} & \multicolumn{1}{c}{CA}&\multicolumn{1}{c|}{ASR} & \multicolumn{1}{c}{CA}&\multicolumn{1}{c|}{ASR} & \multicolumn{1}{c}{CA}&\multicolumn{1}{c|}{ASR} & \multicolumn{1}{c}{CA}&\multicolumn{1}{c|}{ASR} & \multicolumn{1}{c}{CA}&\multicolumn{1}{c|}{ASR} & \multicolumn{1}{c}{CA}&\multicolumn{1}{c|}{ASR} & \multicolumn{1}{c}{CA}&\multicolumn{1}{c}{ASR} \\ 
		\midrule
		
        No Defense  & 90.00&99.70 & 90.98&100.00 & 87.28&99.90 & 91.25&99.30 & 91.96&99.90 & 88.18&99.10 & 88.81&99.44 & 91.18&99.78 & 89.91&99.64\\
        \hline
		
		NC & 89.70&1.40 & \scalebox{1.00}{\colorbox[HTML]{d9f2d9}{90.62}}&\scalebox{1.00}{\colorbox[HTML]{d9f2d9}{0.00}} & 85.60&\scalebox{1.0}{\colorbox[HTML]{FFCCCC}{11.36}} & \scalebox{1.00}{\colorbox[HTML]{d9f2d9}{90.02}}&\scalebox{1.0}{\colorbox[HTML]{FFCCCC}{18.30}} & \scalebox{1.00}{\colorbox[HTML]{d9f2d9}{91.42}}&2.70 & 88.05&10.30 & \scalebox{1.00}{\colorbox[HTML]{d9f2d9}{88.10}}& \scalebox{1.0}{\colorbox[HTML]{FFCCCC}{99.57}} & \scalebox{1.00}{\colorbox[HTML]{d9f2d9}{89.90}}&\scalebox{1.0}{\colorbox[HTML]{FFCCCC}{80.88}} & \scalebox{1.00}{\colorbox[HTML]{d9f2d9}{89.17}}&28.06\\
        \hline
		BTI-DBF & 88.94&1.60 & 83.57&5.60 & 83.86&8.88 & 85.10&\scalebox{1.0}{\colorbox[HTML]{FFCCCC}{20.60}} & 86.95&3.80 & 82.51&9.50 & 84.30&\scalebox{1.0}{\colorbox[HTML]{FFCCCC}{83.26}} & 85.78&\scalebox{1.00}{\colorbox[HTML]{d9f2d9}{1.44}} &  85.12&16.83 \\
		\hline
		ANP     & 85.71&\scalebox{1.00}{\colorbox[HTML]{d9f2d9}{0.20}} & 85.05&4.30 & 80.22&\scalebox{1.00}{\colorbox[HTML]{d9f2d9}{0.40}} & 88.47&\scalebox{1.00}{\colorbox[HTML]{d9f2d9}{0.70}} & 88.06&4.20 & 88.04&3.20 & 83.87&\scalebox{1.0}{\colorbox[HTML]{FFCCCC}{11.95}}  & 87.10&\scalebox{1.0}{\colorbox[HTML]{FFCCCC}{97.52}} & 85.81&15.30    \\
        \hline
        EP  & 88.97&0.90 & 90.07&\scalebox{1.0}{\colorbox[HTML]{FFCCCC}{54.50}}& 86.52&\scalebox{1.0}{\colorbox[HTML]{FFCCCC}{15.90}}  & 87.90&\scalebox{1.0}{\colorbox[HTML]{FFCCCC}{99.30}} & 86.27&\scalebox{1.0}{\colorbox[HTML]{FFCCCC}{42.50}} & 87.32&5.30 & 87.50&\scalebox{1.0}{\colorbox[HTML]{FFCCCC}{93.88}} & 89.68&\scalebox{1.0}{\colorbox[HTML]{FFCCCC}{99.98}} &  88.02&51.53  \\
		\hline
		RNP & \scalebox{1.00}{\colorbox[HTML]{d9f2d9}{90.22}}&0.66 & 84.88&3.80 & \scalebox{1.00}{\colorbox[HTML]{d9f2d9}{87.72}}&3.10 & 89.46&4.04 & 88.86&3.04 & \scalebox{1.00}{\colorbox[HTML]{d9f2d9}{90.67}}&0.40 & 81.81&\scalebox{1.0}{\colorbox[HTML]{FFCCCC}{71.91}} & 79.54&\scalebox{1.0}{\colorbox[HTML]{FFCCCC}{99.48}} & 86.64&23.30    \\
        \hline
        CLP  & 77.28&6.14 & 83.96&\scalebox{1.0}{\colorbox[HTML]{FFCCCC}{92.10}} & 59.66&\scalebox{1.0}{\colorbox[HTML]{FFCCCC}{61.90}} & 24.28&\scalebox{1.0}{\colorbox[HTML]{FFCCCC}{100.00}} & 83.66&\scalebox{1.0}{\colorbox[HTML]{FFCCCC}{27.60}} & 83.86&0.80 & 77.05&\scalebox{1.0}{\colorbox[HTML]{FFCCCC}{99.66}} & 73.08&\scalebox{1.0}{\colorbox[HTML]{FFCCCC}{52.84}} &  70.35&55.13	\\
		\hline
		I-BAU & 89.86&1.50 & 89.96&\scalebox{1.0}{\colorbox[HTML]{FFCCCC}{46.40}}& 82.46&2.80 & 85.06&\scalebox{1.0}{\colorbox[HTML]{FFCCCC}{12.30}} & 88.18&4.30 & 84.24&9.60 & 85.81&\scalebox{1.0}{\colorbox[HTML]{FFCCCC}{93.62}} & 88.68&\scalebox{1.0}{\colorbox[HTML]{FFCCCC}{99.68}} &  86.78&33.77\\
		\hline
		SAU   & 84.83&2.40 & 85.52&8.00  & 80.08&1.82 & 85.31&3.10 & 85.75&6.80 & 80.68&\scalebox{1.0}{\colorbox[HTML]{FFCCCC}{12.10}} & 82.56&\scalebox{1.0}{\colorbox[HTML]{FFCCCC}{35.22}} & 85.86&10.30 &  83.82 & 9.96\\
		\hline
        BI-BAU (T) & 85.84&1.90 &  87.42&3.80  & 86.33&2.20 & 87.41&3.80  &  88.28&3.10  &  85.28&2.30 & 80.36&6.26 & 83.66&\scalebox{1.0}{\colorbox[HTML]{FFCCCC}{15.08}} & 85.57&4.80\\
		\hline
        BI-BAU (U) & 88.06&1.10 & 88.27&0.20 & 86.63&1.20 & 87.24&4.10 & 88.73&\scalebox{1.00}{\colorbox[HTML]{d9f2d9}{0.60}} & 85.33&\scalebox{1.00}{\colorbox[HTML]{d9f2d9}{1.40}} & 78.29&\scalebox{1.00}{\colorbox[HTML]{d9f2d9}{0.51}} & 82.30&2.24 & 85.60 & \scalebox{1.00}{\colorbox[HTML]{d9f2d9}{1.41}}\\
        \bottomrule
	\end{tabular}}
	\vspace{-3mm}
\end{table*} 

\subsection{Resistance to Low-Orthogonality Attacks}
\label{subsection:main_results}

To validate the effectiveness of the BI-BAU method, we summarize the experimental results on ResNet-18 in Table~\ref{table:backdoor_unlearning_on_CIFAR-10}.
As shown in the Table~\ref{table:backdoor_unlearning_on_CIFAR-10}, BI-BAU is the only method that consistently achieves a low ASR while maintaining a relatively 
high CA across all backdoor attacks. Furthermore, we observe that, apart from BI-BAU, all other methods largely fail against state-of-the-art clean-label attacks 
such as SIG and Narcissus. This can be attributed to the entanglement of clean and backdoor features caused by such attacks, which has been shown to make 
defense significantly more challenging~\cite{zeng2023narcissus, qi2023revisiting}. Similarly, most existing defenses also perform poorly against Refool, consistent 
with the findings of~\cite{zhang2024exploring}, where they demonstrated that existing defenses are particularly vulnerable to attacks characterized by low orthogonality, 
as exhibited by Refool, SIG and Narcissus. Although BI-BAU does not always achieve the highest CA, with an average drop of 4.31\% on CIFAR-10, this performance 
trade-off is both expected and justifiable. As discussed in Section~\ref{section:proposed_method}, the alignment between the unlearning objective and the 
backdoor objective inevitably interferes with the clean task, particularly when the attack exhibits low orthogonality.
To this end, BI-BAU is deliberately designed to prioritize robust backdoor removal, even at the expense of a moderate decrease in CA.

Specifically, among trigger synthesis based approaches, BTI-DBF demonstrates performance most comparable to our BI-BAU. These approaches attempt to synthesize 
the trigger patterns through model inversion, followed by unlearning. However, accurately recovering the ground-truth triggers remains challenging. BTI-DBF 
reconstructs highly similar triggers by decoupling benign features, but this strategy is not always effective against backdoor attacks where clean and backdoor 
features are highly entangled. For example, BTI-DBF fails against the Narcissus attack, with ASR as high as 80.88\%. Moreover, such methods typically 
incur high computational costs due to the need for training additional models, as demonstrated by the efficiency experiments in Section~\ref{subsection:efficiency_and_computational_cost}.

Moreover, model reconstruction-based methods, such as ANP, EP, RNP, and CLP, directly locate and remove hidden backdoors in the backdoor model by pruning suspicious neurons. 
These methods rely on the assumption that backdoor-related neurons exhibit significantly different sensitivities to clean and poisoned samples~\cite{wu2021adversarial}. 
However, reliably identifying backdoor-related parameters is not trivial, especially when there is no clear distinction between backdoor and clean neurons, as is 
the case with attacks such as Refool, SIG, and Narcissus. This ambiguity makes it difficult to balance model robustness and performance. For instance, CLP identifies 
channels with higher lipschitz constants as potential backdoor channels and prunes them in a data-free manner. As shown in Table~\ref{table:backdoor_unlearning_on_CIFAR-10}, 
for the Refool attack, CLP achieves only 24.28\% clean accuracy due to its inability to precisely distinguish between backdoor and clean-task neurons, a finding 
consistent with that reported by Lin~\etal~\cite{lin2024unveiling}. Furthermore, for low-orthogonality attacks, including SIG and Narcissus, such methods 
consistently perform poorly, exhibiting high ASRs. As a result, these approaches achieve only limited robustness in the absence of prior knowledge of the trigger 
pattern. Existing backdoor adversarial unlearning methods mainly focus on mitigating CA degradation from adversarial training (AT), but overlook the fundamental 
mechanisms of backdoor attacks and defenses. This limitation undermines their generalizability across diverse attack scenarios. For example, I-BAU demonstrates 
poor performance against most backdoor attacks, while SAU is largely ineffective against SIG. Nevertheless, our BI-BAU consistently achieves generalizable defense 
across various attacks and datasets, under both targeted and untargeted AT settings.

\subsection{Resistance to Backdoor Re-activation Attack}
\label{subsection:resistance_to_RA}

\renewcommand{\arraystretch}{1.5}
\begin{table*}[!ht]
	\centering
	\caption{Post-purification robustness (\%) of safety tuning methods under backdoor re-activation attacks(ResNet-18, CIFAR-10).}
	\label{table:post_purification_robustness_on_CIFAR-10}
	\resizebox{0.98\textwidth}{!}{\begin{tabular}{c|c|c|c|c|c|c|c|c|c|c|c}
		\toprule
		Attack $\to$ & \multicolumn{1}{c|}{BadNets} & \multicolumn{1}{c|}{Blended} & \multicolumn{1}{c|}{WaNet} & \multicolumn{1}{c|}{Refool} & \multicolumn{1}{c|}{IAD} & \multicolumn{1}{c|}{ISSBA} & \multicolumn{1}{c|}{SIG} & \multicolumn{1}{c|}{Narcissus} &  \multicolumn{1}{c|}{Adap-Blend} & \multicolumn{1}{c|}{Adap-Patch} & \multicolumn{1}{c}{Average}\\ 
		\midrule
		\multirow{2}{*}{Defense $\downarrow$} & CA/ASR & CA/ASR & CA/ASR & CA/ASR & CA/ASR & CA/ASR & CA/ASR & CA/ASR & CA/ASR & CA/ASR & CA/ASR \\
		
		                                     & R-ASR/Q-ASR & R-ASR/Q-ASR & R-ASR/Q-ASR & R-ASR/Q-ASR & R-ASR/Q-ASR & R-ASR/Q-ASR  & R-ASR/Q-ASR  & R-ASR/Q-ASR  & R-ASR/Q-ASR  & R-ASR/Q-ASR  & R-ASR/Q-ASR  \\
		\midrule
		\multirow{2}{*}{No Defense}     & 90.00/99.70 & 90.98/100.00 & 89.54/99.90 & 91.25/99.30 & 91.96/99.90 & 88.18/99.10 & 88.81/99.44 & 91.18/99.78  & 90.64/84.90 & 90.73/88.00 & 90.32/97.00   \\
		
								 		& -/-  & -/- & -/-  & -/- & -/- & -/- & -/-  & -/- & -/-  &-/- & -/-  \\   
		\hline
		\multirow{2}{*}{IBU}     & 87.95/0.10 & 88.42/0.00 & 87.57/0.30 & 91.00/2.60 & 87.05/1.00 & 85.03/0.00 & 85.89/0.00 & 86.28/0.48  & 86.96/0.00 & 87.35/0.00 & 87.35/0.44   \\
		
								 & 0.00/7.10  & 0.00/0.00 & 0.10/0.10  & 70.80/4.30 & 0.80/1.40  & 26.10/1.50 & 0.00/0.11  & 2.60/0.28   & 0.00/0.00  & 0.00/0.00 & 10.04/1.47  \\   
		\bottomrule
		\toprule

		\multirow{2}{*}{BTI-DBF} & 86.64/5.40 & \scalebox{1.00}{\colorbox[HTML]{d9f2d9}{88.87}}/82.60 & 85.67/8.00 & 85.10/20.60 & 86.95/\scalebox{1.00}{\colorbox[HTML]{d9f2d9}{3.80}} & 82.61/1.80 & \scalebox{1.00}{\colorbox[HTML]{d9f2d9}{84.27}}/83.26 & 85.78/1.44  & 85.46/30.40 & \scalebox{1.00}{\colorbox[HTML]{d9f2d9}{87.75}}/16.80 & 85.91/25.49   \\
		
		 					     & \scalebox{1.00}{\colorbox[HTML]{FFCCCC}{88.60}}/61.40  & 96.50/76.90 & 97.70/79.30  & \scalebox{1.00}{\colorbox[HTML]{FFCCCC}{88.70}}/\scalebox{1.00}{\colorbox[HTML]{FFCCCC}{81.40}} & 29.10/11.70  & 2.30/2.30 & \scalebox{1.00}{\colorbox[HTML]{FFCCCC}{92.86}}/\scalebox{1.00}{\colorbox[HTML]{FFCCCC}{69.76}} & \scalebox{1.00}{\colorbox[HTML]{FFCCCC}{95.42}}/\scalebox{1.00}{\colorbox[HTML]{FFCCCC}{92.16}} & \scalebox{1.00}{\colorbox[HTML]{FFCCCC}{61.20}}/\scalebox{1.00}{\colorbox[HTML]{FFCCCC}{61.40}} & \scalebox{1.00}{\colorbox[HTML]{FFCCCC}{31.00}}/\scalebox{1.00}{\colorbox[HTML]{FFCCCC}{25.60}} & \scalebox{1.00}{\colorbox[HTML]{FFCCCC}{68.33}}/\scalebox{1.00}{\colorbox[HTML]{FFCCCC}{56.19}} \\ 			 			 
		\hline
		\multirow{2}{*}{ANP}     & 86.25/0.60 & 85.04/4.30 & 85.93/\scalebox{1.00}{\colorbox[HTML]{d9f2d9}{0.60}} & 88.47/\scalebox{1.00}{\colorbox[HTML]{d9f2d9}{0.70}} & 88.06/4.20 & 88.12/0.70 & 83.87/11.95 & \scalebox{1.00}{\colorbox[HTML]{d9f2d9}{87.10}}/97.52 & \scalebox{1.00}{\colorbox[HTML]{d9f2d9}{87.27}}/17.30 & 83.32/32.90 & \scalebox{1.00}{\colorbox[HTML]{d9f2d9}{86.34}}/17.07   \\

								 & 3.60/\scalebox{1.00}{\colorbox[HTML]{d9f2d9}{2.40}} & 27.50/37.30 & \scalebox{1.00}{\colorbox[HTML]{d9f2d9}{6.90}}/2.50  & \scalebox{1.00}{\colorbox[HTML]{FFCCCC}{75.00}}/\scalebox{1.00}{\colorbox[HTML]{FFCCCC}{71.00}} & 51.60/20.60  & 44.90/37.80 & \scalebox{1.00}{\colorbox[HTML]{FFCCCC}{71.17}}/\scalebox{1.00}{\colorbox[HTML]{FFCCCC}{57.40}} & \scalebox{1.00}{\colorbox[HTML]{FFCCCC}{99.34}}/\scalebox{1.00}{\colorbox[HTML]{FFCCCC}{98.62}} & \scalebox{1.00}{\colorbox[HTML]{FFCCCC}{62.60}}/\scalebox{1.00}{\colorbox[HTML]{FFCCCC}{61.50}} & \scalebox{1.00}{\colorbox[HTML]{FFCCCC}{70.90}}/\scalebox{1.00}{\colorbox[HTML]{FFCCCC}{60.80}} & \scalebox{1.00}{\colorbox[HTML]{FFCCCC}{51.35}}/\scalebox{1.00}{\colorbox[HTML]{FFCCCC}{44.99}} \\
		\hline
		\multirow{2}{*}{\hl{RNP}}     & \scalebox{1.00}{\colorbox[HTML]{d9f2d9}{90.22}}/0.66 & 84.88/3.80 & \scalebox{1.00}{\colorbox[HTML]{d9f2d9}{87.72}}/3.10 & \scalebox{1.00}{\colorbox[HTML]{d9f2d9}{89.46}}/4.04 & \scalebox{1.00}{\colorbox[HTML]{d9f2d9}{88.86}}/\scalebox{1.00}{\colorbox[HTML]{d9f2d9}{3.04}} & \scalebox{1.00}{\colorbox[HTML]{d9f2d9}{90.67}}/\scalebox{1.00}{\colorbox[HTML]{d9f2d9}{0.40}} & 81.81/71.91 & 79.54/99.48 & 77.38/12.30 & 75.42/73.50 & 84.59/27.22   \\

								& \scalebox{1.00}{\colorbox[HTML]{d9f2d9}{1.66}}/\scalebox{1.00}{\colorbox[HTML]{d9f2d9}{0.94}} & 21.60/12.00 & 20.70/13.50 & 26.98/19.68 & 45.84/16.44  & \scalebox{1.00}{\colorbox[HTML]{FFCCCC}{92.50}}/\scalebox{1.00}{\colorbox[HTML]{FFCCCC}{76.80}} & \scalebox{1.00}{\colorbox[HTML]{FFCCCC}{76.46}}/\scalebox{1.00}{\colorbox[HTML]{FFCCCC}{69.47}} & \scalebox{1.00}{\colorbox[HTML]{FFCCCC}{99.84}}/\scalebox{1.00}{\colorbox[HTML]{FFCCCC}{98.02}} & \scalebox{1.00}{\colorbox[HTML]{FFCCCC}{54.50}}/\scalebox{1.00}{\colorbox[HTML]{FFCCCC}{53.70}} & \scalebox{1.00}{\colorbox[HTML]{FFCCCC}{89.80}}/\scalebox{1.00}{\colorbox[HTML]{FFCCCC}{87.90}} & \scalebox{1.00}{\colorbox[HTML]{FFCCCC}{52.98}}/\scalebox{1.00}{\colorbox[HTML]{FFCCCC}{44.84}} \\
		\hline
		\multirow{2}{*}{SAU}     & 85.21/3.00 & 85.52/8.00 & 83.71/1.80 & 85.31/3.10 & 85.75/6.80 & 80.81/2.70 & 82.56/35.22 & 85.86/10.30  & 84.10/5.50 & 85.42/31.90 & 84.42/10.83  \\
								 & 96.90/42.10  & 45.20/24.50 & 96.10/51.10 & \scalebox{1.00}{\colorbox[HTML]{FFCCCC}{88.50}}/\scalebox{1.00}{\colorbox[HTML]{FFCCCC}{47.30}} & 68.50/17.60  & 73.50/25.40 & \scalebox{1.00}{\colorbox[HTML]{FFCCCC}{88.06}}/\scalebox{1.00}{\colorbox[HTML]{FFCCCC}{68.44}} & \scalebox{1.00}{\colorbox[HTML]{FFCCCC}{93.78}}/\scalebox{1.00}{\colorbox[HTML]{FFCCCC}{71.10}} & \scalebox{1.00}{\colorbox[HTML]{FFCCCC}{72.90}}/\scalebox{1.00}{\colorbox[HTML]{FFCCCC}{69.40}} & \scalebox{1.00}{\colorbox[HTML]{FFCCCC}{53.90}}/\scalebox{1.00}{\colorbox[HTML]{FFCCCC}{40.50}} & \scalebox{1.00}{\colorbox[HTML]{FFCCCC}{77.73}} /\scalebox{1.00}{\colorbox[HTML]{FFCCCC}{45.74}} \\   
		\hline
		\multirow{2}{*}{PAM}     & 85.83/\scalebox{1.00}{\colorbox[HTML]{d9f2d9}{0.00}} & 88.10/0.40 & 80.10/20.30 & 82.68/6.70 & 77.61/8.30 & 78.10/5.80 & 80.89/4.04 & 86.42/\scalebox{1.00}{\colorbox[HTML]{d9f2d9}{0.00}} & 85.36/54.70 & 86.40/28.40 & 83.14/12.86  \\
		
								 & 58.10/11.00 & 20.00/\scalebox{1.00}{\colorbox[HTML]{d9f2d9}{0.20}} & 19.40/16.60  & \scalebox{1.00}{\colorbox[HTML]{FFCCCC}{82.50}}/\scalebox{1.00}{\colorbox[HTML]{FFCCCC}{68.20}} & 9.60/6.50 & \scalebox{1.00}{\colorbox[HTML]{d9f2d9}{1.90}}/2.10 & \scalebox{1.00}{\colorbox[HTML]{FFCCCC}{69.62}}/\scalebox{1.00}{\colorbox[HTML]{FFCCCC}{34.16}} & \scalebox{1.00}{\colorbox[HTML]{FFCCCC}{72.66}}/\scalebox{1.00}{\colorbox[HTML]{FFCCCC}{57.94}} & \scalebox{1.00}{\colorbox[HTML]{FFCCCC}{72.50}}/\scalebox{1.00}{\colorbox[HTML]{FFCCCC}{72.20}} & \scalebox{1.00}{\colorbox[HTML]{FFCCCC}{45.60}}/\scalebox{1.00}{\colorbox[HTML]{FFCCCC}{38.30}} & \scalebox{1.00}{\colorbox[HTML]{FFCCCC}{45.18}}/\scalebox{1.00}{\colorbox[HTML]{FFCCCC}{30.72}} \\ 
		\hline
		\multirow{2}{*}{BI-BAU(T)}     & 83.81/2.40 & 84.70/1.60 & 85.48/2.20 & 86.42/4.30 & 86.70/5.10 & 84.61/1.60 & 77.22/1.60 & 77.22/1.60  &  84.54/2.90  & 85.16/16.40 & 83.57/3.97   \\
		
								       & 3.70/2.70 & 8.90/3.60 & 12.30/4.20 & 11.80/3.90 & 13.30/8.30 & 2.80/\scalebox{1.00}{\colorbox[HTML]{d9f2d9}{1.50}} & \scalebox{1.00}{\colorbox[HTML]{FFCCCC}{64.24}}/14.47 & 23.38/\scalebox{1.00}{\colorbox[HTML]{d9f2d9}{6.70}} & 6.10/3.00 & 20.90/17.00 & 16.74/6.53  \\   
		\hline
		\multirow{2}{*}{BI-BAU(U)}     & 83.74/2.70 & 86.78/\scalebox{1.00}{\colorbox[HTML]{d9f2d9}{0.10}} & 86.05/1.30 & 86.38/1.40 & 86.25/3.80 & 83.74/2.10 & 82.29/\scalebox{1.00}{\colorbox[HTML]{d9f2d9}{0.46}} & 83.34/9.86 & 82.84/\scalebox{1.00}{\colorbox[HTML]{d9f2d9}{2.50}} & 80.45/\scalebox{1.00}{\colorbox[HTML]{d9f2d9}{14.90}} & 84.18/\scalebox{1.00}{\colorbox[HTML]{d9f2d9}{3.91}} \\
		 					           
									& 3.60/2.80  & \scalebox{1.00}{\colorbox[HTML]{d9f2d9}{4.30}}/4.70  & 12.80/\scalebox{1.00}{\colorbox[HTML]{d9f2d9}{0.10}} & \scalebox{1.00}{\colorbox[HTML]{d9f2d9}{11.60}}/\scalebox{1.00}{\colorbox[HTML]{d9f2d9}{2.90}} & \scalebox{1.00}{\colorbox[HTML]{d9f2d9}{7.00}}/\scalebox{1.00}{\colorbox[HTML]{d9f2d9}{4.60}}  & 7.00/14.80 & \scalebox{1.00}{\colorbox[HTML]{d9f2d9}{33.77}}/\scalebox{1.00}{\colorbox[HTML]{d9f2d9}{2.40}} & \scalebox{1.00}{\colorbox[HTML]{d9f2d9}{20.92}}/8.38 & \scalebox{1.00}{\colorbox[HTML]{d9f2d9}{4.90}}/\scalebox{1.00}{\colorbox[HTML]{d9f2d9}{2.70}} & \scalebox{1.00}{\colorbox[HTML]{d9f2d9}{19.00}}/\scalebox{1.00}{\colorbox[HTML]{d9f2d9}{14.20}} & \scalebox{1.00}{\colorbox[HTML]{d9f2d9}{12.48}}/\scalebox{1.00}{\colorbox[HTML]{d9f2d9}{5.75}} \\   
        \bottomrule
	\end{tabular}}
	\vspace{-3mm}
\end{table*} 

To evaluate the post-purification robustness of our BI-BAU method, we adopt the idealized backdoor unlearning (IBU) methods as a reference and make a comparison 
with state-of-the-art safety tuning strategies, including BTI-DBF, ANP, RNP, and SAU. The experimental results are presented in Table~\ref{table:post_purification_robustness_on_CIFAR-10}.

As shown in Table~\ref{table:post_purification_robustness_on_CIFAR-10}, since BI-BAU(U) consistently outperforms BI-BAU(T), we focus on BI-BAU(U) as a representative example.
Our BI-BAU method demonstrates strong post-purification robustness, and is the only defense approach that closely approximates the performance of the idealized 
backdoor unlearning (IBU) method. Compared to IBU, BI-BAU achieves a competitive post-purification robustness, with an average R-ASR of 12.48\% (vs. 10.04\%) 
and Q-ASR of 5.75\% (vs. 1.47\%). These results confirm that BI-BAU effectively and nearly thoroughly eliminates backdoor effects, making the purified model 
resistant to backdoor re-activation attacks.

In addition, we observe that, except for PAM, the other methods fail to exhibit post-purification robustness, confirming that they only offer superficial safety 
without truly eliminating the backdoor effects in depth. PAM is the defense method closest to our BI-BAU in terms of performance, achieving strong post-purification 
robustness against some conventional backdoor attacks, such as BadNets, WaNet, and IAD. However, for those backdoor attacks characterized by low orthogonality or 
low linearity, such as Refool, SIG, and Narcissus, PAM fails to provide reliable post-purification robustness. Moreover, we observe that the PAM method sacrifices
a certain degree of CA to achieve thorough backdoor removal. Its average CA (83.14\%) is slightly lower than that of our BI-BAU (84.18\%).
This trade-off highlights a fundamental challenge in backdoor removal: achieving thorough purification often comes at the cost of clean performance,
particularly when defending against more sophisticated attacks characterized by low orthogonality or linearity.

Moreover, \citet{qi2023revisiting} introduce simple yet effective adaptive backdoor poisoning attacks, including Adap-Blend and Adap-Patch, which challenge 
the fundamental assumptions (\eg latent separability), underlying existing defenses and significantly undermine their effectiveness. Remarkably, under 
the more challenging adaptive backdoor attack scenarios, BI-BAU remains the only defense method that achieves post-purification robustness, highlighting 
its superior generalizability and resilience. 

\renewcommand{\arraystretch}{1.5}
\begin{table*}[!ht]
	\centering
	\caption{Post-purification robustness performance (\%) of BI-BAU under backdoor re-activation attacks on ViT.}
	\label{table:post_purification_robustness_on_ViT}
	\resizebox{0.98\textwidth}{!}{\begin{tabular}{c|c|c|c|c|c|c|c|c|c|c|c}
		\toprule
		Attack $\to$ & \multicolumn{1}{c|}{BadNets} & \multicolumn{1}{c|}{Blended} & \multicolumn{1}{c|}{WaNet} & \multicolumn{1}{c|}{Refool} & \multicolumn{1}{c|}{IAD} & \multicolumn{1}{c|}{ISSBA} & \multicolumn{1}{c|}{SIG} & \multicolumn{1}{c|}{Narcissus} &  \multicolumn{1}{c|}{Adap-Blend} & \multicolumn{1}{c|}{Adap-Patch} & \multicolumn{1}{c}{Average}\\ 
		\midrule
		\multirow{2}{*}{Defense $\downarrow$} & CA/ASR & CA/ASR & CA/ASR & CA/ASR & CA/ASR & CA/ASR & CA/ASR & CA/ASR & CA/ASR & CA/ASR & CA/ASR \\
		
		                                     & R-ASR/Q-ASR  & R-ASR/Q-ASR  & R-ASR/Q-ASR  & R-ASR/Q-ASR  & R-ASR/Q-ASR  & R-ASR/Q-ASR  & R-ASR/Q-ASR  & R-ASR/Q-ASR  & R-ASR/Q-ASR  & R-ASR/Q-ASR  & R-ASR/Q-ASR  \\
		\midrule
		\multirow{2}{*}{No Defense}     & 95.20/96.98 & 95.32/100.00 & 94.16/98.30 & 95.60/96.90 & 95.34/99.24 & 95.04/81.70 & 94.83/98.66 & 95.48/61.40 & 95.43/64.90 & 95.34/55.70 & 95.17/85.37 \\
		
								 		& -/-  & -/- & -/-  & -/- & -/- & -/- & -/-  & -/- & -/-  &-/- & -/-  \\   
		\hline

		\multirow{2}{*}{BI-BAU(T)}     & 92.62/2.74 & 82.78/1.10 & 92.46/0.80 & 94.46/1.86 & 92.96/1.92  & 87.76/0.50 & 72.30/9.04 & 93.54/6.54 & 92.98/2.00 & 92.86/2.90  &  89.47/2.94  \\
		
								       & 0.76/0.62 & 8.80/2.50 & 4.10/1.10 & 26.74/1.58 & 47.70/9.64 & 1.10/0.20 & 36.17/18.29 & 11.12/4.32  & 23.20/6.60 & 24.90/6.60  & 18.45/5.08\\   
		\hline
		\multirow{2}{*}{BI-BAU(U)}     & 85.62/3.30 & 73.03/0.30 & 89.98/0.80 & 92.64/3.82 & 91.26/2.10 & 92.74/0.60 & 75.29/10.33 & 91.04/21.52 & 89.87/3.70 & 89.70/7.60 & 87.11/5.40 \\
		
								       & 54.98/28.30  & 6.90/1.60 & 4.70/1.30 & 26.22/5.78 & 10.38/1.88 & 1.70/0.60 & 38.08/33.18 & 20.10/14.86 & 8.40/3.10 & 21.60/12.10 & 19.30/10.27  \\   
        \bottomrule
	\end{tabular}}
	\vspace{-3mm}
\end{table*}

\subsection{Layer-wise Activation Analysis}
\label{appendix:layer-wise_activation_analysis}
To further verify whether ``complete unlearning'' is achieved beyond surface-level performance metrics, we employ neural-level evaluation metrics that directly 
probe the internal representations of the network. In particular, we adopt the \emph{Backdoor Existence Coefficient} (BEC), denoted as $\rho_{\text{BEC}}$, 
recently introduced by \citet{zhu2024breaking}, which quantifies the similarity of backdoor-related activations between a backdoored model and a target (purified) 
model across network layers. The implementation details of BEC are provided in Appendix~\ref{appendix:layer-wise_activation_analysis}.

An effective unlearning method should produce a purified model whose internal feature closely resemble those of a clean model, while substantially diverging from 
the backdoored model. However, the original $\rho_{\text{BEC}}$ may take negative values under certain conditions. To facilitate stable interpretation and comparison, 
we apply a sigmoid transformation $\sigma(\cdot)$ to $\rho_{\text{BEC}}$, mapping it into the range $[0,1]$. A larger value of $\sigma(\rho_{\text{BEC}})$ 
indicates a stronger presence of residual backdoor features in the model, whereas values closer to zero suggest effective removal of backdoor-related activations.
The quantitative results are reported in Table~\ref{table:BEC_for_post-purified_model_on_CIFAR-10}.
\begin{table}[!ht]
	\centering
	\caption{Backdoor Existence Coefficient (BEC) for the post-purified ResNet-18 model.}
	\label{table:BEC_for_post-purified_model_on_CIFAR-10}
	\resizebox{0.96\linewidth}{!}{
	\begin{tabular}{c|c|c|c|c|c|c|c}
		\toprule
		Attack $\to$ & \multicolumn{1}{c|}{BadNets} & \multicolumn{1}{c|}{Blended} & \multicolumn{1}{c|}{WaNet} & \multicolumn{1}{c|}{Refool} & \multicolumn{1}{c|}{IAD} & \multicolumn{1}{c|}{ISSBA} & \multicolumn{1}{c}{Narcissus} \\ 
		\midrule
		Defense $\downarrow$ & \multicolumn{1}{c|}{BEC} & \multicolumn{1}{c|}{BEC} & \multicolumn{1}{c|}{BEC}   & \multicolumn{1}{c|}{BEC}    & \multicolumn{1}{c|}{BEC} & \multicolumn{1}{c|}{BEC}   & \multicolumn{1}{c}{BEC} \\ 
		\midrule
		NC    		&  0.4592  & 0.1911 & 0.6672  & 0.4686 &  0.2055  & 0.4336  & 0.6489 \\
        \hline
		BTI-DBF    	&  0.1639  & 0.1639 & 0.2194  & 0.1428 &  0.0781  & \scalebox{1.00}{\colorbox[HTML]{d9f2d9}{0.0208}}  & 0.1616 \\
		\hline
		ANP    	    &  0.6174  & 0.5930 & 0.5849 &	0.3164  & 0.3000 &	0.2994	& 0.6817 \\
        \hline
        EP     	    &  0.7172  & 0.6206 & 0.7031  & 0.8454 &  0.1646  & 0.6893  & 0.8311 \\
		\hline
		I-BAU 	    &  0.3743  & 0.4898 & 0.6776  & 0.5759 &  0.1445  & 0.4330  & 0.6033 \\
		\hline
		SAU   	    &  0.1730  & \scalebox{1.00}{\colorbox[HTML]{d9f2d9}{0.1150}} & 0.1765  & 0.1018 &  0.1074  & 0.0365  & 0.0976 \\
		\hline				
        BI-BAU (U)  &  \scalebox{1.00}{\colorbox[HTML]{d9f2d9}{0.1501}}  & 0.1164 & \scalebox{1.00}{\colorbox[HTML]{d9f2d9}{0.1764}}  & \scalebox{1.00}{\colorbox[HTML]{d9f2d9}{0.0853}} &  \scalebox{1.00}{\colorbox[HTML]{d9f2d9}{0.0141}} & 0.1855 & \scalebox{1.00}{\colorbox[HTML]{d9f2d9}{0.0287}} \\
        \bottomrule
	\end{tabular}}
\end{table} 

From Table~\ref{table:BEC_for_post-purified_model_on_CIFAR-10}, we observe that BI-BAU consistently achieves the lowest or near-lowest BEC values across a wide 
range of backdoor attacks. This indicates that, at the neural representation level, the post-purified models produced by BI-BAU are significantly closer to 
the clean model than those obtained by other defenses. In contrast, several existing methods exhibit relatively high BEC values, suggesting that residual 
backdoor features remain embedded in their internal activations.
When combined with the robustness results reported in Table~\ref{table:post_purification_robustness_on_CIFAR-10}, these findings provide strong and complementary 
evidence that BI-BAU does not merely suppress backdoor behaviors at the output level, but instead fundamentally removes backdoor-related representations 
across the deep layers of the network. This neural-level consistency with the clean model aligns well with our theoretical definition of \emph{complete backdoor unlearning}, and 
further validates the effectiveness of BI-BAU as a principled and thorough post-purification defense.

\subsection{Experiments on the Vision Transformer Architecture}
\label{subsection:experiment_for_ViT_on_CIFAR-10}

With the widespread adoption of Transformer-based architectures, Vision Transformers (ViTs)~\cite{dosovitskiy2020image} have become a dominant paradigm in modern 
vision systems. To evaluate the effectiveness of our proposed BI-BAU method in defending against backdoors embedded in transformer-based architectures~\cite{dosovitskiy2020image}, 
we conduct experiments using the vision encoder of CLIP (ViT-B/32)~\cite{radford2021learning} as the backbone, followed by a linear classification head for image classification.
Specifically, we initialize the model with publicly available CLIP weights pretrained on 400M image-text pairs, inject 10\% poisoned samples into the CIFAR-10 training set, 
and perform full-model fine-tuning for 10 epochs to obtain a backdoored ViT model. The experimental results are summarized in Table~\ref{table:post_purification_robustness_on_ViT}.

From Table~\ref{table:post_purification_robustness_on_ViT}, we observe that BI-BAU still achieves strong post-purification robustness. For BI-BAU(T), the average 
Q-ASR is only 5.08\% with a corresponding CA drop of 5.70\%, while BI-BAU(U) achieves 10.27\% Q-ASR with an 8.06\% reduction in CA. This favorable trade-off indicates 
that BI-BAU effectively suppresses residual backdoor behaviors without substantially degrading benign performance.
Notably, these results are obtained on a large-scale Vision Transformer backbone (CLIP ViT-B/32), whose global self-attention and high-capacity representations 
can make backdoor persistence and reactivation more challenging. The strong performance of BI-BAU on this architecture demonstrates its generalizability beyond 
convolutional networks and highlights its applicability to a wide range of modern models, including large pretrained foundation models widely used in practice.

\subsection{Evaluation on Multi-Modal Contrastive Learning}
\label{section:evaluation_on_MCL}

To assess the robustness of BI-BAU in real-world multi-modal scenarios, we evaluate it against representative backdoor attacks in contrastive learning. Beyond 
the classical BadNets~\cite{gu2019badnets}, we consider the state-of-the-art BadCLIP~\cite{liang2024badclip}, which is specifically designed for vision-language 
models (\eg CLIP) to resist backdoor unlearning via fine-tuning strategies and demonstrates strong resilience against existing SoTA defenses. 
We compare BI-BAU with CleanCLIP~\cite{bansal2023cleanclip}, a defense method tailored for CLIP backdoor mitigation, and evaluate the purified models via zero-shot 
classification on the ImageNet-1K validation set, with results reported in Figure~\ref{fig:robustness_performance_for_MCL_model}.
\begin{figure}[!h]
	\graphicspath{{figures/re_activation/clip}}
	\centering
	\begin{subfigure}[b]{0.23\textwidth}
		\centering
		\includegraphics[width=\linewidth]{BadNets.pdf}
		\caption{BadNets}
	\end{subfigure}
	\begin{subfigure}[b]{0.23\textwidth}
		\centering
		\includegraphics[width=\linewidth]{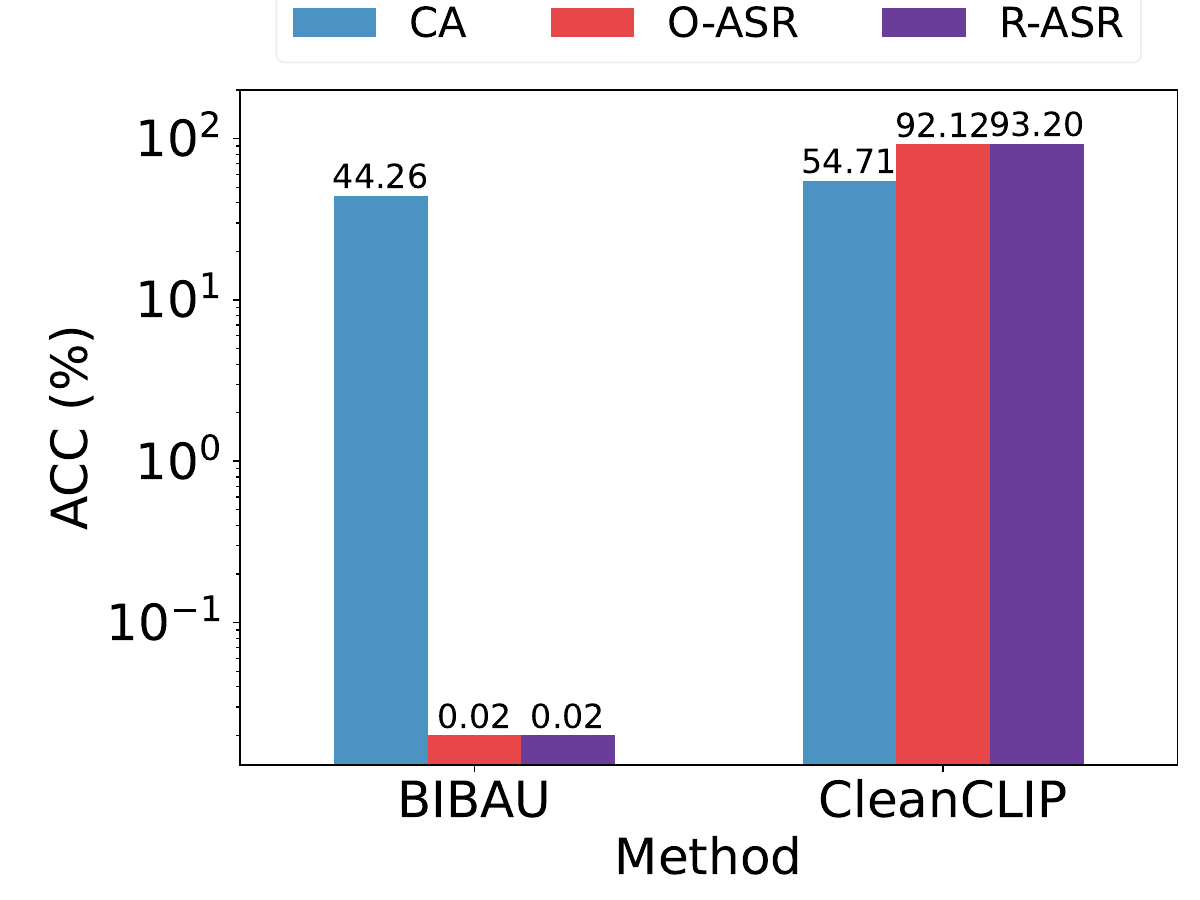}
		\caption{BadCLIP}
	\end{subfigure}
	\caption{\small Post-purification robustness performance (\%) of BI-BAU in Multi-Modal Contrastive Learning}
	\label{fig:robustness_performance_for_MCL_model}
	\vspace{-3mm}
\end{figure}

As shown in Figure~\ref{fig:robustness_performance_for_MCL_model}, although CleanCLIP can partially mitigate BadNets, it fails to fundamentally remove 
the backdoor. In particular, the residual backdoor can be readily reactivated by the Retuning Attack (RA), resulting in a high re-activation attack success
rate (R-ASR) of up to 42.70\%. More critically, CleanCLIP exhibits severe vulnerability to BadCLIP attack, reaching an ASR of 92.12\%, highlighting its limited 
robustness against advanced multimodal backdoors. In contrast, BI-BAU consistently achieves low ASR and R-ASR under both BadNets and BadCLIP attacks, demonstrating 
effective backdoor elimination. By maintaining strong resistance to backdoor reactivation, BI-BAU performs \emph{deep backdoor removal}, rather than merely 
suppressing malicious behaviors at inference time. Notably, the observed marginal drop in CA under these extreme attack scenarios reflects a natural trade-off 
between complete backdoor removal and maintaining model performance. These results validate BI-BAU as a principled and robust post-purification defense for 
multi-modal contrastive learning systems. 

\subsection{Robustness to Adaptive Attacks}

Qi~\etal~\cite{qi2023revisiting} introduce several simple yet effective adaptive backdoor attacks, namely Adap-Blend and Adap-Patch, as counter-examples 
to the latent separability assumption~\cite{geirhos2020shortcut,huang2022backdoor}, thereby circumventing most existing defenses~\cite{xu2024towards,wu2021adversarial,wei2023shared}.
Specifically, they employ a mixed-label strategy, where a fraction of trigger-injected samples are randomly retained with their correct semantic labels.
In addition, they use asymmetric trigger planting, in which weakened triggers are used during data poisoning while the original standard trigger is applied at 
test time to activate the backdoor. According to our analysis, the effectiveness of these attacks arises from their promotion of coupling between the clean and 
backdoor tasks.

Building upon Adap-Blend and Adap-Patch, we further design adaptive attacks tailored to our BI-BAU framework. Specifically, we introduce a feature coupling 
strategy that explicitly promotes the alignment between backdoor features and target-class features during poisoning training, for example by minimizing their 
Euclidean distance, thereby further increasing the coupling between the clean and backdoor tasks to evade BI-BAU. As illustrated in Figure~\ref{fig:t-SNE_visualization_for_adaptive_ttack}, 
the t-SNE visualization~\cite{van2008visualizing} shows that backdoor and clean features become highly entangled under such adaptive attacks.
\begin{figure}[!h]
	\graphicspath{{figures/adaptive_attack/}}
	\centering
	\begin{subfigure}[b]{0.23\textwidth}
		\centering
		\includegraphics[width=\linewidth]{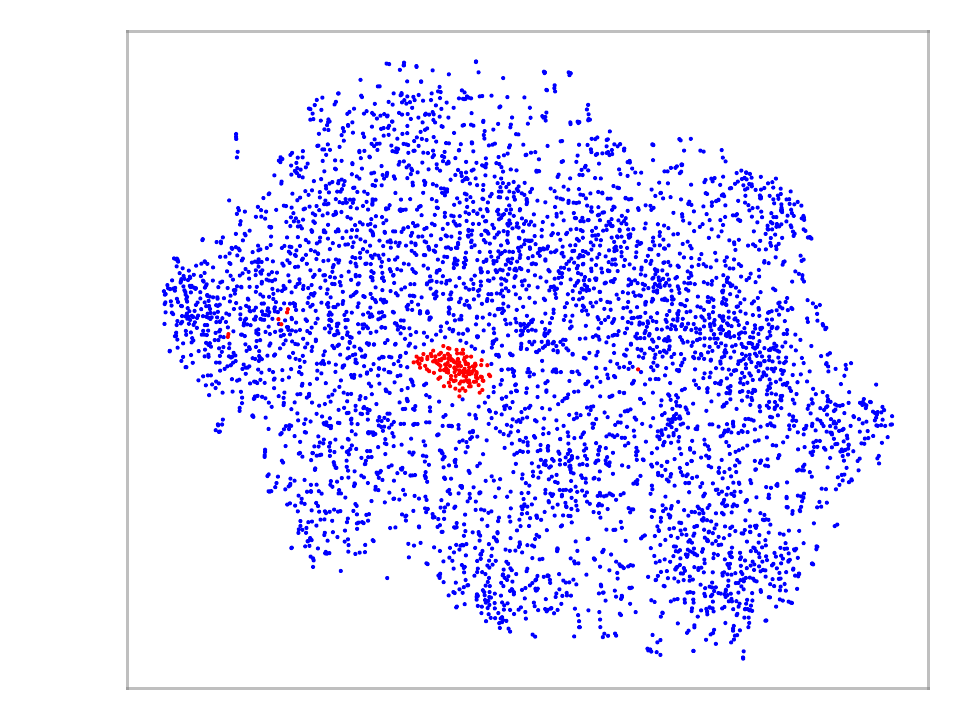}
		\caption{Enhanced Adap-Blend}
	\end{subfigure}
	\begin{subfigure}[b]{0.23\textwidth}
		\centering
		\includegraphics[width=\linewidth]{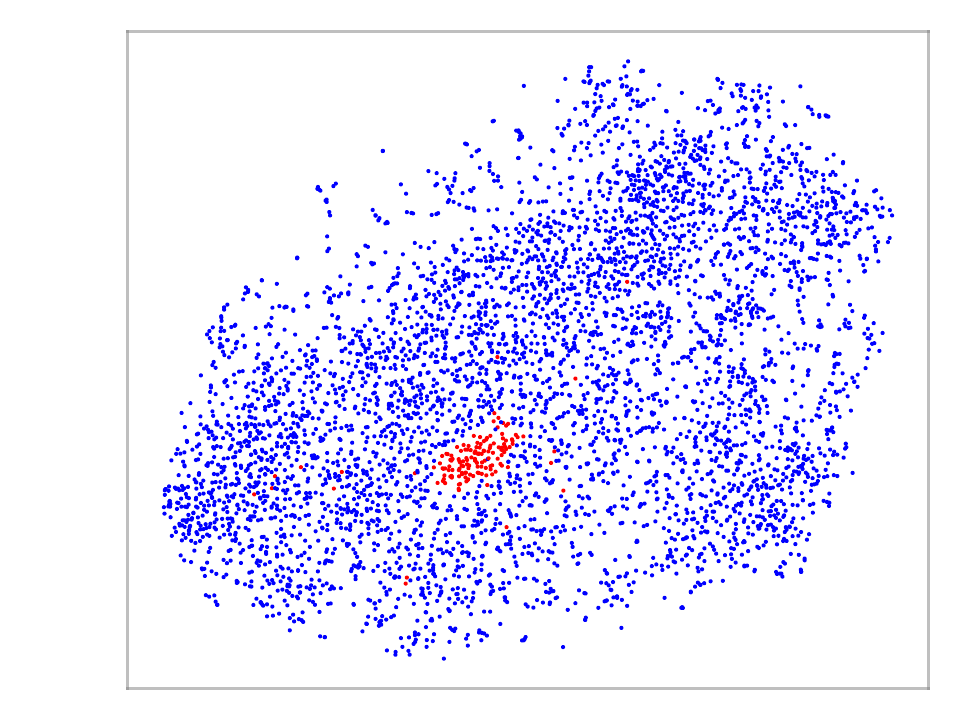}
		\caption{Enhanced Adap-Patch}
	\end{subfigure}
	\caption{\small t-SNE visualizations of latent representations on CIFAR-10 under the enhanced Adap-Blend and Adap-Patch attacks. The figure shows target-class 
	clean samples (blue) and triggered poisoned samples (red, originally from other classes), which are highly entangled in feature space.}
	\label{fig:t-SNE_visualization_for_adaptive_ttack}
	\vspace{-3mm}
\end{figure}

Subsequently, we evaluate the robustness of BI-BAU against such adaptive attacks, with the results summarized in the Table~\ref{table:robustness_against_adaptive_attack}.
We observe that under the enhanced Adap-Blend and Adap-Patch attacks, BI-BAU maintains strong robustness, achieving average R-ASR and Q-ASR of 7.70\% and 5.05\%, 
respectively, demonstrating its resilience against adaptive attacks. However, the strong entanglement between the clean and backdoor tasks causes the alignment 
between the unlearning task and the backdoor task to inevitably interfere with the clean task, resulting in a moderate degradation of model performance (with an 
average CA drop of 17.61\%). Given the practical threat posed by re-activation attacks, BI-BAU is deliberately designed to prioritize robust backdoor removal, 
even at the cost of a moderate reduction in clean accuracy.
\renewcommand{\arraystretch}{1.2}
\begin{table}[!ht]
	\centering
	\caption{Post-purification robustness (\%) of BI-BAU against adaptive attacks (ResNet-18, CIFAR-10).}
	\label{table:robustness_against_adaptive_attack}
	\resizebox{0.45\textwidth}{!}{\begin{tabular}{c|c|c|c}
		\toprule
		Attack $\to$ & \multicolumn{1}{c|}{Enhanced Adap-Blend} & \multicolumn{1}{c|}{Enhanced Adap-Patch} & \multicolumn{1}{c}{Average}\\ 
		\midrule
		\multirow{2}{*}{Defense $\downarrow$} & CA/ASR & CA/ASR & CA/ASR \\
		
		                                     & R-ASR/Q-ASR  & R-ASR/Q-ASR  & R-ASR/Q-ASR   \\
		\midrule
		\multirow{2}{*}{No Defense}     & 92.46/81.40 & 92.26/92.20 & 94.16/98.30 \\
		
								 		& -/-  & -/- & -/- \\   
		\hline
		\multirow{2}{*}{BI-BAU(U)}     & 74.76/2.30 & 74.75/9.60 & 74.75/5.95 \\
		
								       & 6.60/2.30 & 8.80/7.80 & 7.70/5.05  \\   
        \bottomrule
	\end{tabular}}
	\vspace{-3mm}
\end{table}

\subsection{Efficiency Evaluation}
\label{subsection:efficiency_and_computational_cost}
We compare the computational efficiency of BI-BAU against all baseline defenses, as summarized in Figure~\ref{fig:computational_cost}.
\begin{figure}[htbp]
	\vspace{-3mm}
    \graphicspath{{figures/computing_time/}}
    \centering  
    \includegraphics[width=0.35\textwidth]{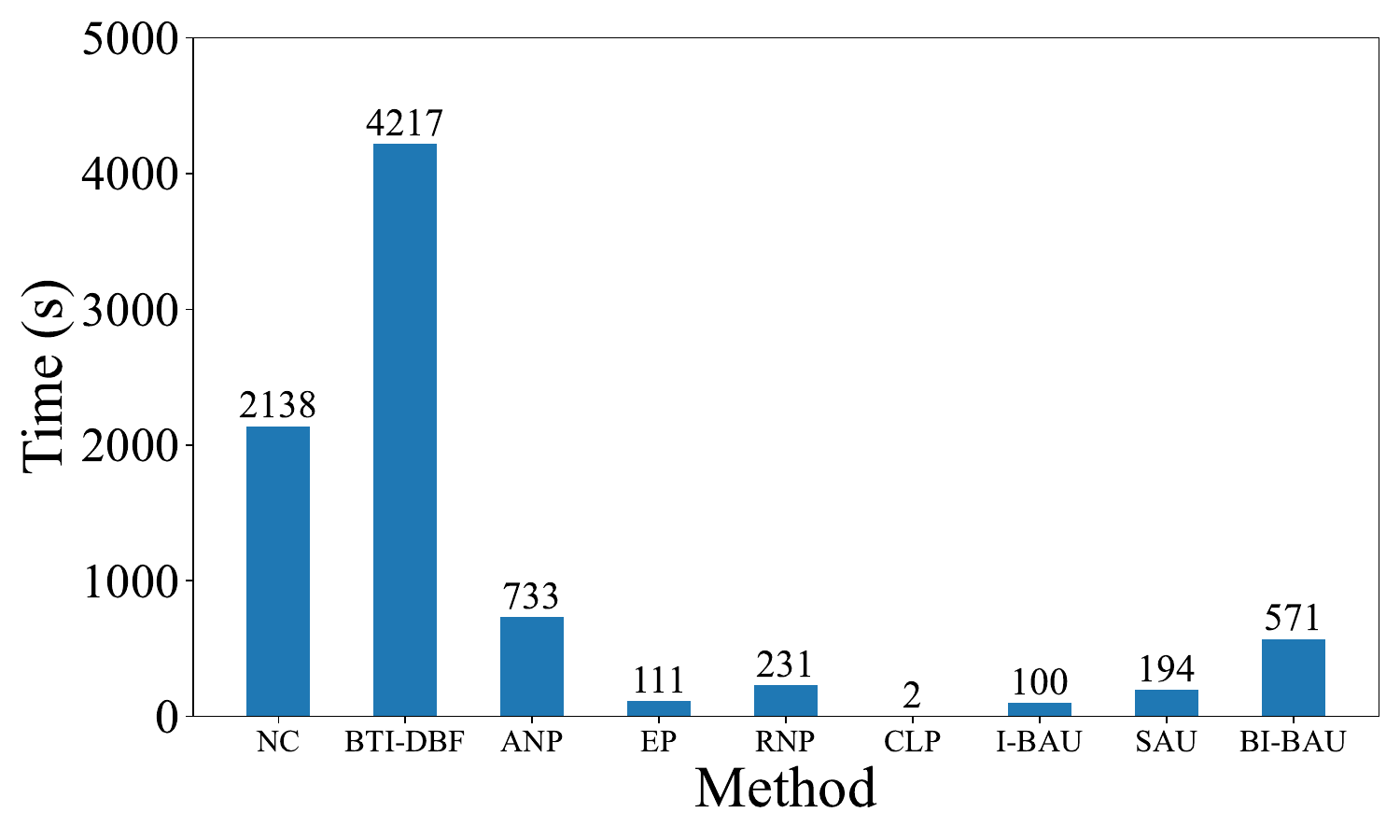}
	\caption{Computational Cost}
	\label{fig:computational_cost}
	\vspace{-3mm}
\end{figure}

We observe that trigger synthesis-based methods, including NC and BTI-DBF, incur the highest computational overhead, primarily due to the need to train additional 
models for backdoor trigger reconstruction. In contrast, model reconstruction-based methods (\ie EP, ANP, RNP, and CLP) are typically the most efficient—
for example, CLP requires only around 2 seconds—since they directly identify and prune suspicious backdoor-related parameters. However, such approaches often face 
a challenging trade-off between model robustness and performance, as they cannot precisely disentangle clean and backdoor parameters, particularly under attacks 
with low orthogonality or low linearity.

Backdoor adversarial unlearning methods (\ie I-BAU, SAU, and BI-BAU) avoid both trigger synthesis and explicit parameter-space pruning, thereby achieving 
a more favorable trade-off between efficiency and robustness. BI-BAU requires slightly more time than SAU and I-BAU under the same fine-tuning settings, due to 
additional inner optimization iterations, but remains within the same order of magnitude.

Furthermore, since BI-BAU is built upon standard adversarial training, we provide a detailed analysis of its computational overhead under different model architectures, 
including ResNet-18 and ViT-B/32. On CIFAR-10, we train for 20 epochs, using a batch size of 256 for ResNet-18 and 64 for ViT-B/32, respectively. The results show 
that BI-BAU incurs a modest additional overhead of 110 seconds for ResNet-18 (308s vs 418s) and 369 seconds for ViT-B/32 (578s vs 947s) compared to standard training. 
We argue that this modest overhead is justified by the substantially improved thoroughness of backdoor removal, especially in light of the practical threat posed 
by re-activation attacks.

\subsection{Ablation Study}
\label{subsection:ablation_study}
In this section, we conduct ablation studies to examine the roles of individual components in our BI-BAU framework. As presented in Section~\ref{section:proposed_method}, 
BI-BAU augments standard adversarial training (AT) with regularization constraints applied to both the inner maximization and outer minimization steps.
Specifically, for the inner maximization step, we impose two constraints on adversarial example generation: \ding{172} a data likelihood term $\ell_{kd}$ and \ding{173} 
a prior regularization term $\ell_{prior}$, weighted by hyperparameters $\lambda_1$ and $\lambda_2$. For the outer minimization step, which targets the backdoor unlearning, 
we introduce two regularization terms on model parameters: \ding{174} $\ell_{ewc}$ and \ding{175} $\ell_{inv\text{-}ewc}$, with their contributions balanced by 
the hyperparameters $\gamma_1$ and $\gamma_2$, respectively.

These regularizers are grouped into two sets: Constraint Set A, which includes terms \ding{172} and \ding{174}, which aim to preserve clean accuracy; 
and Constraint Set B, which includes terms~\ding{173} and~\ding{175}, which focuses on mitigating backdoor effects. If we denote the original adversarial training 
paradigm as ``Base AT'',  then BI-BAU can be expressed as: ``Base AT + A + B''. We evaluate four configurations, including ``Base AT'', ``Base AT + A + B'', 
``Base AT + A'', and ``Base AT + B'', to empirically validate the contributions of each key component. The experimental results are summarized in Table~\ref{table:ablation_study}, 
where \scalebox{0.75}{[\colorbox[HTML]{FFCCCC}{red}]} and \scalebox{0.75}{[\colorbox[HTML]{d9f2d9}{green}]} indicate results with poor defense performance (\ie high ASR) 
and those with favorable defense performance (\ie high CA or low ASR), respectively.
\begin{table}[!ht]
	\centering
	\caption{Post-purification robustness performance (\%) of Based AT and different variants of BI-BAU}
	\label{table:ablation_study}
	\resizebox{0.48\textwidth}{!}{\begin{tabular}{c|c|c|c|c|c}
		\toprule
		Attack $\to$ & \multicolumn{1}{c|}{BadNets} & \multicolumn{1}{c|}{Refool} & \multicolumn{1}{c|}{IAD} & \multicolumn{1}{c|}{ISSBA}  & \multicolumn{1}{c}{Average}\\ 
		\midrule
		\multirow{2}{*}{Condition $\downarrow$} & CA/ASR & CA/ASR & CA/ASR & CA/ASR & CA/ASR  \\
		
		                                        & R-ASR/Q-ASR & R-ASR/Q-ASR & R-ASR/Q-ASR & R-ASR/Q-ASR & R-ASR/Q-ASR \\
		\midrule
		\multirow{2}{*}{No Defense}     & 90.00/99.70 & 91.25/99.30 & 91.96/99.90 & 88.18/99.10 & 90.34/99.50     \\
		
								 		&     -/-     &      -/-     &      -/-    &      -/-    &  -/-  \\   
		\hline
		\multirow{2}{*}{Base AT}        & 83.96/17.30   & 86.91/0.30  & 86.54/2.70   & 83.07/1.00  & 85.12/5.32  \\
	
								        & \scalebox{1.00}{\colorbox[HTML]{FFCCCC}{96.70}}/\scalebox{1.00}{\colorbox[HTML]{FFCCCC}{95.10}} & \scalebox{1.00}{\colorbox[HTML]{FFCCCC}{50.50}}/4.10  & \scalebox{1.00}{\colorbox[HTML]{FFCCCC}{37.00}}/\scalebox{1.00}{\colorbox[HTML]{FFCCCC}{14.00}} & \scalebox{1.00}{\colorbox[HTML]{FFCCCC}{78.50}}/\scalebox{1.00}{\colorbox[HTML]{FFCCCC}{18.10}} & \scalebox{1.00}{\colorbox[HTML]{FFCCCC}{65.67}}/\scalebox{1.00}{\colorbox[HTML]{FFCCCC}{32.82}} \\   
		
		\hline
		\multirow{1}{*}{Base AT + A + B} & \scalebox{1.00}{\colorbox[HTML]{d9f2d9}{83.31}}/\scalebox{1.00}{\colorbox[HTML]{d9f2d9}{1.90}} & \scalebox{1.00}{\colorbox[HTML]{d9f2d9}{86.21}}/\scalebox{1.00}{\colorbox[HTML]{d9f2d9}{0.80}} & \scalebox{1.00}{\colorbox[HTML]{d9f2d9}{85.53}}/\scalebox{1.00}{\colorbox[HTML]{d9f2d9}{1.10}} & \scalebox{1.00}{\colorbox[HTML]{d9f2d9}{81.60}}/\scalebox{1.00}{\colorbox[HTML]{d9f2d9}{2.60}} & \scalebox{1.00}{\colorbox[HTML]{d9f2d9}{84.16}}/\scalebox{1.00}{\colorbox[HTML]{d9f2d9}{1.60}}  \\

		\multirow{1}{*}{(BI-BAU)}	  & \scalebox{1.00}{\colorbox[HTML]{d9f2d9}{4.80}}/\scalebox{1.00}{\colorbox[HTML]{d9f2d9}{2.20}} & \scalebox{1.00}{\colorbox[HTML]{d9f2d9}{7.90}}/\scalebox{1.00}{\colorbox[HTML]{d9f2d9}{1.40}} & \scalebox{1.00}{\colorbox[HTML]{d9f2d9}{6.80}}/\scalebox{1.00}{\colorbox[HTML]{d9f2d9}{2.90}} & \scalebox{1.00}{\colorbox[HTML]{d9f2d9}{2.50}}/\scalebox{1.00}{\colorbox[HTML]{d9f2d9}{1.80}} & \scalebox{1.00}{\colorbox[HTML]{d9f2d9}{5.50}}/\scalebox{1.00}{\colorbox[HTML]{d9f2d9}{2.07}} \\

		\hline
		\multirow{2}{*}{Base AT + A}     & \scalebox{1.00}{\colorbox[HTML]{d9f2d9}{87.27}}/40.10 & \scalebox{1.00}{\colorbox[HTML]{d9f2d9}{88.68}}/0.50 & \scalebox{1.00}{\colorbox[HTML]{d9f2d9}{87.65}}/0.00  & \scalebox{1.00}{\colorbox[HTML]{d9f2d9}{84.73}}/0.00 & \scalebox{1.00}{\colorbox[HTML]{d9f2d9}{87.08}}/10.15 \\

		                                 & \scalebox{1.00}{\colorbox[HTML]{FFCCCC}{98.00}}/\scalebox{1.00}{\colorbox[HTML]{FFCCCC}{98.20}} & \scalebox{1.00}{\colorbox[HTML]{FFCCCC}{51.70}}/\scalebox{1.00}{\colorbox[HTML]{FFCCCC}{29.60}} & \scalebox{1.00}{\colorbox[HTML]{FFCCCC}{49.90}}/\scalebox{1.00}{\colorbox[HTML]{FFCCCC}{13.80}} & \scalebox{1.00}{\colorbox[HTML]{FFCCCC}{98.90}}/\scalebox{1.00}{\colorbox[HTML]{FFCCCC}{22.40}} & \scalebox{1.00}{\colorbox[HTML]{FFCCCC}{74.62}}/\scalebox{1.00}{\colorbox[HTML]{FFCCCC}{41.00}}  \\
		\hline
		\multirow{2}{*}{Base AT + B}     & 79.12/2.10 & 81.73/1.50  & 81.92/6.30 & 80.53/2.10  & 80.82/3.00    \\

										 & \scalebox{1.00}{\colorbox[HTML]{d9f2d9}{3.40}}/\scalebox{1.00}{\colorbox[HTML]{d9f2d9}{2.00}} & \scalebox{1.00}{\colorbox[HTML]{d9f2d9}{10.60}}/\scalebox{1.00}{\colorbox[HTML]{d9f2d9}{2.20}} & \scalebox{1.00}{\colorbox[HTML]{d9f2d9}{11.50}}/\scalebox{1.00}{\colorbox[HTML]{d9f2d9}{6.90}} & \scalebox{1.00}{\colorbox[HTML]{d9f2d9}{1.80}}/\scalebox{1.00}{\colorbox[HTML]{d9f2d9}{0.30}} & \scalebox{1.00}{\colorbox[HTML]{d9f2d9}{6.82}}/\scalebox{1.00}{\colorbox[HTML]{d9f2d9}{2.85}} \\
        \bottomrule
	\end{tabular}}
	\vspace{-3mm}
\end{table}

From Table~\ref{table:ablation_study}, we observe that although ``Base AT'' can alleviate backdoor effects to a certain extent, it remains insufficient for their 
complete elimination. By comparing it with the ``Base AT + B'' setting, we conclude that Constraint Set B plays a critical role in completely removing backdoor 
effects, albeit with a slight drop in clean accuracy (CA). In contrast, the comparison between ``Base AT + A'' and ``Base AT'' reveals that Constraint Set A 
contributes to the preservation of the model's CA, but results in a slightly higher ASR. Overall, the ``Base AT + A + B'' setting, \ie our proposed BI-BAU, achieves 
the best performance, striking a desirable balance between maintaining model performance on clean task and effectively eliminates backdoor effects.

\subsection{Failure Analysis and Limitations}

Our method is based on the principle of aligning the unlearning task with the backdoor task while remaining orthogonal to the clean task. Consequently, when 
the clean and backdoor tasks exhibit high similarity, achieving this balance becomes challenging. Although our experiments demonstrate that the method exhibits 
a degree of robustness against the entanglement of clean and backdoor features, in extreme scenarios where this coupling is very high, adaptive attacks may 
circumvent our defense, thereby limiting its effectiveness. Additionally, BI-BAU is currently tailored for visual models, including ResNet and ViT architectures,
as well as multi-modal contrastive learning models such as CLIP. While the framework is theoretically extensible to other domains, including large language models (LLMs), 
practical adaptation for the principle of task-aligned backdoor unlearning may depend on the model architecture and task specifics.

%% file: section/Conclusion.tex
\section{Conclusion}
To the best of our knowledge, we are the first to formally define complete backdoor unlearning from the perspective of catastrophic forgetting within a continual 
learning framework, and derive the necessary conditions for achieving it. Building on this foundation, we propose BI-BAU method, which integrates a bi-level 
optimization process of adversarial training within an EM algorithm framework to approximate the complete backdoor unlearning objective. Extensive experiments 
demonstrate that our method is broadly applicable across a wide range of backdoor threats, and can effectively and thoroughly eliminate the backdoor effects. 
In the future, further exploration is excepted to extend this framework to generative models, including large language models (LLMs) and diffusion models, to 
further enhance its generalizability.

%% file: section/Acknowledgment.tex
\newpage
\section*{Acknowledgements}
We thank the anonymous reviewers for their valuable comments and suggestions. 
This paper was edited for grammar, clarity, and minor stylistic improvements using ChatGPT. 
All technical content, experimental design, and conclusions are solely the responsibility of the authors.

%% file: section/Supplementary.tex

\appendix

\section*{Open Science}
\label{appendix:open_science}
To support the evaluation of this paper's core contributions, we provide the following artifacts:

\begin{itemize}
    \item \textbf{Source Code:} Full implementation of the proposed BI-BAU defense method and evaluation pipeline.
    \item \textbf{Configuration Files:} Model settings, hyperparameters, and experimental setups.
    \item \textbf{Scripts and Documentation:} Scripts for running experiments, reproducing results, and accompanying README with usage instructions.
    \item \textbf{Environment:} Conda environment file (`requirements.yml`) specifying dependencies for replication.
\end{itemize}

All artifacts are accessible to the program committee via the anonymous repository: \url{https://anonymous.4open.science/r/BI-BAU-0634/}.

All artifacts contain no identifying information to ensure compliance with double-blind review.  
No restricted or sensitive data are included. The provided materials allow the program committee to fully reproduce the main experimental results reported in the paper.

\section*{Organization of the Appendix}
For clarity and ease of navigation, the Appendix is organized as follows:\\\
\begin{itemize}
	\item \textbf{Section~\ref{appendix:summary_of_symbols}} provides a summary of the symbols used throughout the paper. 
	\item \textbf{Section~\ref{appendix:theoretical_proof}} presents the theoretical proofs included in this work. 
	\item \textbf{Section~\ref{appendix:extending_BI-BAU_to_multi_modal_contrastive_learning}} details the implementation of extending BI-BAU to multi-modal 
	contrastive learning.
	\item \textbf{Section~\ref{appendix:additional_implementation_details}} describes additional implementation details for the experiments.
	\item \textbf{Section~\ref{appendix:additional_experiments}} presents complementary experimental results.
\end{itemize}

\section{Summary of Symbols}
\label{appendix:summary_of_symbols}
The comprehensive list of all notations is summarized in Table~\ref{table:all_notations}.

\begin{table}[!h]
\centering
\caption{Glossary of Notations.}
\label{table:all_notations}
\setlength{\tabcolsep}{6pt}
\renewcommand{\arraystretch}{1.0}
\resizebox{0.98\linewidth}{!}{ 
\begin{tabular}{p{0.22\linewidth} p{0.70\linewidth}}
\toprule

\multicolumn{2}{l}{\textbf{Spaces and Datasets}} \\
\midrule
$\mathbb{R}^{n_0}$ & Input space \\
$\mathbb{R}^{K}$ & Label space with $K$ classes \\
$\mathcal{X}$ & Input set \\
$\mathcal{Y}$ & Label set \\
$\mathcal{D} = D_c \cup D_p$ & Training dataset composed of clean and poisoned samples \\
$D_c$ & Clean subset of the training data \\
$D_p$ & Poisoned (backdoor) subset of the training data \\
$(x,y)$ & Input sample $x$ with ground-truth label $y$ \\
$y_t$ & Target label of the backdoor attack \\
$\tilde{x}$ & Adversarial example corresponding to $x$ \\

\midrule
\multicolumn{2}{l}{\textbf{Model Architecture}} \\
\midrule
$ \scriptstyle f = \text{FC} \circ f_{\text{backbone}}$ & Feed-forward network with $L$ hidden layers  \\
$f_{\text{backbone}}$ & Feature extractor (backbone network) \\
$\text{FC}$ & Fully connected classification head \\
$f_l$ & The $l$-th layer of the network \\
$\phi$ & Pointwise activation function \\
$h^l(x)$ & Pre-activation values at layer $l$ \\
$z^l(x)$ & Post-activation values at layer $l$ \\
$W^{l+1}$ & Weight matrix of layer $l+1$ \\
$b^{l+1}$ & Bias vector of layer $l+1$ \\
$\scriptstyle z(x) = f_{\text{backbone}}(x)$ & Feature representation of input $x$ \\

\midrule
\multicolumn{2}{l}{\textbf{Tasks and Learning Process}} \\
\midrule
$\mathcal{T}_{\tau}$ & A supervised learning task indexed by $\tau$ \\
$X^{\tau}$ & Dataset associated with task $\mathcal{T}_{\tau}$ \\
$\tau_c$ & Clean task \\
$\tau_b$ & Backdoor task \\
$\tau_u$ & Unlearning task \\

\midrule
\multicolumn{2}{l}{\textbf{Model Parameters and Optimization}} \\
\midrule
$\theta$ & Model parameters of $f$ \\
$\theta_{\tau}$ & Parameters during training on task $\tau$ \\
$\theta_{\tau}^{*}$ & Learned parameters after completing task $\tau$ \\
$\langle \cdot, \cdot \rangle$ & Euclidean inner product \\
$\|\cdot\|_2$ & Euclidean norm of a vector (or spectral norm of a matrix) \\

\bottomrule
\end{tabular}}
\end{table}

\section{Theoretical Proof}
\label{appendix:theoretical_proof}
\subsection{Proof of Proposition~\ref{proposition:proposition_1}}
\label{proof:proof_of_proposition_1}

\begin{proof}
For one of the backdoor unlearning objective $\Delta^{\tau_c \to \tau_u}(X^{\tau_c}) = 0$, we have:
\begin{equation}
\resizebox{0.90\linewidth}{!}{$
\begin{aligned}
    \Delta^{\tau_c->\tau_u}(X^{\tau_c}) &= \left \| \sigma^{\tau_c \to \tau_u} (X^{\tau_c})\right \|^2_2 \\
    & = \sum_{(x_c,y)\in X^{\tau_c}}(f^{*}_{\tau_u}(x_c)-f^{*}_{\tau_c}(x_c))^2 \\
    & = \sum_{(x_c,y)\in X^{\tau_c}}(f^{*}_{\tau_u}(x_c)-f^{*}_{\tau_b}(x_c) + f^{*}_{\tau_b}(x_c) - f^{*}_{\tau_c}(x_c))^2 \\
    & = \sum_{(x_c,y)\in X^{\tau_c}}[\nabla_{\theta} f^{*}_{\tau_b}(x_c)(\theta_{\tau_u}^{*}-\theta_{\tau_b}^{*}) + \nabla_{\theta} f^{*}_{\tau_c}(x_c) (\theta_{\tau_b}^{*}-\theta_{\tau_c}^{*})]^2.
\end{aligned}
$}
\end{equation}
Here, we used constant NTK assumption, \ie $\nabla_{\theta} f^{*}_{\tau}(x) = \nabla_{\theta} f_{0}(x)$,  Consequently, we have
\begin{equation}
	\resizebox{0.90\linewidth}{!}{$
    \Delta ^{\tau_c->\tau_u}(X^{\tau_c}) = \sum_{(x_c,y)\in X^{\tau_c}}(\nabla_{\theta} f_{0}(x_c)[(\theta_{\tau_u}^{*}-\theta_{\tau_b}^{*}) + (\theta_{\tau_b}^{*}-\theta_{\tau_c}^{*})])^2.
	$}
\end{equation}
Due to the orthogonality between the backdoor task $\tau_{b}$ and clean task $\tau_{c}$, namely $\nabla_{\theta}f_{0}(x_c) \perp (\theta_{\tau_b}^{*}-\theta_{\tau_c}^{*})$,
we have 
\begin{equation}
    \left \langle  \nabla_{\theta}f_{0}(x_c), (\theta_{\tau_b}^{*}-\theta_{\tau_c}^{*}) \right \rangle = 0,  \forall x_c \in X^{\tau_c}.
\end{equation}
Thus, if $\Delta ^{\tau_c->\tau_u}(X^{\tau_c}) = 0$, we have 
\begin{equation}
   \left \langle \nabla_{\theta} f_{0}(x_c), \theta_{\tau_u}^{*}-\theta_{\tau_b}^{*} \right \rangle = 0, 
\end{equation}
\ie
\begin{equation}
	\nabla_{\theta} f_{0}(x_c) \perp (\theta_{\tau_u}^{*}-\theta_{\tau_b}^{*}), \forall x_c \in X^{\tau_c}.
\end{equation}
Given that both $\theta_{\tau_u}^{*}-\theta_{\tau_b}^{*}$ and $\theta_{\tau_{c}}^{*} - \theta_{0}$ can be regarded as approximate gradient directions for 
the unlearning task $\tau_{u}$ and the clean task $\tau_{c}$, respectively, it follows that
\begin{equation}
    \nabla_{\theta}f_{0}(x_u,\theta) \perp (\theta_{\tau_{c}}^{*} - \theta_{0}), \forall x_u \in X^{\tau_u}.
\end{equation}
Similarly, for $\Delta^{\tau_c \to \tau_u}(X^{\tau_b}) = 0$, we have
\begin{align}
    \Delta ^{\tau_c->\tau_u}(X^{\tau_b}) & = \left \| \sigma^{\tau_c \to \tau_u} (X^{\tau_b})\right \|^2_2 \\
    & = \sum_{(x_b,y)\in X^{\tau_b}}(\nabla_{\theta} f_{0}(x_b)[(\theta_{\tau_{u}}^{*}-\theta_{\tau_{b}}^{*}) + (\theta_{\tau_{b}}^{*}-\theta_{\tau_{c}}^{*})])^2.
\end{align}
Thus, we obtain $(\theta_{\tau_{u}}^{*}-\theta_{\tau_{b}}^{*}) = -(\theta_{\tau_{b}}^{*}-\theta_{\tau_{c}}^{*})$, \ie
\begin{equation}
    \nabla_{\theta}f_{0}(x_u,\theta) \parallel (\theta_{\tau_{b}}^{*}-\theta_{\tau_{c}}^{*}), \forall x_u \in X_{\tau_u}. \\
\end{equation}
\end{proof}

\subsection{Proof of Proposition~\ref{proposition:proposition_2}}
\label{proof:proof_of_proposition_2}

\begin{proof}

For $\nabla_{\theta}f_{0}(x,\theta)$,based on Equation~\ref{eq:recurrence}, we have
\begin{align}
    \frac{\partial f(x,w)}{\partial w^{l+1}} &=  \frac{\partial f(x,w)}{\partial h^{l+1}} \cdot \frac{\partial h^{l+1}}{\partial w^{l+1}} \nonumber \\
    & = \frac{\partial f(x,w)}{\partial h^{l+1}} \cdot  z^{l}  \nonumber \\
    & = \frac{\partial f(x,w)}{z^{l+1}} \cdot \frac{\partial z^{l+1}}{\partial h^{l+1}} \cdot  z^{l}.
\end{align}
If the gradients of the backdoor unlearning task $\tau_u$ are aligned with those of the backdoor task $\tau_b$, \ie 
\begin{equation}
\nabla_{\theta}f_{0}(x_u,\theta) \parallel \nabla_{\theta}f_{0}(x_b,\theta), \forall x_{u} \in X_{\tau_u}, \forall x_{b} \in X_{\tau_b}.
\end{equation}
Then, the outputs of each layer of the neural network, $z^{l+1}_{u}$ and $z^{l+1}_{b}$ should also be aligned accordingly.

Given the characteristics of backdoor attacks, for the poisoned sample $\forall x_b \in X^{\tau_b}$, the backdoor model predicts the poisoned sample $x_b$ into 
the target label, whereas the clean model tends to classify it as its real source label $y$. Therefore, we can conclude that for $\forall x_b \in X^{\tau_b}$, 
we have $f^{*}_{\tau_b}(x_b) = y_t$ and $f^{*}_{\tau_c}(x_b) = y$. Consequently, For $\forall \tilde{x} \in X^{\tau_u}$, the following conditions must be satisfied:
\begin{equation}
    \left\{\begin{matrix}
        f^{*}_{\tau_c}(\tilde{x};\theta_{\tau_c}) =  f^{*}_{\tau_c}(x_b;\theta_{\tau_c}) = y, \\
        f^{*}_{\tau_b}(\tilde{x};\theta_{\tau_b}) =  f^{*}_{\tau_b}(x_b;\theta_{\tau_b}) = y_t.\\
    \end{matrix}\right.
\end{equation}

\end{proof}

\subsection{Derivation of \text{MAP$_{\theta}$}}
\label{proof:proof_of_MAP_theta}
For $\log p(\theta|y)$, we have
\begin{equation}
\resizebox{0.90\linewidth}{!}{$
\begin{aligned}
	& \log p(\theta|y) = \int q(\tilde{x}) \cdot \log(p(\theta|y)) d\tilde{x} \\
	& = \int q(\tilde{x}) \log(\frac{p(\theta,\tilde{x}|y)}{p(\tilde{x}|\theta,y)})d\tilde{x} \\
	& = \int q(\tilde{x}) \left [\log(\frac{p(\theta,\tilde{x}|y)}{q(\tilde{x})} )-\log(\frac{p(\tilde{x}|\theta,y)}{q(\tilde{x})})\right ]d\tilde{x} \\
	& = \underbrace{\int q(\tilde{x}) \left [\log(p(\theta,\tilde{x}|y))- \log(q(\tilde{x}) ) \right ]d\tilde{x}}_{L(q,\theta)}
	+ \underbrace{\int q(\tilde{x}) \log(\frac{q(\tilde{x})}{p(\tilde{x}|\theta,y)})d\tilde{x}}_{KL(q(\tilde{x})||p(\tilde{x}|\theta,y))}. 
\end{aligned}
$}
\end{equation}

\section{Extending BI-BAU to Multi-Modal Contrastive Learning}
\label{appendix:extending_BI-BAU_to_multi_modal_contrastive_learning}
Moreover, our BI-BAU framework can be easily extended to multi-modal contrastive learning scenarios. Taking the Contrastive Language-Image Pre-training (CLIP) model 
as an example, CLIP~\cite{radford2021learning} is a vision-language pre-training framework that enables cross-modal understanding by aligning image and text modalities 
within a shared feature space. A pre-trained CLIP model typically consists of an visual encoder $\mathcal{I}(\cdot; \theta_{\mathcal{I}})$ and a text encoder 
$\mathcal{T}(\cdot; \theta_{\mathcal{T}})$.

For a $K$-class classification task, given an image-text pair $(I,T)$, the visual encoder (\eg a Vision Transformer (ViT)~\cite{dosovitskiy2020image}) produces 
the image feature representation $ z(I) = \mathcal{I}(I;\theta_{\mathcal{I}})$.
For the text branch, each class name $t_i$ is inserted into a predefined template prompt (\eg ``a photo of a {class}''), forming the text $T_i$. 
The text encoder then produces the corresponding textual representation $z(T_i) = \mathcal{T}(T_i,\theta_{\mathcal{T}})$.

During training, CLIP employs contrastive learning to align image and text representations. Given a batch of $n$ pairs $(I_j, T_j)$, the bidirectional loss 
including image-to-text matching loss and Text-to-Image matching loss, is computed as:
\begin{equation}
    \mathcal{L}_{\text{img2txt}} = -\frac{1}{n}\sum_{j=1}^{n} \log \frac{\exp(\cos(z(I_j), z(T_j))/\tau)}{\sum_{k=1}^{n} \exp(\cos(z(I_j), z(T_k))/\tau)}, 
\end{equation}

\begin{equation}
    \mathcal{L}_{\text{txt2img}} = -\frac{1}{n}\sum_{j=1}^{n} \log \frac{\exp(\cos(z(T_j), z(I_j))/\tau)}{\sum_{k=1}^{n} \exp(\cos(z(T_j), z(I_k))/\tau)}.
\end{equation}
where $cos(·, ·)$ denotes the cosine similarity score and $\tau$ is the temperature parameter. The final training objective is the average of the two directions:
\begin{equation}
   \mathcal{L}_{\text{CLIP}} = \frac{1}{2}(\mathcal{L}_{\text{img2txt}} +  \mathcal{L}_{\text{txt2img}}).
\end{equation}

\textbf{Extend BI-BAU to CLIP Model.} Following the core principles established in the main paper, BI-BAU can be naturally extended to the CLIP model. 
As before, the framework consists of two main steps: an Inner Maximization step and an Outer Minimization step.

\textbf{In the Inner Maximization step.} For multi-modal contrastive learning, the cross-entropy loss $\ell_{ce}$ in Eq.~\ref{eq:ell_in} is replaced by the CLIP 
contrastive loss $\mathcal{L}_{\text{CLIP}}(\tilde{I}, T)$, where $\tilde{I}$ denotes adversarial images and $T$ their corresponding textual captions.

For the knowledge distillation term $\ell_{kd}$, inspired by Relational Knowledge Distillation (RKD)~\cite{park2019relational,tung2019similarity}, which transfers 
structural knowledge by leveraging mutual relations of data instances in the teacher's output space. we adopt a relational distillation loss:
\begin{equation}
 	\label{eq:rkd}
	\mathcal{L}_{\text{rkd}} = \text{KL}\big(\cos(Z_{\tilde{I}}, Z_{T})||\cos(Z_I, Z_T)\big)
\end{equation}
where $Z_I = \left\{z(I_j) \right\}_{j=1}^B$, $Z_{T} = \left\{z(T_j)\right\}_{j=1}^B$ denote natural image-text feature sets, while $Z_{\tilde{I}}$ represents 
the corresponding adversarial features. The pairs $(Z_I, Z_T)$ and $(Z_{\tilde{I}}, Z_T)$ are extracted from the original backdoored CLIP model and the updated 
model, respectively. The relational potential function $\cos(\cdot, \cdot)$ encodes the pairwise similarity between image-text features within a batch of size $B$. 
Accordingly, Kullback-Leibler (KL) divergence $\mathcal{L}_{\text{rkd}}$ measures the discrepancy between the relational structures induced by adversarial samples,
$\cos(Z_{\tilde{I}}, Z_T)$, and those of the natural samples, $\cos(Z_I, Z_T)$. 

By maximizing $-\mathcal{L}_{\text{rkd}}$, we enforce the adversarial image-text representations to preserve the pairwise relational structure induced by the 
original model, thereby retaining core semantic information while suppressing backdoor effects.

In the case of the CLIP model, the prior loss $\ell_{prior}$ is defined as:
\begin{equation}
    \label{eq:ell_clip_prior}
    \mathcal{L}_{prior} = \text{MSE} \big(Z_{\tilde{I}}, Z_{I}\big) - \cos(\beta + k \cdot \frac{\pi}{2})
\end{equation}
where $Z_{\tilde{I}}$ and $Z_I$ are visual encoder outputs of adversarial and clean images, respectively, and $\beta$ follows the definition in Section~\ref{section:proposed_method}.

Combining Eqs.~\ref{eq:rkd} and~\ref{eq:ell_clip_prior}, the resulting inner maximization objective is:
\begin{equation}
    \mathcal{L}_{adv-in} = -\mathcal{L}_{\text{CLIP}}  -\lambda_{1}\mathcal{L}_{kd} + \lambda_{2} \mathcal{L}_{prior}
\end{equation}

\textbf{In the Outer Minimization step.} the cross-entropy loss term in Eq.~\ref{eq:ell_out} is similarly replaced with the CLIP contrastive loss $\mathcal{L}{\text{CLIP}}(\tilde{I}, T)$.
The Elastic Weight Consolidation (EWC) loss and the inverse EWC loss retain the same formulations, given by
\begin{equation}
\scalebox{0.90}{$
\begin{aligned}
\mathcal{L}_{ewc} &= (\theta_{\tau_b}^{*}(\mathcal{I}) - \theta(\mathcal{I})) ^{T} F_{\tau_c} (\theta_{\tau_b}^{*}(\mathcal{I}) - \theta(\mathcal{I})), \\
\mathcal{L}_{inv-ewc} &= 1.0 / \left [ 1.0 + (\theta_{\tau_b}^{*}(\mathcal{I}) - \theta(\mathcal{I})) ^{T} F_{\tau_u} (\theta_{\tau_b}^{*}(\mathcal{I}) - \theta(\mathcal{I}))  \right ].
\end{aligned}
$}
\end{equation}
Here, $\theta(\mathcal{I})$ denotes the parameters of the visual encoder. Notably, in the CLIP-based unlearning setting, we update only the visual encoder 
$\mathcal{I}(\cdot;\theta_{\mathcal{I}})$ during optimization, while keeping the text encoder $\mathcal{T}(\cdot;\theta_{\mathcal{T}})$ fixed.

Accordingly, the overall outer optimization objective becomes:
\begin{equation}
    \label{eq:ell_clip_out}
    \mathcal{L}_{adv-out} = \mathcal{L}_{\text{CLIP}}(f_{\theta}(\tilde{x};\theta), y) + \gamma_{1} \mathcal{L}_{inv-ewc} + \gamma_{2} \mathcal{L}_{ewc}.
\end{equation}

This extension preserves the core BI-BAU principles while adapting the framework to multi-modal contrastive learning, enabling effective backdoor unlearning for 
CLIP-based models.

\section{Additional Implementation Details}
\label{appendix:additional_implementation_details}

\subsection{Datasets and Network Architectures}
\label{subsection:datasets_and_network}

To comprehensively evaluate the effectiveness of our proposed BI-BAU method, we conduct experiments on both standard image classification and multi-modal 
contrastive learning tasks.

\textbf{Supervised Learning.} For image classification, we use three benchmark datasets: CIFAR-10~\cite{krizhevsky2009learning}, Tiny-ImageNet-200~\cite{wu2017tiny}, 
and GTSRB~\cite{stallkamp2011german}, and three neural network architectures: ResNet-18~\cite{he2016deep}, VGG19~\cite{simonyan2014very}, and Vision Transformer (ViT)~\cite{dosovitskiy2020image}. 
Dataset statistics and model assignments are summarized in Table~\ref{tab:dataset_and_model}. 
\begin{table}[!h]
    \centering
    \caption{Summary of datasets and model architectures used in supervised learning tasks.}
    \label{tab:dataset_and_model}
    \resizebox{\linewidth}{!}{
    \begin{tabular}{ccccc}
        \toprule
        Dataset & Labels & Input Size & Training Images & DNN \\ \hline
        CIFAR-10 & 10 & 32$\times$32$\times$3 & 50,000 & ResNet-18 / ViT \\
        Tiny-ImageNet-200 & 200 & 32$\times$32$\times$3 & 100,000 & ResNet-18 \\
        GTSRB & 43 & 32$\times$32$\times$3 & 39,252 & VGG19 \\
        \bottomrule
    \end{tabular}}
	\vspace{-3mm}
\end{table}

For all classification tasks, we perform backdoor attacks on either the ResNet-18 or VGG19 models over 200 training epochs, with a batch size of 128. 
We employ the stochastic gradient descent (SGD) optimizer with an initial learning rate of 0.1, a momentum of 0.9, and a weight decay of $5 \times 10^{-4}$. 
The learning rate is reduced by a factor of 10 at the 100th and 150th epochs, respectively.

\textbf{Multi-Modal Contrastive Learning.} For multi-modal evaluation, we adopt OpenAI's open-source CLIP~\cite{radford2021learning} as the pre-trained backbone, 
trained on 400 million image-text pairs. Specifically, the ``ViT-B/32'' variant is used as the visual encoder, and a Transformer serves as the text encoder. 
A backdoor is first implanted into the pre-trained CLIP model on ImageNet-1K. Then, a fine-tuning dataset is constructed by sampling 50,000 image-text pairs from 
Conceptual Captions (CC3M)~\cite{sharma2018conceptual}, corresponding to only $\sim$0.6\% of the full dataset. This setup allows us to evaluate the effectiveness 
of BI-BAU under limited-data backdoor mitigation scenarios.

\subsection{Attack Settings}

\label{subsection:attack_settings}

\textbf{Backdoor Attacks.} We conduct a comprehensive study by adopting 10 representative backdoor attacks to evaluate the effectiveness of our method, which can 
be grouped into the following five categories: (1) Patch-based attacks: BadNets~\cite{gu2019badnets} and Blended~\cite{chen2017targeted};
(2) Invisible backdoor attacks: WaNet~\cite{nguyen2021wanet} and Refool~\cite{liu2020reflection}; (3) Sample-specific backdoor attacks: IAD~\cite{nguyen2020input} and ISSBA~\cite{li2021invisible};
(4) Clean-label attacks: SIG~\cite{barni2019new} and Narcissus~\cite{zeng2023narcissus}, as well as (5) two adaptive attacks~\cite{qi2023revisiting}: Adaptive-Blend and Adaptive-Patch,
which are specifically designed to evade latent separation-based defenses. For a fair and reliable comparison, we use the implementation provided by BackdoorBench~\cite{wu2022backdoorbench} 
and follow default configuration specified in the original papers. By default, the poisoning ratio $\rho$ is set to 10\% and the target label $y_t$ set to 0 for all attacks, 
expect for SIG and Narcissus. For the SIG attack, the poisoning ratio is set to $2.5\%$ (\ie $25\%$ of training samples with the target label), and the target label 
$y_t = 3$. For the Narcissus, the poisoning ratio is set to $0.05\%$ of the entire dataset, with the target label $y_t$ set to 0.

Furthermore, to verify that our proposed BI-BAU method can effectively and thoroughly eliminate backdoor effects from compromised models, we implement two backdoor 
re-activation attacks introduced by \citet{min2024uncovering}, namely the Retuning Attack (RA) and the Query-based Reactivation Attack (QRA).
Following its original implementation~\cite{min2024uncovering}, we provide the configuration details as follows.

\textbf{RA Configuration.} For all three datasets, we use 5 samples for the BadNet, WaNet, IAD, and ISSBA attacks, and 1 sample for the Blended attack. 
For the Refool, SIG, Narcissus, Adaptive-Blend, and Adaptive-Patch attacks, we adopt 2 samples for CIFAR-10 and Tiny-ImageNet-200, while using 5 samples for GTSRB.
To mitigate the risk that directly fine-tuning the purified model with poisoned samples may degrade clean accuracy, we incorporate an additional 0.02\% of 
the overall training dataset as benign samples during the fine-tuning process.

\textbf{QRA Configuration.} We employ a three-layer Multilayer Perceptron (MLP) to generate reversed perturbations in the input space. Each hidden layer contains 
1024 neurons and is followed by a Rectified Linear Unit (ReLU) activation function. To train the generator, we use 500 benign examples and 500 backdoored examples 
from the CIFAR-10 dataset. The training is conducted for 50 epochs using a fixed learning rate of 0.1 and a batch size of 128. The balancing coefficient $\alpha$ 
is set to 0.2 across all attack types to ensure high Q-ASR on purified models while simultaneously reducing the attack performance on clean models.
Additionally, the perturbation budget is fixed at 16/255 for all experiments.

\subsection{Defense Settings}
\label{subsection:defense_settings}

In addition, as a comparative reference, we construct an idealized backdoor unlearning scenario (termed IBU), in which we assume that the defender fine-tunes 
the model using real backdoor poisoned samples with original labels, alongside a small subset of clean data $\mathcal{D}_{cl}$, to maintain the model’s performance 
on clean tasks, as follows: 
\begin{equation}
    \label{eq:ideal_backdoor_unlearning}
    \min_{\theta} \mathcal{R}_{cl} + \lambda \mathcal{R}_{bu}
\end{equation}
where
\begin{equation}
	\left\{\begin{matrix}
		\mathcal{R}_{cl} = \mathbb{E}_{(x,y)\sim\mathcal{D}_{cl}} (\ell(f(x;\theta),y)),\\
		\mathcal{R}_{bu} = \mathbb{E}_{(x,y)\sim\mathcal{D}_{-y_t}} (\ell(f((1-m) \odot x + m \odot \Delta ;\theta),y)).
	\end{matrix}\right.
\end{equation}
A poisoned sample is defined as $(1-m) \odot x + m \odot \Delta $, where $\odot$ denotes the element-wise product, $\Delta$ is the backdoor trigger, and 
the binary mask $m \in \{0,1\}^n$ specifies its location and intensity. The coefficients $\lambda$ balance clean task performance and the backdoor unlearning objective. 
In our experiments, we set $\lambda =1.0$.

BI-BAU employs projected gradient descent (PGD) attack~\cite{cubuk2017intriguing} to generate adversarial examples in the inner maximization step and subsequently unlearns them to mitigate the backdoor effects in the outer minimization step.
\textbf{(i) In the inner optimization step}, we set the parameters involved in $\mathcal{L}_{adv\text{-}in}$ as $\lambda_1 = 1.0$ and $\lambda_2 = 1.0$.
For the data likelihood loss $\ell_{kd}$, we use $\alpha = 0.2$ and $T = 1.0$. Regarding the parameter $k$ in the prior loss term $\ell_{prior}$, we set $k = 0.8$ for general attacks.
However, for attacks characterized by low orthogonality or low linearity, we set $k = 0.4$ for Refool, and $k = 0.2$ for SIG, Narcissus, Adaptive-Blend, and Adaptive-Patch.
\textbf{(ii) In the outer optimization step}, for the loss $\mathcal{L}_{adv\text{-}out}$, we set the parameters $\gamma_1$ and $\gamma_2$ as follows. 
we use $\gamma_1 = 1.0$, and $\gamma_2 = 1.0$, with a learning rate of $0.0001$ for general backdoor attacks. 
In contrast, for attacks with low orthogonality or low linearity, we adopt $\gamma_1 = 1.0$, and $\gamma_2 = 0.01$ and set the learning rate to $0.0005$.
In our method, adversarial examples are generated using Projected Gradient Descent (PGD) under the  $L_{\infty }$ norm constraint. 
Specifically, we employ 20 steps with a step size of 2.0 and a perturbation bound of 25.0 across all experiments.
All perturbations are applied in the input pixel space, where pixel values range from 0 to 255.

\subsection{Layer-wise Activation Analysis}
\label{appendix:layer-wise_activation_analysis}
To further verify whether ``complete unlearning'' is truly achieved, we adopt and extend additional neural-level evaluation metrics, such as the Backdoor Existence 
Coefficient (BEC) introduced by \citet{zhu2024breaking}.
An effective indicator should be able to quantify the similarity of backdoor effects between the backdoored model and the target defense model across the entire network, 
thereby validating that BI-BAU completely eliminates the backdoor effects embedded in the original backdoored model and achieves the claimed backdoor defense goal 
of ``complete unlearning''. The implementation details of the Backdoor Existence Coefficient (BEC) are described as follows. Backdoor Existence Coefficient (BEC), 
which is calculated through the following three steps~\cite{zhu2024breaking}:
\begin{enumerate}
    \item \textbf{Backdoor neuron identification}: Firstly, we need to identify backdoor-related neurons. \citet{zheng2022data} proposed Trigger-activated Change (TAC) 
	to quantify the correlation between backdoor impact and neurons.
    With this metric, backdoor-related neurons in $f_{\vtheta_\text{A}}$ are identified for each layer. Thus, the feature maps corresponding to these neuron indices 
	are selected for each model, denoted as $\tilde{m}^{(l)}_{\text{A}}(\x_{\bm{\xi}})$, $\tilde{m}^{(l)}_{\text{D}}(\x_{\bm{\xi}})$, and $\tilde{m}^{(l)}_{\text{C}}(\x_{\bm{\xi}})$, 
	respectively. Denote the feature maps across dataset $\mathcal{D}_p$ as $\tilde{m}^{(l)}(\mathcal{D}_p)\in \mathbb{R}^{n_p \times (\tilde{c}_l\times h_l \times w_l)}$.
    \item \textbf{Backdoor effect similarity metric}: In order to measure the backdoor effect similarity between models, we employ Centered Kernel Alignment (CKA)~\cite{kornblith2019similarity}
	to quantify the similarity between these matrices. 
	The similarity in backdoor effects between $f_{\vtheta_\text{D}}$ and $f_{\vtheta_\text{A}}$, calculated through the use of corresponding features, can be computed as:
    \begin{equation}
    S_{\text{D},\text{A}}^{(l)}(\mathcal{D}_p) = \text{CKA}\left(\tilde{m}_{\text{D}}^{(l)}(\mathcal{D}_p), \tilde{m}_{\text{A}}^{(l)}(\mathcal{D}_p)\right),
    \end{equation}
     and $S_{\text{C},\text{A}}^{(l)}(\mathcal{D}_p)$ is computed accordingly.
     \item \textbf{Backdoor existence coefficient computation}: The BEC is the average of normalized backdoor effect similarity across all layers. By assigning the BEC 
	 of $f_{\vtheta_\text{A}}$ a value of 1 and $f_{\vtheta_\text{C}}$ a value of 0, the computation can proceed as follows:
    \begin{equation}
    	\rho_{\text{BEC}} (f_{\vtheta_{\text{D}}},f_{\vtheta_{\text{A}}},f_{\vtheta_{\text{C}}};\mathcal{D}_p) = \frac{1}{N}\sum_{l=1}^{N} \frac{S_{\text{D},\text{A}}^{(l)}(\mathcal{D}_{p})-S_{\text{C},\text{A}}^{(l)}(\mathcal{D}_{p})}{S_{\text{A},\text{A}}^{(l)}(\mathcal{D}_{p})-S_{\text{C},\text{A}}^{(l)}(\mathcal{D}_{p})}.
	\end{equation}
\end{enumerate}
Denote $\rho_{\text{BEC}} (f_{\vtheta_{\text{D}}},f_{\vtheta_{\text{A}}},f_{\vtheta_{\text{C}}};\mathcal{D}_p)$ as $\rho_{\text{BEC}} (f_{\vtheta_{\text{D}}})$ for simplicity. 
\citet{zhu2024breaking} utilize $\text{BEC}$ to signify backdoor existence of the target defense model (\ie the purified model) $f_{\vtheta_\text{D}}$.

However, $\rho_{\text{BEC}}$ may occasionally take values below 0. To address this, we apply a sigmoid transformation:
\begin{equation}
	\sigma(\rho_{\text{BEC}}) = \frac{1}{1 + \text{exp}(-k(\rho_{\text{BEC}}-\rho_{c}))}
\end{equation}
where $k > 0$ controls the steepness of the curve, $\rho_{c}$ represents the center of the curve, and $\sigma(\rho_{\text{BEC}}) = 0.5$ when $\rho_{\text{BEC}} = \rho_{c}$.
The function $\sigma$ maps $\rho_{\text{BEC}}$ into the range $[0,1.0]$, where a higher value of $\sigma(\rho_{\text{BEC}})$ indicates a stronger presence of backdoors 
in the model.

In the experiments, we set $k = 4.0$ and $\rho_{c} = 0.5$. Considering that the shallow layers in neural networks generally capture more general features and are 
more susceptible to noise~\cite{semenova2022understanding}, we only used the last 10 convolutional layers (20 layers in total for ResNet-18).
This metric serves as a complementary evaluation to ASR, enabling quantification of residual backdoor effects at the layer-wise level within the neural network.

\section{Additional Experiments}
\label{appendix:additional_experiments}
\subsection{Validating Assumption~\ref{assumption:backdoor_unlearning_as_continual_learning}}
\label{subsection: validating_assumption_1}

Prior study~\cite{li2021anti} views backdoor learning as a composition of a clean task $\tau_c$ and a backdoor task $\tau_b$. 
\citet{zhang2024exploring} further characterize this process as a two-stage task flow: $\mathcal{T}=\{\tau_b, \tau_c\}$, 
comprising two independent learning tasks:\ding{172} an initial rapid learning phase in which the backdoor task is acquired within just a few training epochs, followed by
\ding{173} a subsequent phase of gradual adaptation to the clean task.

\begin{figure*}[ht]
	\graphicspath{{figures/continual_learning/}}
	\centering
	\begin{subfigure}[b]{0.30\textwidth}
		\centering
		\includegraphics[width=\linewidth]{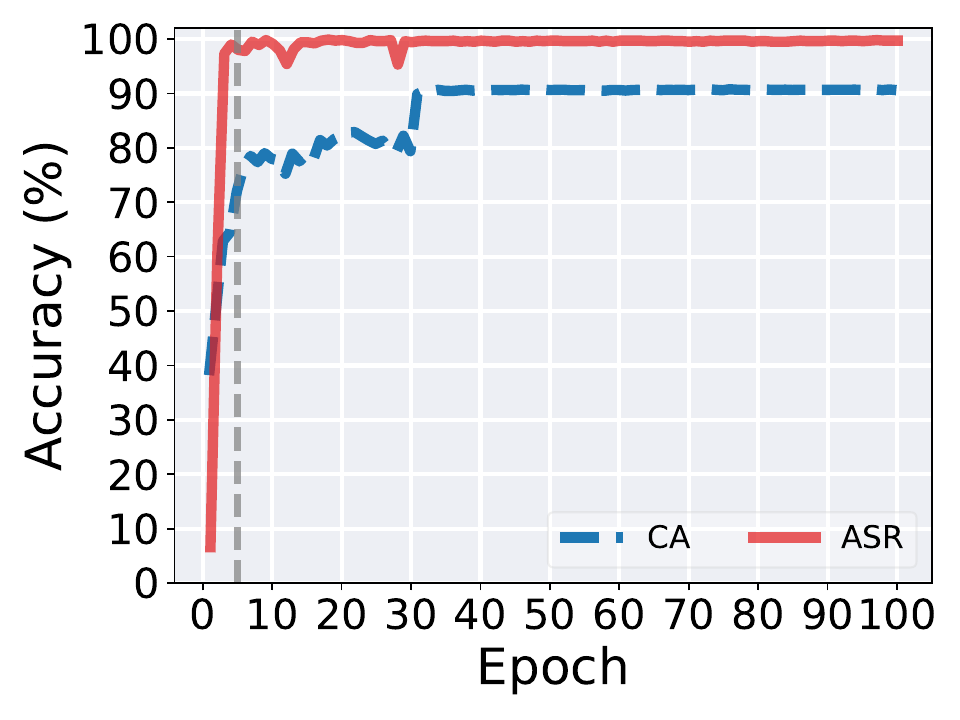}
		\caption{Joint Learning}
	\end{subfigure}
	\begin{subfigure}[b]{0.30\textwidth}
		\centering
		\includegraphics[width=\linewidth]{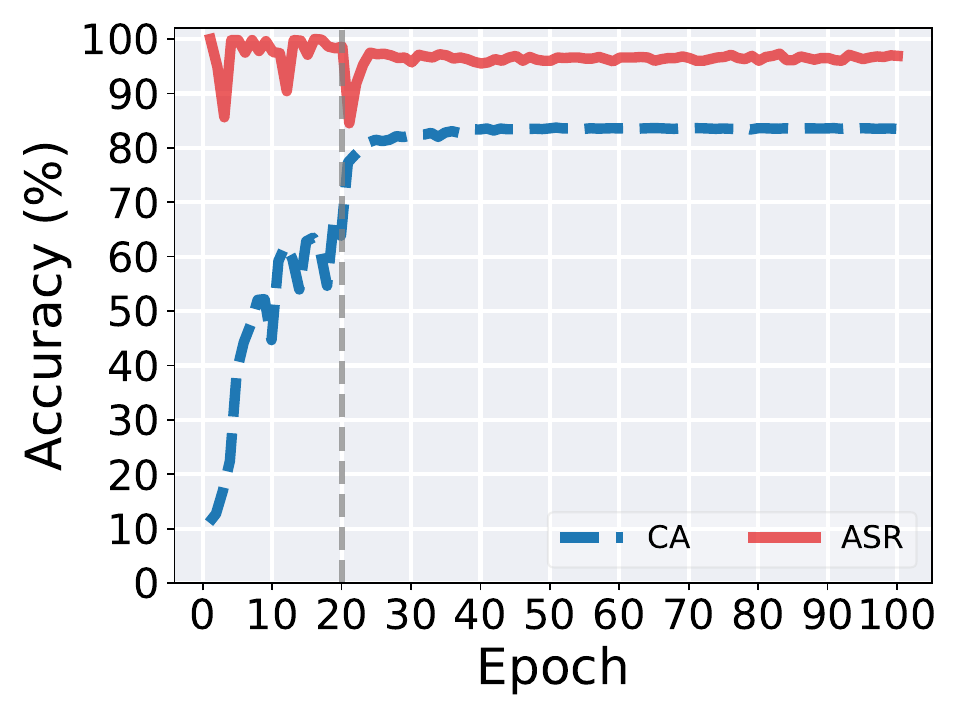}
		\caption{$\mathcal{T}=\{\tau_b, \tau_c\}$}
	\end{subfigure}
	\begin{subfigure}[b]{0.30\textwidth}
		\centering
		\includegraphics[width=\linewidth]{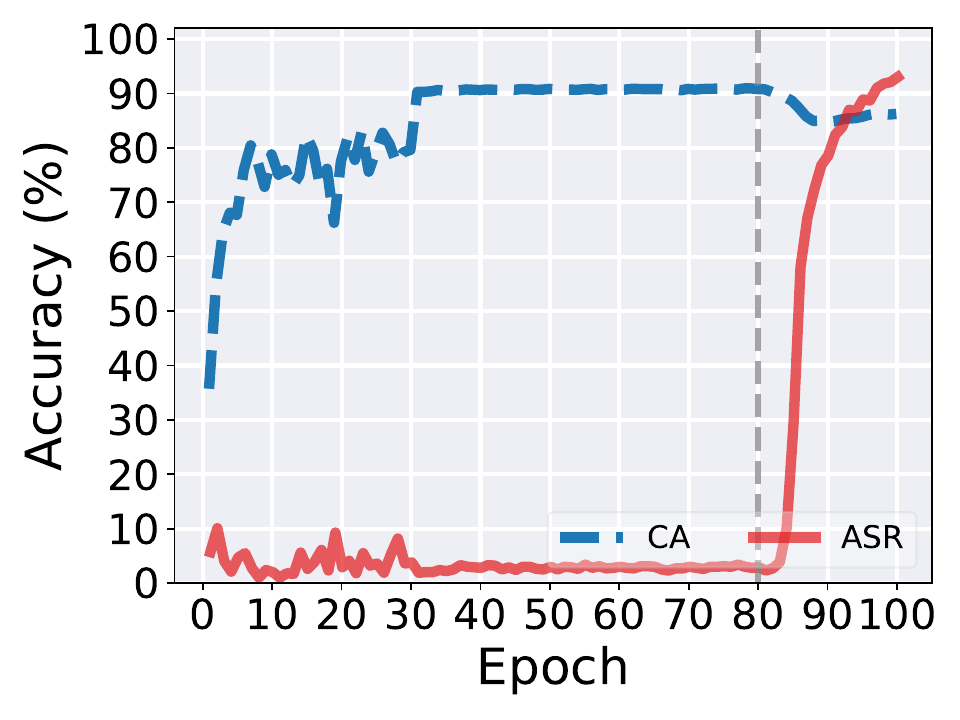}
		\caption{$\mathcal{T}=\{\tau_c,\tau_b\}$}
	\end{subfigure}
	\caption{\small Illustration of backdoor learning as a task flow.}
	\label{fig:backdoor_learning_as_CL}
\end{figure*}
To validate the plausibility of Assumption~\ref{assumption:backdoor_unlearning_as_continual_learning}, we compare three training settings: 
(i) the standard backdoor learning setup, (ii) a task stream $\mathcal{T} = \{\tau_b, \tau_c\}$, and (iii) a task stream $\mathcal{T} = \{\tau_c, \tau_b\}$.
Following the continual learning paradigm, we simulate the task streams using a ResNet-18 architecture on the CIFAR-10 dataset, with BadNets as a representative 
attack method. Since the clean task $\tau_c$ and backdoor task $\tau_b$ share the same classifier (\ie the fully connected (FC) classification head), their parameters 
inevitably overlap, making complete isolation between tasks is impractical. To approximate independent training on $D_c$ and $D_p$, we adopt a rehearsal strategy 
by introducing a few samples from the complementary task. This approach is supported by prior work~\cite{rolnick2019experience} in continual learning, which helps 
mitigate interference. Notably, the inclusion of a small number of samples is insufficient to achieve high performance, and is intended solely to approximate 
task independence.
\begin{enumerate}[(\roman*)]
    \item In the standard backdoor learning setting, the model is trained jointly on the poisoned dataset $D_p$ and clean dataset $D_c$, with backdoor and clean tasks 
    learned simultaneously, as illustrated in Figure~\ref{fig:backdoor_learning_as_CL} (a).
    \item For the task stream $\mathcal{T} = \{\tau_b, \tau_c\}$,  the training process is divided into two sequential phases. In the first phase, the model is 
    trained for 20 epochs based on a mixed dataset composed of the full poisoned dataset $D_p$ and a small fraction (10\%) of clean data.
    In the second stage, the model is further trained on the full clean dataset $D_c$ with only 1\% of poisoned subset $D_p$, continuing for 80 epochs, as illustrated in Figure~\ref{fig:backdoor_learning_as_CL} (b).
    \item For the task stream $\mathcal{T} = \{\tau_c, \tau_b\}$, we adopt a similar two-stage setup. The first stage involves training on the full clean dataset $D_c$ 
    along with a small set of poisoned samples (10 instances, approximately 0.2\% of $D_p$). These few samples are included only to approximate task independence 
	and have negligible impact on the overall results. In the second stage, the model is exposed to the full poisoned dataset $D_p$ and 10\% of the clean data, 
	continuing training until a total of 100 epochs is reached, as illustrated in Figure~\ref{fig:backdoor_learning_as_CL} (c).
\end{enumerate}

As shown in Figure~\ref{fig:backdoor_learning_as_CL} (a), in the standard backdoor learning setting, the backdoor is rapidly injected into the model, evidenced 
by the attack success rate (ASR) peaking within the first 5 training epochs. Subsequently, the clean task $\tau_c$ is gradually learned as training progresses 
until convergence, while the ASR remains nearly unchanged. This observation is consistent with the findings reported by \citet{zhang2024exploring}.
In Figure~\ref{fig:backdoor_learning_as_CL} (b), we manually partition the dataset to further decouple the backdoor task $\tau_b$ and the clean task $\tau_c$. 
Following a two-stage learning procedure, we observe similar dynamics: the backdoor task is quickly learned in the initial phase, and upon switching to the clean task $\tau_c$, 
the clean accuracy (CA) gradually improves while the ASR remains stable. Finally, model achieves a comparable CA and ASR to that of the standard backdoor learning setup.
In contrast, Figure~\ref{fig:backdoor_learning_as_CL} (c) presents the reversed task stream $\mathcal{T} = \{\tau_c, \tau_b\}$, where the clean task $\tau_c$ is 
gradually learned in the first stage, followed by a rapid acquisition of the backdoor task $\tau_b$, without significant degradation in CA.
This again results in a similar convergence outcome, suggesting that tasks $\tau_b$ and $\tau_c$ can be interchanged without loss of generality, 
further supporting the plausibility of Assumption~\ref{assumption:backdoor_unlearning_as_continual_learning}.

Overall, these experiments empirically validate that the clean and backdoor tasks can be modeled as a sequential task flow, supporting our continual learning 
formulation for backdoor unlearning.

\renewcommand{\arraystretch}{1.5}
\begin{table*}[!h]
	\centering
	\caption{Performance (\%) of safety tuning strategies for eliminating backdoor effects on the Tiny-ImageNet-200 dataset.}
	\label{table:backdoor_unlearning_on_Tiny-ImageNet}
	\resizebox{0.98\textwidth}{!}{\begin{tabular}{c|cc|cc|cc|cc|cc|cc|cc|cc|cc}
		\toprule
		Attack $\to$ & \multicolumn{2}{c|}{BadNets} & \multicolumn{2}{c|}{Blended} & \multicolumn{2}{c|}{WaNet} & \multicolumn{2}{c|}{Refool} & \multicolumn{2}{c|}{IAD} & \multicolumn{2}{c|}{ISSBA} & \multicolumn{2}{c|}{SIG} & \multicolumn{2}{c|}{Narcissus} & \multicolumn{2}{c}{Average}\\ 
		\midrule
		Defense $\downarrow$ & \multicolumn{1}{c}{CA}&\multicolumn{1}{c|}{ASR} & \multicolumn{1}{c}{CA}&\multicolumn{1}{c|}{ASR} & \multicolumn{1}{c}{CA}&\multicolumn{1}{c|}{ASR} & \multicolumn{1}{c}{CA}&\multicolumn{1}{c|}{ASR} & \multicolumn{1}{c}{CA}&\multicolumn{1}{c|}{ASR} & \multicolumn{1}{c}{CA}&\multicolumn{1}{c|}{ASR} & \multicolumn{1}{c}{CA}&\multicolumn{1}{c|}{ASR} & \multicolumn{1}{c}{CA}&\multicolumn{1}{c|}{ASR} & \multicolumn{1}{c}{CA}&\multicolumn{1}{c}{ASR} \\ 
		\midrule
		
        No Defense  & 48.80&99.08 & 50.08&100.00 & 45.42&99.90 & 49.80&99.90 & 40.53&100.00 & 40.50&99.80 & 51.80&96.14 & 50.68&100.00 & 47.20&99.35\\
        \hline
		NC          & 45.86&1.02 & \scalebox{1.00}{\colorbox[HTML]{d9f2d9}{47.50}}&4.00 & 43.12&1.00 & 47.52&\scalebox{1.00}{\colorbox[HTML]{FFCCCC}{70.40}} & 39.55&1.00 & 36.26&0.90 & 48.15&\scalebox{1.00}{\colorbox[HTML]{FFCCCC}{93.78}} & 47.08&\scalebox{1.00}{\colorbox[HTML]{d9f2d9}{0.00}} &  44.38&21.51 \\
        \hline
		BTI-DBF     & 43.72&\scalebox{1.00}{\colorbox[HTML]{d9f2d9}{0.12}} & 45.68&\scalebox{1.00}{\colorbox[HTML]{d9f2d9}{0.00}} & 41.80&\scalebox{1.00}{\colorbox[HTML]{d9f2d9}{0.10}} & 46.47&\scalebox{1.00}{\colorbox[HTML]{d9f2d9}{0.00}}& 37.46&\scalebox{1.00}{\colorbox[HTML]{d9f2d9}{0.00}} & 31.53&\scalebox{1.00}{\colorbox[HTML]{d9f2d9}{0.00}} & 40.17&11.01 & 42.40&\scalebox{1.00}{\colorbox[HTML]{d9f2d9}{0.00}} &  41.15&\scalebox{1.00}{\colorbox[HTML]{d9f2d9}{1.40}}\\
		\hline
		ANP         & 46.28&0.18 & 47.66&17.80 & 41.44&\scalebox{1.00}{\colorbox[HTML]{FFCCCC}{65.20}} & 47.17&\scalebox{1.00}{\colorbox[HTML]{FFCCCC}{20.20}} & \scalebox{1.00}{\colorbox[HTML]{d9f2d9}{40.53}}&1.10 & 39.45&2.80 & 48.03&\scalebox{1.00}{\colorbox[HTML]{FFCCCC}{91.83}}& \scalebox{1.00}{\colorbox[HTML]{d9f2d9}{48.24}}&\scalebox{1.00}{\colorbox[HTML]{FFCCCC}{98.80}} & 44.85&37.23    \\
        \hline
        EP          & 48.56&0.44 & 49.76&8.20 & \scalebox{1.00}{\colorbox[HTML]{d9f2d9}{44.23}}&1.40 &\scalebox{1.00}{\colorbox[HTML]{d9f2d9}{48.63}}&\scalebox{1.00}{\colorbox[HTML]{FFCCCC}{92.80}} & 40.01&0.30 & 39.78&0.30 & \scalebox{1.00}{\colorbox[HTML]{d9f2d9}{50.12}}&\scalebox{1.00}{\colorbox[HTML]{FFCCCC}{96.52}} & \scalebox{1.00}{\colorbox[HTML]{d9f2d9}{48.24}}&\scalebox{1.00}{\colorbox[HTML]{FFCCCC}{99.96}} &  \scalebox{1.00}{\colorbox[HTML]{d9f2d9}{46.16}}&37.49\\
		\hline
		RNP    		& \scalebox{1.00}{\colorbox[HTML]{d9f2d9}{53.98}}&0.20 & 45.90&1.20 & 43.67&\scalebox{1.00}{\colorbox[HTML]{d9f2d9}{0.00}} & 48.05&\scalebox{1.00}{\colorbox[HTML]{d9f2d9}{0.00}} & 36.52&\scalebox{1.00}{\colorbox[HTML]{d9f2d9}{0.10}} & 18.98&0.20 & 39.34&\scalebox{1.00}{\colorbox[HTML]{FFCCCC}{96.82}} & 41.42&\scalebox{1.00}{\colorbox[HTML]{FFCCCC}{28.66}} &  40.98&15.89 \\
        \hline
        CLP    		& 50.52&1.74 & 46.71&\scalebox{1.00}{\colorbox[HTML]{FFCCCC}{94.40}} & 30.66&\scalebox{1.00}{\colorbox[HTML]{FFCCCC}{78.30}} & 41.30&\scalebox{1.00}{\colorbox[HTML]{FFCCCC}{99.80}} & 35.07&4.20 & 35.24&1.40 & 48.45&\scalebox{1.00}{\colorbox[HTML]{FFCCCC}{98.15}} & 36.20&\scalebox{1.00}{\colorbox[HTML]{FFCCCC}{100.00}} & 40.51&59.74 \\
		\hline
		I-BAU       & 35.94&\scalebox{1.00}{\colorbox[HTML]{FFCCCC}{38.86}} & 37.23&\scalebox{1.00}{\colorbox[HTML]{FFCCCC}{74.50}} & 37.36&\scalebox{1.00}{\colorbox[HTML]{FFCCCC}{78.70}} & 37.02&\scalebox{1.00}{\colorbox[HTML]{FFCCCC}{31.80}}& 35.74&1.20 & 34.10&0.10 & 37.75&\scalebox{1.00}{\colorbox[HTML]{FFCCCC}{86.19}} & 40.86&\scalebox{1.00}{\colorbox[HTML]{FFCCCC}{60.26}}&  37.00&46.45\\
		\hline
		SAU         & 40.02&3.30 & 40.94&\scalebox{1.00}{\colorbox[HTML]{FFCCCC}{20.30}} & 35.92&\scalebox{1.00}{\colorbox[HTML]{FFCCCC}{20.70}} & 40.11&\scalebox{1.00}{\colorbox[HTML]{FFCCCC}{22.80}} & 31.28&16.90 & 31.41&5.50 & 42.56&\scalebox{1.00}{\colorbox[HTML]{FFCCCC}{84.90}} & 40.54&1.30 &  37.84&21.96\\
		\hline
        BI-BAU (T)  & 41.14&4.34 & 32.80&0.50 & 36.80&6.80 & 32.36&0.40 & 36.70&11.40 & 35.97&1.50 & 30.68&\scalebox{1.00}{\colorbox[HTML]{FFCCCC}{21.06}} & 30.50&0.06 & 34.61&5.75 \\
		\hline
        BI-BAU (U) & 44.60&1.18 & 46.73&4.00 & 40.55&8.00 & 45.28&0.50 & 39.18&1.10 & \scalebox{1.00}{\colorbox[HTML]{d9f2d9}{40.75}}&\scalebox{1.00}{\colorbox[HTML]{d9f2d9}{0.00}} & 35.70&\scalebox{1.00}{\colorbox[HTML]{d9f2d9}{8.50}} & 35.80&11.80 & 39.78&1.90\\
        \bottomrule
	\end{tabular}}
\end{table*} 

\subsection{Investigation of the Hyperparameter $k$ in the Prior Loss $\ell_{prior}$}
\label{appendix:Investigation_of_hyperparameter_k}

\begin{figure}[H]
	\graphicspath{{figures/Hyperparameter_k/}}
	\centering
	\begin{subfigure}[b]{0.23\textwidth}
		\centering
		\includegraphics[width=\linewidth]{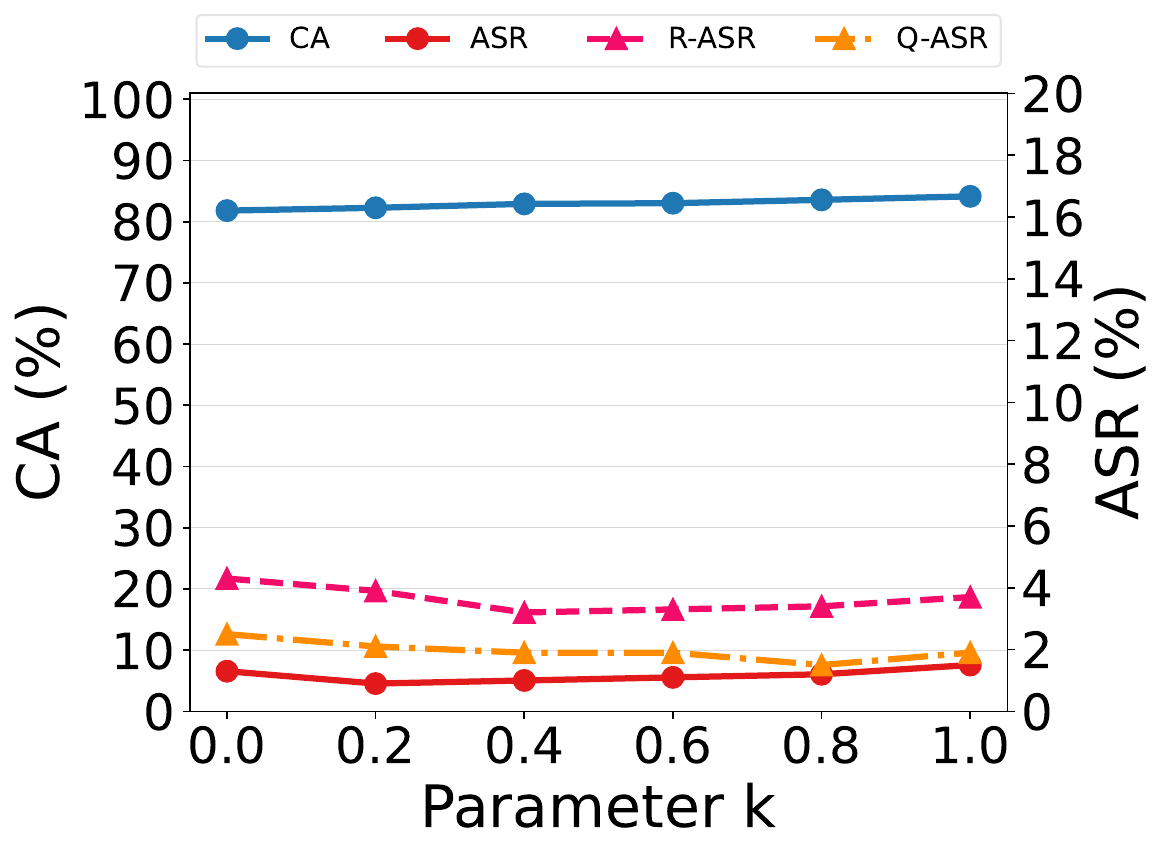}
		\caption{BadNets}
	\end{subfigure}
	\begin{subfigure}[b]{0.23\textwidth}
		\centering
		\includegraphics[width=\linewidth]{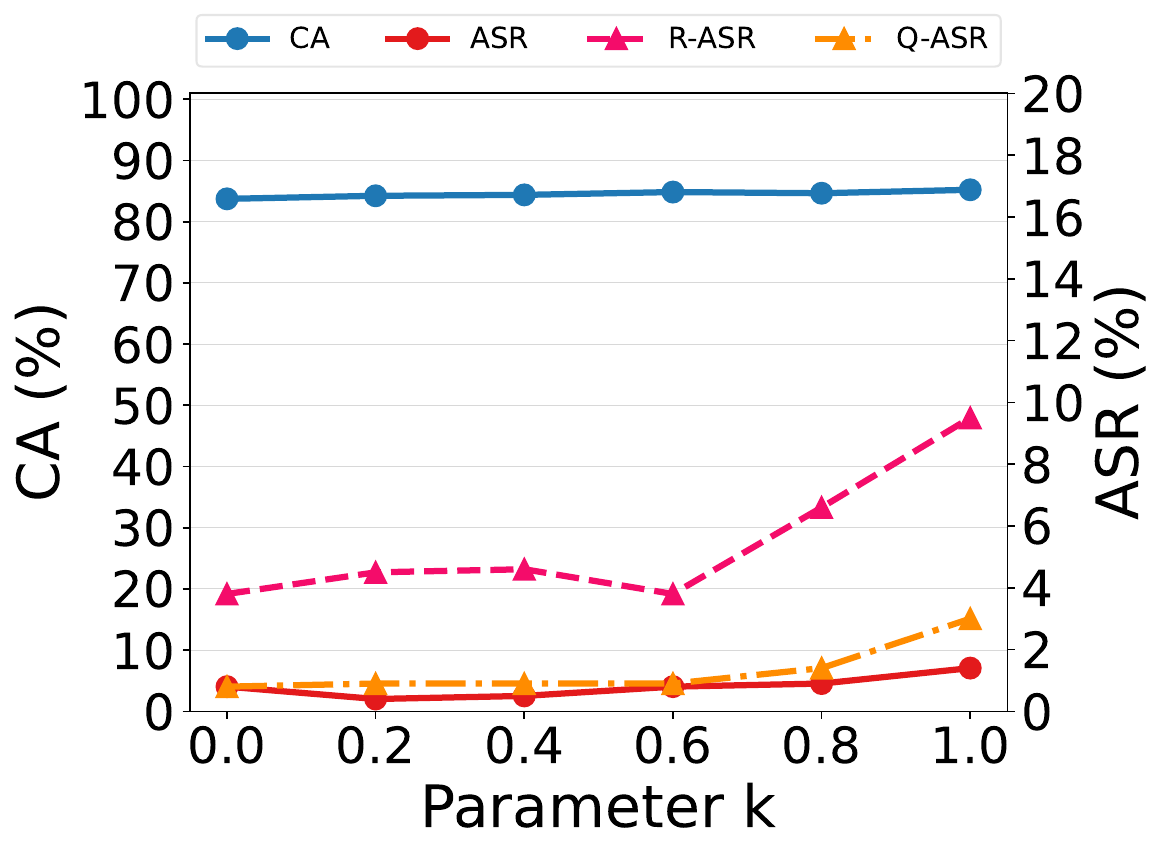}
		\caption{Refool}
	\end{subfigure}
	\begin{subfigure}[b]{0.23\textwidth}
		\centering
		\includegraphics[width=\linewidth]{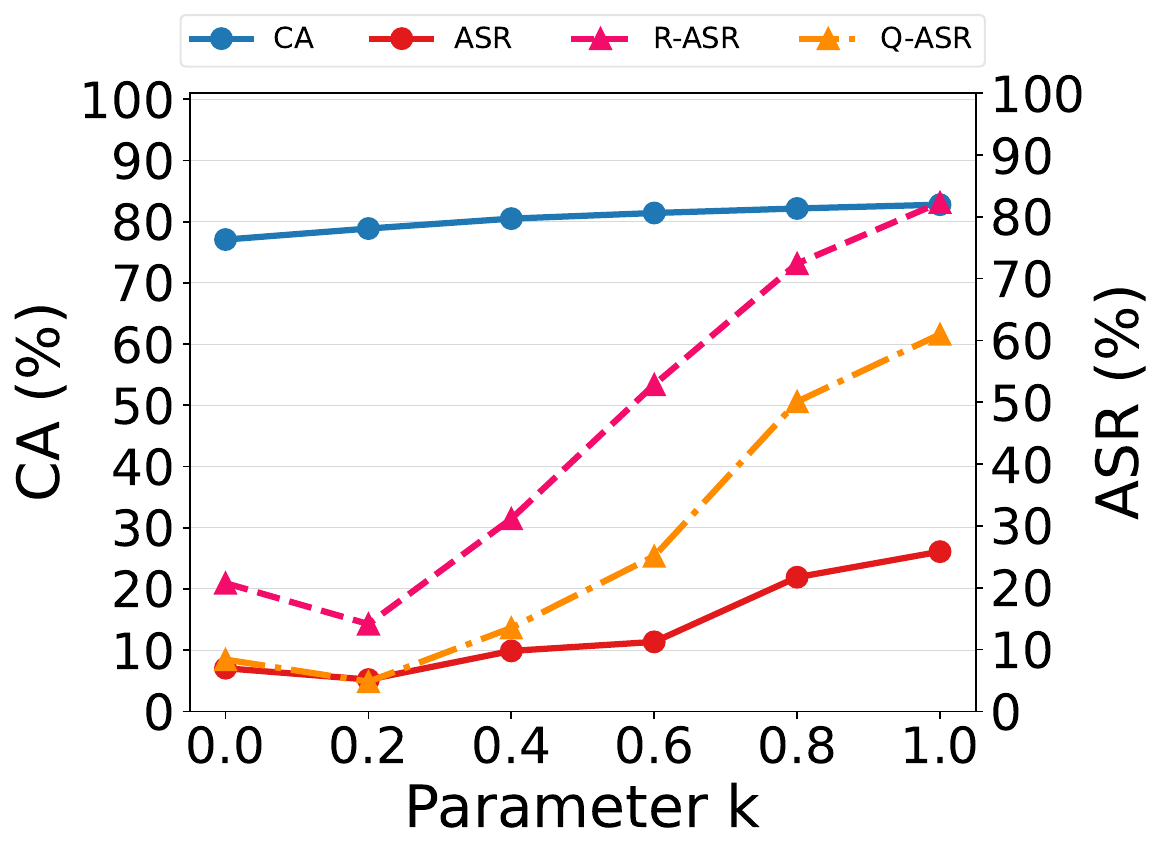}
		\caption{Narcissus}
	\end{subfigure}
    \begin{subfigure}[b]{0.23\textwidth}
		\centering
		\includegraphics[width=\linewidth]{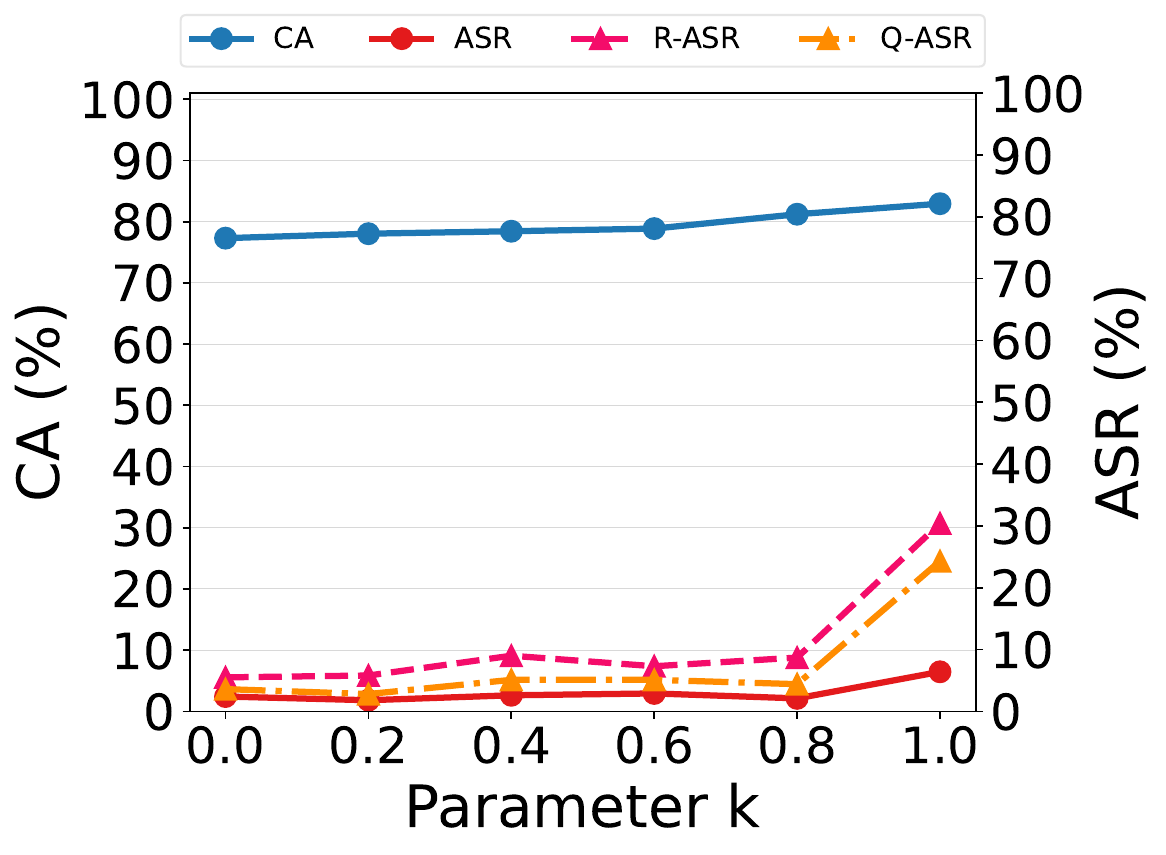}
		\caption{Adaptive-Blend}
	\end{subfigure}
	\caption{\small Evolution of CA and ASR with varying hyperparameter $k$ in the prior loss $\ell_{prior}$.}
	\label{fig:Investigation_Hyperparameter_k}
\end{figure}

As discussed in Section~\ref{section:proposed_method}, the hyperparameter $k$ in the prior loss $\ell_{prior}$ controls the trade-off between preserving clean features 
and aligning with backdoor features. Specifically, smaller values of $k$ bias $\tilde{x}$ toward backdoor-like representations, potentially at the expense of retaining 
clean semantics, while larger values favor clean feature preservation. To empirically validate this trade-off, we conduct experiments across multiple backdoor attacks.

Figure~\ref{fig:Investigation_Hyperparameter_k} shows that increasing $k$ from 0.0 to 1.0 gradually improves clean accuracy (CA), but simultaneously tends to 
increase attack success rates (ASR, R-ASR, QASR), in agreement with our theoretical analysis. This trade-off is especially pronounced for attacks exhibiting 
low orthogonality or linearity, such as Narcissus and Adaptive-Blend, where embedded backdoors are intrinsically more challenging to remove. In such scenarios, 
smaller $k$ values are preferable, as they enhance the suppression of backdoor-related features while maintaining robust defense performance.

\subsection{Additional Experiments on Diverse Datasets and Models}
\label{appendix:experiments_on_diverse_datasets_and_models}
To further evaluate the effectiveness of our proposed BI-BAU method, we conduct additional experiments on two distinct settings: ResNet-18 on the Tiny-ImageNet-200 
dataset and VGG19 on the GTSRB dataset. The experimental results for Tiny-ImageNet-200 are summarized in Tables~\ref{table:backdoor_unlearning_on_Tiny-ImageNet} and~\ref{table:post_purification_robustness_on_Tiny-ImageNet},
while the results for GTSRB are presented in Tables~\ref{table:backdoor_mitigation_on_GTSRB} and~\ref{table:resistance_to_backdoor_re-activation_attack_on_GTSRB}.

\renewcommand{\arraystretch}{1.5}
\begin{table*}[htbp]
	\centering
	\caption{Post-purification robustness (\%) of safety tuning methods under backdoor re-activation attacks (ResNet-18, Tiny-ImageNet-200).}
	\label{table:post_purification_robustness_on_Tiny-ImageNet}
	\resizebox{0.98\textwidth}{!}{\begin{tabular}{c|c|c|c|c|c|c|c|c|c|c|c}
		\toprule
		Attack $\to$ & \multicolumn{1}{c|}{BadNets} & \multicolumn{1}{c|}{Blended} & \multicolumn{1}{c|}{WaNet} & \multicolumn{1}{c|}{Refool} & \multicolumn{1}{c|}{IAD} & \multicolumn{1}{c|}{ISSBA} & \multicolumn{1}{c|}{SIG} & \multicolumn{1}{c|}{Narcissus} &  \multicolumn{1}{c|}{Adap-Blend} & \multicolumn{1}{c|}{Adap-Patch} & \multicolumn{1}{c}{Average}\\ 
		\midrule
		\multirow{2}{*}{Defense $\downarrow$} & CA/ASR & CA/ASR & CA/ASR & CA/ASR & CA/ASR & CA/ASR & CA/ASR & CA/ASR & CA/ASR & CA/ASR & CA/ASR \\
		
		                                     & R-ASR/Q-ASR  & R-ASR/Q-ASR  & R-ASR/Q-ASR  & R-ASR/Q-ASR  & R-ASR/Q-ASR  & R-ASR/Q-ASR  & R-ASR/Q-ASR  & R-ASR/Q-ASR  & R-ASR/Q-ASR  & R-ASR/Q-ASR  & R-ASR/Q-ASR  \\
		\midrule
		\multirow{2}{*}{No Defense}     & 49.82/98.34 & 50.08/100.00 & 45.42/99.90 & 49.80/99.90 & 40.53/100.00 & 40.50/99.80 & 51.80/96.14 & 49.80/99.74 & 50.13/93.70 & 50.33/98.20 & 47.82/98.57   \\
		
								 		& -/-  & -/- & -/-  & -/- & -/- & -/- & -/-  & -/- & -/-  &-/- & -/-  \\   
		\hline
		\multirow{2}{*}{IBU}     & 32.12/0.00 & 32.90/0.00 & 40.11/0.00 & 43.42/0.00 & 36.88/0.00 & 34.04/0.00 & 46.36/0.00 & 31.38/0.00  & 44.64/0.00 & 44.91/0.00 & 38.67/0.00   \\
		
								 & 0.00/0.00  & 42.30/4.10 & 0.00/0.00  & 0.10/0.00 & 0.00/0.00  & 0.00/0.00 & 0.00/0.02  & 0.00/0.00  & 0.00/0.00  & 0.00/0.00 & 4.24/0.41  \\   
		\bottomrule
		\toprule
		\multirow{2}{*}{BTI-DBF} 
			& 39.44/0.20 
			& 45.70/0.00 
			& \scalebox{1.00}{\colorbox[HTML]{d9f2d9}{41.90}}/\scalebox{1.00}{\colorbox[HTML]{d9f2d9}{0.10}} 
			& 46.47/\scalebox{1.00}{\colorbox[HTML]{d9f2d9}{0.00}} 
			& 37.47/\scalebox{1.00}{\colorbox[HTML]{d9f2d9}{0.00}} 
			& 31.53/\scalebox{1.00}{\colorbox[HTML]{d9f2d9}{0.00}} 
			& 40.15/11.05 
			& 42.42/\scalebox{1.00}{\colorbox[HTML]{d9f2d9}{0.00}} 
			& 39.17/5.60 
			& 35.48/7.50 
			& 39.97/2.44   \\
		
			& 96.36/36.28 
			& 86.60/0.00 
			& \scalebox{1.00}{\colorbox[HTML]{FFCCCC}{88.60}}/\scalebox{1.00}{\colorbox[HTML]{FFCCCC}{36.90}} 
			& \scalebox{1.00}{\colorbox[HTML]{FFCCCC}{99.50}}/\scalebox{1.00}{\colorbox[HTML]{FFCCCC}{16.90}} 
			& 1.90/0.10  
			& \scalebox{1.00}{\colorbox[HTML]{d9f2d9}{0.10}}/0.10 
			& \scalebox{1.00}{\colorbox[HTML]{FFCCCC}{98.69}}/\scalebox{1.00}{\colorbox[HTML]{FFCCCC}{85.83}} 
			& \scalebox{1.00}{\colorbox[HTML]{FFCCCC}{96.80}}/\scalebox{1.00}{\colorbox[HTML]{FFCCCC}{0.00}} 
			& \scalebox{1.00}{\colorbox[HTML]{FFCCCC}{97.90}}/\scalebox{1.00}{\colorbox[HTML]{FFCCCC}{91.70}} 
			& \scalebox{1.00}{\colorbox[HTML]{FFCCCC}{93.10}}/\scalebox{1.00}{\colorbox[HTML]{FFCCCC}{69.70}} 
			& \scalebox{1.00}{\colorbox[HTML]{FFCCCC}{75.95}}/\scalebox{1.00}{\colorbox[HTML]{FFCCCC}{33.75}} \\   
		\hline

		\multirow{2}{*}{ANP}     &\scalebox{1.00}{\colorbox[HTML]{d9f2d9}{46.78}}/2.14 & \scalebox{1.00}{\colorbox[HTML]{d9f2d9}{47.66}}/17.80 & 41.56/65.20 & 47.17/20.20 & \scalebox{1.00}{\colorbox[HTML]{d9f2d9}{40.53}}/1.10 & \scalebox{1.00}{\colorbox[HTML]{d9f2d9}{39.45}}/2.80 & \scalebox{1.00}{\colorbox[HTML]{d9f2d9}{48.03}}/91.83 & \scalebox{1.00}{\colorbox[HTML]{d9f2d9}{48.24}}/98.86 & \scalebox{1.00}{\colorbox[HTML]{d9f2d9}{46.45}}/6.30 & \scalebox{1.00}{\colorbox[HTML]{d9f2d9}{47.36}}/41.10 & \scalebox{1.00}{\colorbox[HTML]{d9f2d9}{45.32}}/34.73 \\

								 & 91.50/11.82 & 99.60/48.30 & \scalebox{1.00}{\colorbox[HTML]{FFCCCC}{99.60}}/\scalebox{1.00}{\colorbox[HTML]{FFCCCC}{79.90}} & \scalebox{1.00}{\colorbox[HTML]{FFCCCC}{98.60}}/\scalebox{1.00}{\colorbox[HTML]{FFCCCC}{55.00}} & 96.40/31.00  & 4.80/6.80 & \scalebox{1.00}{\colorbox[HTML]{FFCCCC}{97.26}}/\scalebox{1.00}{\colorbox[HTML]{FFCCCC}{92.60}} & \scalebox{1.00}{\colorbox[HTML]{FFCCCC}{99.30}}/\scalebox{1.00}{\colorbox[HTML]{FFCCCC}{99.40}} & \scalebox{1.00}{\colorbox[HTML]{FFCCCC}{59.10}}/\scalebox{1.00}{\colorbox[HTML]{FFCCCC}{24.10}} & \scalebox{1.00}{\colorbox[HTML]{FFCCCC}{74.20}}/\scalebox{1.00}{\colorbox[HTML]{FFCCCC}{59.30}} & \scalebox{1.00}{\colorbox[HTML]{FFCCCC}{82.03}}/\scalebox{1.00}{\colorbox[HTML]{FFCCCC}{50.82}}  \\   
		\hline

		\multirow{2}{*}{RNP}     & \scalebox{1.00}{\colorbox[HTML]{d9f2d9}{53.98}}/\scalebox{1.00}{\colorbox[HTML]{d9f2d9}{0.20}} & 45.90/ \scalebox{1.00}{\colorbox[HTML]{d9f2d9}{1.20}} & 43.67/\scalebox{1.00}{\colorbox[HTML]{d9f2d9}{0.00}} &  \scalebox{1.00}{\colorbox[HTML]{d9f2d9}{48.05}}/\scalebox{1.00}{\colorbox[HTML]{d9f2d9}{0.00}} & 36.52/0.10 & 18.98/0.20 & 39.34/96.82 & 41.42/28.66 & 44.26/83.80 & 41.07/1.60 & 41.31/21.25 \\

								 & 3.18/\scalebox{1.00}{\colorbox[HTML]{d9f2d9}{0.32}} & 5.10/ \scalebox{1.00}{\colorbox[HTML]{d9f2d9}{0.60}} & 18.50/ \scalebox{1.00}{\colorbox[HTML]{d9f2d9}{0.20}} & 60.90/ \scalebox{1.00}{\colorbox[HTML]{d9f2d9}{0.00}} & 14.10/ \scalebox{1.00}{\colorbox[HTML]{d9f2d9}{0.00}}  & 0.90/0.20 & \scalebox{1.00}{\colorbox[HTML]{FFCCCC}{87.17}}/\scalebox{1.00}{\colorbox[HTML]{FFCCCC}{85.69}} & \scalebox{1.00}{\colorbox[HTML]{FFCCCC}{100.00}}/\scalebox{1.00}{\colorbox[HTML]{FFCCCC}{99.96}} & \scalebox{1.00}{\colorbox[HTML]{FFCCCC}{80.50}}/\scalebox{1.00}{\colorbox[HTML]{FFCCCC}{78.00}} & 9.10/1.70 & 37.94/\scalebox{1.00}{\colorbox[HTML]{FFCCCC}{26.66}} \\   
		\hline
		
		\multirow{2}{*}{SAU}     & 40.98/1.94 & 40.94/20.30 & 36.07/20.70 & 40.10/22.80 & 31.28/16.90 & 31.40/5.50 & 42.58/84.90 & 40.54/1.30  & 41.62/1.30 & 41.44/33.50 & 39.69/20.91  \\
		
								 & 93.86/13.12 & 96.60/61.60 & \scalebox{1.00}{\colorbox[HTML]{FFCCCC}{97.90}}/\scalebox{1.00}{\colorbox[HTML]{FFCCCC}{96.40}} & \scalebox{1.00}{\colorbox[HTML]{FFCCCC}{98.40}}/\scalebox{1.00}{\colorbox[HTML]{FFCCCC}{63.80}} & 94.80/54.20  & 0.60/7.00 & \scalebox{1.00}{\colorbox[HTML]{FFCCCC}{98.13}}/\scalebox{1.00}{\colorbox[HTML]{FFCCCC}{87.78}} & \scalebox{1.00}{\colorbox[HTML]{FFCCCC}{99.76}}/\scalebox{1.00}{\colorbox[HTML]{FFCCCC}{76.52}} & \scalebox{1.00}{\colorbox[HTML]{FFCCCC}{83.90}}/\scalebox{1.00}{\colorbox[HTML]{FFCCCC}{57.60}} & \scalebox{1.00}{\colorbox[HTML]{FFCCCC}{91.20}}/\scalebox{1.00}{\colorbox[HTML]{FFCCCC}{73.20}} & \scalebox{1.00}{\colorbox[HTML]{FFCCCC}{85.51}}/\scalebox{1.00}{\colorbox[HTML]{FFCCCC}{59.12}}  \\   
		\hline
		\multirow{2}{*}{PAM}     & 43.66/2.24 & 25.05/34.20 & 39.93/5.30  & 31.73/89.90 & 32.38/4.10 & 33.06/4.70 & 35.56/\scalebox{1.00}{\colorbox[HTML]{d9f2d9}{1.16}} & 42.72/\scalebox{1.00}{\colorbox[HTML]{d9f2d9}{0.00}} & 42.87/6.50 & 39.21/\scalebox{1.00}{\colorbox[HTML]{d9f2d9}{0.10}} & 36.61/14.82  \\
								 & 1.34/2.52 & 76.00/38.80  & \scalebox{1.00}{\colorbox[HTML]{FFCCCC}{97.80}}/\scalebox{1.00}{\colorbox[HTML]{FFCCCC}{52.20}} & \scalebox{1.00}{\colorbox[HTML]{FFCCCC}{82.00}}/\scalebox{1.0}{\colorbox[HTML]{FFCCCC}{75.70}} & 9.80/5.40 & 0.70/5.10 & \scalebox{1.00}{\colorbox[HTML]{FFCCCC}{97.44}}/\scalebox{1.00}{\colorbox[HTML]{FFCCCC}{23.90}} & \scalebox{1.00}{\colorbox[HTML]{FFCCCC}{100.00}}/\scalebox{1.00}{\colorbox[HTML]{FFCCCC}{62.16}} & \scalebox{1.00}{\colorbox[HTML]{FFCCCC}{95.60}}/\scalebox{1.00}{\colorbox[HTML]{FFCCCC}{88.30}} & \scalebox{1.00}{\colorbox[HTML]{FFCCCC}{85.50}}/\scalebox{1.00}{\colorbox[HTML]{FFCCCC}{20.90}} & \scalebox{1.00}{\colorbox[HTML]{FFCCCC}{64.61}}/\scalebox{1.00}{\colorbox[HTML]{FFCCCC}{37.49}} \\   
		\hline
		\multirow{2}{*}{BI-BAU(T)} & 41.14/4.34 & 32.80/ \scalebox{1.00}{\colorbox[HTML]{d9f2d9}{0.50}} & 36.80/6.80 & 32.36/0.40 & 36.70/11.40 & 35.97/1.50 & 30.68/21.06 & 30.50/0.06  & 29.87/0.60 & 30.67/6.30 & 33.74/5.29   \\
		                           & 2.54/3.98 & \scalebox{1.00}{\colorbox[HTML]{d9f2d9}{4.00}}/\scalebox{1.00}{\colorbox[HTML]{d9f2d9}{0.70}} & \scalebox{1.00}{\colorbox[HTML]{d9f2d9}{14.00}}/5.80 & \scalebox{1.00}{\colorbox[HTML]{d9f2d9}{9.20}}/0.80 & \scalebox{1.00}{\colorbox[HTML]{d9f2d9}{9.00}}/11.80 & 0.70/0.90 & \scalebox{1.00}{\colorbox[HTML]{d9f2d9}{65.90}}/17.25 & \scalebox{1.00}{\colorbox[HTML]{d9f2d9}{1.18}}/\scalebox{1.00}{\colorbox[HTML]{d9f2d9}{0.04}} & \scalebox{1.00}{\colorbox[HTML]{d9f2d9}{9.50}}/\scalebox{1.00}{\colorbox[HTML]{d9f2d9}{0.40}} & \scalebox{1.00}{\colorbox[HTML]{d9f2d9}{19.30}}/\scalebox{1.00}{\colorbox[HTML]{d9f2d9}{5.20}} & \scalebox{1.00}{\colorbox[HTML]{d9f2d9}{13.53}}/\scalebox{1.00}{\colorbox[HTML]{d9f2d9}{4.68}}  \\   
		\hline
		\multirow{2}{*}{BI-BAU(U)}  & 39.56/0.34 & 1.60 & 38.78/1.30 & 40.28/\scalebox{1.00}{\colorbox[HTML]{d9f2d9}{0.00}} & 38.44/2.20 & 38.91/1.30 & 32.89/26.57 & 31.16/2.94 & 31.50/\scalebox{1.00}{\colorbox[HTML]{d9f2d9}{0.20}} & 30.34/8.00 & 36.29/\scalebox{1.00}{\colorbox[HTML]{d9f2d9}{4.44}} \\
		                            & \scalebox{1.00}{\colorbox[HTML]{d9f2d9}{0.98}}/0.52 & 36.20/12.20 & 25.40/0.50 & 25.70/0.10 & 11.70/2.50 & 0.60/\scalebox{1.00}{\colorbox[HTML]{d9f2d9}{0.00}} & \scalebox{1.00}{\colorbox[HTML]{FFCCCC}{96.72}}/\scalebox{1.00}{\colorbox[HTML]{d9f2d9}{13.95}} &  66.68/34.88 & 24.60/0.80 & 30.60/8.90 & 31.91/7.43 \\   
        \bottomrule
	\end{tabular}}
\end{table*}

From Tables~\ref{table:backdoor_unlearning_on_Tiny-ImageNet} and~\ref{table:backdoor_mitigation_on_GTSRB}, we observe that although BI-BAU does not achieve the 
highest clean accuracy (CA), it maintains competitive performance. Specifically, BI-BAU (U) attains CAs of 39.78\% and 93.87\% on Tiny-ImageNet-200 and GTSRB, 
respectively, corresponding to modest drops of approximately 7.42\% and 2.79\%. Notably, BI-BAU is the only method that consistently achieves a low attack success rate (ASR)
across all evaluated backdoor attacks. This includes the challenging SIG attack, against which all other methods either perform poorly or fail entirely. These findings 
are consistent with the results shown in Tables~\ref{table:backdoor_unlearning_on_CIFAR-10}, further demonstrating 
the robustness and generalizability of our BI-BAU method under varying model architectures and data distributions.

Furthermore, as shown in Tables~\ref{table:post_purification_robustness_on_Tiny-ImageNet} and~\ref{table:resistance_to_backdoor_re-activation_attack_on_GTSRB}, 
both PAM and BI-BAU achieve strong post-purification robustness performance. However, under attacks characterized by low orthogonality or low linearity, including
Refool, SIG, Adaptive-Blend, and Adaptive-Patch, PAM exhibits a high R-ASR, indicating that the backdoor effects can still be easily reactivated by the Retuning Attack (RA). 
In contrast, BI-BAU consistently maintains a low ASR, even when subjected to both RA and QRA. These observations suggest that BI-BAU is significantly more effective 
than PAM at thoroughly eliminating backdoor effects from backdoored models, ensuring more reliable and comprehensive defense performance.

\renewcommand{\arraystretch}{1.5}
\begin{table*}[!ht]
	\centering
	\caption{Performance (\%) of safety tuning strategies for eliminating backdoor effects on VGG19. 
	}
    \label{table:backdoor_mitigation_on_GTSRB}
	\resizebox{0.98\textwidth}{!}{\begin{tabular}{c|cc|cc|cc|cc|cc|cc|cc|cc}
		\toprule
		Attack $\to$ & \multicolumn{2}{c|}{BadNets} & \multicolumn{2}{c|}{Blended} & \multicolumn{2}{c|}{WaNet} & \multicolumn{2}{c|}{Refool} & \multicolumn{2}{c|}{IAD} & \multicolumn{2}{c|}{ISSBA} & \multicolumn{2}{c|}{SIG} & \multicolumn{2}{c}{Average}\\ 
		\midrule
		Defense $\downarrow$ & \multicolumn{1}{c}{CA}&\multicolumn{1}{c|}{ASR} & \multicolumn{1}{c}{CA}&\multicolumn{1}{c|}{ASR} & \multicolumn{1}{c}{CA}&\multicolumn{1}{c|}{ASR} & \multicolumn{1}{c}{CA}&\multicolumn{1}{c|}{ASR} & \multicolumn{1}{c}{CA}&\multicolumn{1}{c|}{ASR} & \multicolumn{1}{c}{CA}&\multicolumn{1}{c|}{ASR} & \multicolumn{1}{c}{CA}&\multicolumn{1}{c|}{ASR} & \multicolumn{1}{c}{CA}&\multicolumn{1}{c}{ASR} \\ 
		\hline
        No Defense  & 96.07&96.92 & 95.24&99.36 & 97.33&99.68 & 96.37&93.98 & 97.41&99.65 & 97.40&51.93 & 96.81&66.12 & 96.66&86.80\\
        \hline
		NC          &96.37&\scalebox{1.00}{\colorbox[HTML]{FFCCCC}{93.63}} & 95.90&\scalebox{1.00}{\colorbox[HTML]{FFCCCC}{23.75}} & 95.53&\scalebox{1.00}{\colorbox[HTML]{FFCCCC}{55.50}} & 96.24&\scalebox{1.00}{\colorbox[HTML]{FFCCCC}{64.30}} & 96.95&0.31 & 97.29&\scalebox{1.00}{\colorbox[HTML]{d9f2d9}{0.00}} & 95.99&\scalebox{1.00}{\colorbox[HTML]{FFCCCC}{52.10}} & 96.32&41.37   \\
		\hline
		BTI-DBF     &\scalebox{1.00}{\colorbox[HTML]{d9f2d9}{97.37}}&\scalebox{1.00}{\colorbox[HTML]{d9f2d9}{0.00}}  &\scalebox{1.00}{\colorbox[HTML]{d9f2d9}{96.01}}&0.15 &96.82&\scalebox{1.00}{\colorbox[HTML]{d9f2d9}{0.00}} &\scalebox{1.00}{\colorbox[HTML]{d9f2d9}{97.59}}&\scalebox{1.00}{\colorbox[HTML]{d9f2d9}{1.74}} &97.39&0.00 &\scalebox{1.00}{\colorbox[HTML]{d9f2d9}{97.58}}&0.95 &97.30&\scalebox{1.00}{\colorbox[HTML]{FFCCCC}{45.58}} &\scalebox{1.00}{\colorbox[HTML]{d9f2d9}{97.15}}&6.91  \\
		\hline
		ANP         & 93.34&0.01 & 94.74&18.21 &\scalebox{1.00}{\colorbox[HTML]{d9f2d9}{97.54}}&\scalebox{1.00}{\colorbox[HTML]{d9f2d9}{0.00}} & 95.99&6.25 & 96.92&\scalebox{1.00}{\colorbox[HTML]{d9f2d9}{0.00}} & 95.94&3.80 & \scalebox{1.00}{\colorbox[HTML]{d9f2d9}{97.52}}&\scalebox{1.00}{\colorbox[HTML]{FFCCCC}{56.17}} & 95.99&12.06   \\
        \hline
        EP          & 96.29&5.74 & 95.07&\scalebox{1.00}{\colorbox[HTML]{FFCCCC}{88.04}} & 96.78&\scalebox{1.00}{\colorbox[HTML]{FFCCCC}{81.07}} & 96.12&\scalebox{1.00}{\colorbox[HTML]{FFCCCC}{94.10}} & \scalebox{1.00}{\colorbox[HTML]{d9f2d9}{97.49}}&\scalebox{1.00}{\colorbox[HTML]{FFCCCC}{67.30}} & 97.57&0.23 & 96.77&\scalebox{1.00}{\colorbox[HTML]{FFCCCC}{65.79}} & 96.58&57.46\\
		\hline
		RNP    		& 87.04&8.44 & 89.31&0.00 & 96.34&0.00 & 94.33&9.24 & 95.12&\scalebox{1.00}{\colorbox[HTML]{d9f2d9}{0.00}} & 94.92&\scalebox{1.00}{\colorbox[HTML]{d9f2d9}{0.00}} & 89.80&54.72 & 92.40&10.34   \\
        \hline
        CLP		    & 85.36&96.26 & 73.92&79.57 & 94.34&4.35 & 18.11&99.82 & 92.88&2.70 & 85.88&0.23 & 71.13&45.43 & 74.51&46.90\\
		\hline
		I-BAU       & 94.10&8.06 & 95.73&\scalebox{1.00}{\colorbox[HTML]{FFCCCC}{68.48}} & 95.93&\scalebox{1.00}{\colorbox[HTML]{FFCCCC}{66.98}} & 96.34&\scalebox{1.00}{\colorbox[HTML]{FFCCCC}{50.75}} & 94.71&14.01 & 96.55&5.77 & 95.96&\scalebox{1.00}{\colorbox[HTML]{FFCCCC}{65.48}} & 95.61&39.93 \\
		\hline
		SAU         & 95.24&1.36 & 93.51&0.39 & 95.24&0.00 & 96.02&\scalebox{1.00}{\colorbox[HTML]{FFCCCC}{50.60}} & 94.67&0.00 & 95.74&0.00 & 93.74&\scalebox{1.00}{\colorbox[HTML]{FFCCCC}{32.38}} & 94.88&12.10\\
		\hline
        BI-BAU (T)  & 96.78&0.26 & 91.27&\scalebox{1.00}{\colorbox[HTML]{FFCCCC}{67.77}} & 93.70&0.00 & 94.72&3.67 & 95.02 &0.23 & 94.85&0.00 & 90.73&\scalebox{1.00}{\colorbox[HTML]{FFCCCC}{21.60}} & 93.86&13.36 \\
		\hline 
		BI-BAU (U) & 96.16&0.03 & 95.13& \scalebox{1.00}{\colorbox[HTML]{d9f2d9}{0.00}} & 91.54&\scalebox{1.00}{\colorbox[HTML]{d9f2d9}{0.00}} & 93.42&6.80 & 95.13&0.01 & 95.50&\scalebox{1.00}{\colorbox[HTML]{d9f2d9}{0.00}} & 90.27&\scalebox{1.00}{\colorbox[HTML]{d9f2d9}{17.88}} & 93.87&\scalebox{1.00}{\colorbox[HTML]{d9f2d9}{3.53}} \\
        \bottomrule
	\end{tabular}}
\end{table*} 
\renewcommand{\arraystretch}{1.5}
\begin{table*}[htbp]
	\centering
	\caption{Post-purification robustness (\%) of Safety Tuning methods under backdoor re-activation attacks (GTSRB,VGG19).
	}
    \label{table:resistance_to_backdoor_re-activation_attack_on_GTSRB} 
	\resizebox{0.98\textwidth}{!}{\begin{tabular}{c|c|c|c|c|c|c|c|c|c|c}
		\toprule
		Attack $\to$ & \multicolumn{1}{c|}{BadNets} & \multicolumn{1}{c|}{Blended} & \multicolumn{1}{c|}{WaNet} & \multicolumn{1}{c|}{Refool} & \multicolumn{1}{c|}{IAD} & \multicolumn{1}{c|}{ISSBA} & \multicolumn{1}{c|}{SIG} & \multicolumn{1}{c|}{Adaptive-Blend} & \multicolumn{1}{c|}{Adaptive-Patch} & \multicolumn{1}{c}{Average}\\ 
		\midrule
		\multirow{2}{*}{Defense $\downarrow$} & CA/ASR & CA/ASR & CA/ASR & CA/ASR & CA/ASR & CA/ASR & CA/ASR & CA/ASR & CA/ASR & CA/ASR \\
		
		                                     & R-ASR/Q-ASR & R-ASR/Q-ASR & R-ASR/Q-ASR & R-ASR/Q-ASR & R-ASR/Q-ASR & R-ASR/Q-ASR & R-ASR/Q-ASR & R-ASR/Q-ASR & R-ASR/Q-ASR & R-ASR/Q-ASR \\
		\midrule
		\multirow{2}{*}{No Defense}     & 96.00/95.94 & 95.24/99.36 & 97.33/99.68 & 96.37/93.98 & 97.41/99.65 & 97.40/51.93 & 96.81/66.12 & 96.21/51.54 & 97.44/60.64 & 96.76/71.88   \\
		
								 		& -/-  & -/- & -/-  & -/- & -/- & -/- & -/-  & -/- & -/- & -/-  \\   
		\hline
		\multirow{2}{*}{IBU}     & 99.49/0.00 & 97.55/0.00  & 97.39/0.00 & 98.66/0.00 & 99.09/0.00 & 98.64/0.00 & 98.85/0.00 & 98.46/0.00  & 98.75/0.00 & 98.54/0.00   \\
		
								 & 3.30/0.00  & 0.00/0.00 & 0.00/0.00  & 0.01/0.00 & 0.01/0.00  & 0.00/0.00 & 0.72/0.36  & 0.00/0.16  & 0.00/0.00  & 0.40/0.05  \\   
		\bottomrule
		\toprule
		\multirow{2}{*}{BTI-DBF} & \scalebox{1.00}{\colorbox[HTML]{d9f2d9}{97.41}}/0.03 & \scalebox{1.00}{\colorbox[HTML]{d9f2d9}{96.01}}/0.15 & 96.59/10.37 & \scalebox{1.00}{\colorbox[HTML]{d9f2d9}{97.02}}/57.41 & \scalebox{1.00}{\colorbox[HTML]{d9f2d9}{97.27}}/0.12 & 96.52/0.00 & \scalebox{1.00}{\colorbox[HTML]{d9f2d9}{96.68}}/45.23 & \scalebox{1.00}{\colorbox[HTML]{d9f2d9}{96.54}}/0.79 & \scalebox{1.00}{\colorbox[HTML]{d9f2d9}{97.65}}/10.60 & \scalebox{1.00}{\colorbox[HTML]{d9f2d9}{96.85}}/13.85 \\

								 & 0.23/0.17  & 0.00/2.30 & 99.84/56.37 & \scalebox{1.00}{\colorbox[HTML]{FFCCCC}{86.82}}/\scalebox{1.00}{\colorbox[HTML]{FFCCCC}{81.14}} & 16.54/0.46  & 1.18/0.00  &  \scalebox{1.00}{\colorbox[HTML]{FFCCCC}{50.88}}/\scalebox{1.00}{\colorbox[HTML]{FFCCCC}{40.56}} & \scalebox{1.00}{\colorbox[HTML]{FFCCCC}{30.95}}/\scalebox{1.00}{\colorbox[HTML]{FFCCCC}{11.08}} &  \scalebox{1.00}{\colorbox[HTML]{FFCCCC}{33.49}}/\scalebox{1.00}{\colorbox[HTML]{FFCCCC}{23.83}} & \scalebox{1.00}{\colorbox[HTML]{FFCCCC}{35.54}}/\scalebox{1.00}{\colorbox[HTML]{FFCCCC}{23.99}} \\   
		\hline
		\multirow{2}{*}{ANP}     & 95.37/0.01 & 89.99/6.49 & \scalebox{1.00}{\colorbox[HTML]{d9f2d9}{97.25}}/\scalebox{1.00}{\colorbox[HTML]{d9f2d9}{0.00}} & 95.56/6.20 & 96.76/\scalebox{1.00}{\colorbox[HTML]{d9f2d9}{0.00}} & \scalebox{1.00}{\colorbox[HTML]{d9f2d9}{97.20}}/\scalebox{1.00}{\colorbox[HTML]{d9f2d9}{0.00}} & 96.23/42.70 & 93.58/28.50  & 95.47/11.90 & 95.26/10.64   \\
		
								 & 60.68/25.53   & 84.71/47.19 & 99.76/75.30  & \scalebox{1.00}{\colorbox[HTML]{FFCCCC}{66.66}}/\scalebox{1.00}{\colorbox[HTML]{FFCCCC}{19.89}} & 96.31/24.80 & 28.10/28.03 & \scalebox{1.00}{\colorbox[HTML]{FFCCCC}{55.84}}/\scalebox{1.00}{\colorbox[HTML]{FFCCCC}{47.37}} & \scalebox{1.00}{\colorbox[HTML]{FFCCCC}{44.57}}/\scalebox{1.00}{\colorbox[HTML]{FFCCCC}{42.99}} & \scalebox{1.00}{\colorbox[HTML]{FFCCCC}{35.23}}/\scalebox{1.00}{\colorbox[HTML]{FFCCCC}{32.22}} & \scalebox{1.00}{\colorbox[HTML]{FFCCCC}{63.54}}/\scalebox{1.00}{\colorbox[HTML]{FFCCCC}{38.14}} \\   
		\hline
		\multirow{2}{*}{RNP}     & 87.04/8.44 & 89.31/\scalebox{1.00}{\colorbox[HTML]{d9f2d9}{0.00}} & 96.34/\scalebox{1.00}{\colorbox[HTML]{d9f2d9}{0.00}} & 94.33/9.24 & 95.12/\scalebox{1.00}{\colorbox[HTML]{d9f2d9}{0.00}} & 94.92/\scalebox{1.00}{\colorbox[HTML]{d9f2d9}{0.00}} & 89.80/54.72 & 86.87/62.23 & 87.04/8.44 & 91.19/15.89   \\
		
								 & \scalebox{1.00}{\colorbox[HTML]{FFCCCC}{94.96}}/\scalebox{1.00}{\colorbox[HTML]{FFCCCC}{27.36}} & 98.02/\scalebox{1.00}{\colorbox[HTML]{d9f2d9}{0.00}} & \scalebox{1.00}{\colorbox[HTML]{FFCCCC}{63.34}}/\scalebox{1.00}{\colorbox[HTML]{d9f2d9}{0.00}} & \scalebox{1.00}{\colorbox[HTML]{FFCCCC}{72.57}}/\scalebox{1.00}{\colorbox[HTML]{FFCCCC}{13.59}} & 89.20/2.69 & 30.48/6.10 & \scalebox{1.00}{\colorbox[HTML]{FFCCCC}{55.32}}/\scalebox{1.00}{\colorbox[HTML]{FFCCCC}{50.71}} & \scalebox{1.00}{\colorbox[HTML]{FFCCCC}{60.80}}/\scalebox{1.00}{\colorbox[HTML]{FFCCCC}{65.40}} & \scalebox{1.00}{\colorbox[HTML]{FFCCCC}{94.96}}/\scalebox{1.00}{\colorbox[HTML]{FFCCCC}{27.36}} & \scalebox{1.00}{\colorbox[HTML]{FFCCCC}{73.29}}/\scalebox{1.00}{\colorbox[HTML]{FFCCCC}{21.46}} \\   
		\hline
		\multirow{2}{*}{SAU}     & 95.50/\scalebox{1.00}{\colorbox[HTML]{d9f2d9}{0.00}} & 93.51/0.39 & 95.50/\scalebox{1.00}{\colorbox[HTML]{d9f2d9}{0.00}} & 96.02/50.60 & 94.67/\scalebox{1.00}{\colorbox[HTML]{d9f2d9}{0.00}} & 95.74/0.00 & 93.74/32.38 & 95.68/0.00  & 95.43/0.15 & 95.08/9.28 \\
		                         & \scalebox{1.00}{\colorbox[HTML]{d9f2d9}{0.01}}/\scalebox{1.00}{\colorbox[HTML]{d9f2d9}{0.00}} & 83.68/0.40 & 74.34/19.95 & \scalebox{1.00}{\colorbox[HTML]{FFCCCC}{92.35}}/\scalebox{1.00}{\colorbox[HTML]{FFCCCC}{76.36}} & 79.12/0.00  & 1.66/0.00 & \scalebox{1.00}{\colorbox[HTML]{FFCCCC}{69.01}}/\scalebox{1.00}{\colorbox[HTML]{FFCCCC}{47.47}} & \scalebox{1.00}{\colorbox[HTML]{d9f2d9}{0.00}}/\scalebox{1.00}{\colorbox[HTML]{d9f2d9}{0.00}} & 10.60/0.24 & \scalebox{1.00}{\colorbox[HTML]{FFCCCC}{45.64}}/\scalebox{1.00}{\colorbox[HTML]{FFCCCC}{16.04}} \\   
		\hline
		\multirow{2}{*}{PAM}     & 95.75/\scalebox{1.00}{\colorbox[HTML]{d9f2d9}{0.00}} 
		& 95.83/\scalebox{1.00}{\colorbox[HTML]{d9f2d9}{0.00}}
		& 96.15/\scalebox{1.00}{\colorbox[HTML]{d9f2d9}{0.00}} 
		& 90.65/\scalebox{1.00}{\colorbox[HTML]{d9f2d9}{0.01}}
		& 93.49/\scalebox{1.00}{\colorbox[HTML]{d9f2d9}{0.00}}
		& 96.69/\scalebox{1.00}{\colorbox[HTML]{d9f2d9}{0.00}} 
		& 89.11/\scalebox{1.00}{\colorbox[HTML]{d9f2d9}{0.00}}
		& 94.70/\scalebox{1.00}{\colorbox[HTML]{d9f2d9}{0.00}} 
		& 95.25/\scalebox{1.00}{\colorbox[HTML]{d9f2d9}{0.00}} 
		& 94.18/\scalebox{1.00}{\colorbox[HTML]{d9f2d9}{0.00}} \\
		
		& \scalebox{1.00}{\colorbox[HTML]{d9f2d9}{0.01}}/\scalebox{1.00}{\colorbox[HTML]{d9f2d9}{0.00}}
		& \scalebox{1.00}{\colorbox[HTML]{d9f2d9}{0.00}}/\scalebox{1.00}{\colorbox[HTML]{d9f2d9}{0.00}}
		& \scalebox{1.00}{\colorbox[HTML]{d9f2d9}{0.00}}/\scalebox{1.00}{\colorbox[HTML]{d9f2d9}{0.00}}
		& \scalebox{1.00}{\colorbox[HTML]{FFCCCC}{91.29}}/\scalebox{1.00}{\colorbox[HTML]{d9f2d9}{0.05}}
		& \scalebox{1.00}{\colorbox[HTML]{FFCCCC}{78.81}}/\scalebox{1.00}{\colorbox[HTML]{d9f2d9}{0.00}}  
		& \scalebox{1.00}{\colorbox[HTML]{d9f2d9}{0.00}}/\scalebox{1.00}{\colorbox[HTML]{d9f2d9}{0.00}}
		& \scalebox{1.00}{\colorbox[HTML]{FFCCCC}{51.60}}/\scalebox{1.00}{\colorbox[HTML]{d9f2d9}{0.00}}	
		& \scalebox{1.00}{\colorbox[HTML]{FFCCCC}{30.95}}/\scalebox{1.00}{\colorbox[HTML]{d9f2d9}{0.00}}
		& \scalebox{1.00}{\colorbox[HTML]{FFCCCC}{34.36}}/\scalebox{1.00}{\colorbox[HTML]{d9f2d9}{0.00}}
		& \scalebox{1.00}{\colorbox[HTML]{FFCCCC}{31.89}}/\scalebox{1.00}{\colorbox[HTML]{d9f2d9}{0.00}} \\  
		
		\hline
		\multirow{2}{*}{BI-BAU(T)}  & 90.94/\scalebox{1.00}{\colorbox[HTML]{d9f2d9}{0.00}} & 87.21/44.02 & 93.96/\scalebox{1.00}{\colorbox[HTML]{d9f2d9}{0.00}} & 94.34/4.32 & 95.12/0.03 & 94.90/\scalebox{1.00}{\colorbox[HTML]{d9f2d9}{0.00}} & 90.73/21.60 &  92.17/0.07 & 91.01/\scalebox{1.00}{\colorbox[HTML]{d9f2d9}{0.00}} & 92.26/7.78   \\
								    & 0.15/0.03 & \scalebox{1.00}{\colorbox[HTML]{FFCCCC}{57.48}}/\scalebox{1.00}{\colorbox[HTML]{FFCCCC}{56.53}} & 0.07/\scalebox{1.00}{\colorbox[HTML]{d9f2d9}{0.00}} & 25.57/8.79 & 1.64/0.05 & \scalebox{1.00}{\colorbox[HTML]{d9f2d9}{0.00}}/\scalebox{1.00}{\colorbox[HTML]{d9f2d9}{0.00}} & 27.38/24.84 & 0.15/0.08& 0.23/\scalebox{1.00}{\colorbox[HTML]{d9f2d9}{0.00}} & 11.26/9.03  \\   
		\hline
		\multirow{2}{*}{BI-BAU(U)}  
		& 96.32/0.03 
		& 87.23/0.15 
		& 94.68/\scalebox{1.00}{\colorbox[HTML]{d9f2d9}{0.00}}
		& 91.24/0.55 
		& 95.10/0.03 
		& 94.15/\scalebox{1.00}{\colorbox[HTML]{d9f2d9}{0.00}}
		& 86.07/10.78 & 91.34/0.39 & 93.79/1.10 & 92.21/1.44   \\
		
		& 0.39/0.03 
		& 0.71/0.24  
		& \scalebox{1.00}{\colorbox[HTML]{d9f2d9}{0.00}}/\scalebox{1.00}{\colorbox[HTML]{d9f2d9}{0.00}}
		& \scalebox{1.00}{\colorbox[HTML]{d9f2d9}{4.76}}/\scalebox{1.00}{\colorbox[HTML]{d9f2d9}{1.17}}
		& \scalebox{1.00}{\colorbox[HTML]{d9f2d9}{4.95}}/\scalebox{1.00}{\colorbox[HTML]{d9f2d9}{0.00}}
		& 0.23/1.11 
		& \scalebox{1.00}{\colorbox[HTML]{d9f2d9}{27.83}}/24.43 
		& \scalebox{1.00}{\colorbox[HTML]{d9f2d9}{1.42}}/0.79 
		& \scalebox{1.00}{\colorbox[HTML]{d9f2d9}{1.66}}/0.55 
		& \scalebox{1.00}{\colorbox[HTML]{d9f2d9}{4.66}}/3.14 \\   
        \bottomrule
	\end{tabular}}
\end{table*} 

%% file: main.bbl

\begin{thebibliography}{67}


\ifx \showCODEN    \undefined \def \showCODEN     #1{\unskip}     \fi
\ifx \showISBNx    \undefined \def \showISBNx     #1{\unskip}     \fi
\ifx \showISBNxiii \undefined \def \showISBNxiii  #1{\unskip}     \fi
\ifx \showISSN     \undefined \def \showISSN      #1{\unskip}     \fi
\ifx \showLCCN     \undefined \def \showLCCN      #1{\unskip}     \fi
\ifx \shownote     \undefined \def \shownote      #1{#1}          \fi
\ifx \showarticletitle \undefined \def \showarticletitle #1{#1}   \fi
\ifx \showURL      \undefined \def \showURL       {\relax}        \fi
\providecommand\bibfield[2]{#2}
\providecommand\bibinfo[2]{#2}
\providecommand\natexlab[1]{#1}
\providecommand\showeprint[2][]{arXiv:#2}

\bibitem[Bansal et~al\mbox{.}(2023)]%
        {bansal2023cleanclip}
\bibfield{author}{\bibinfo{person}{Hritik Bansal}, \bibinfo{person}{Nishad Singhi}, \bibinfo{person}{Yu Yang}, \bibinfo{person}{Fan Yin}, \bibinfo{person}{Aditya Grover}, {and} \bibinfo{person}{Kai-Wei Chang}.} \bibinfo{year}{2023}\natexlab{}.
\newblock \showarticletitle{Cleanclip: Mitigating data poisoning attacks in multimodal contrastive learning}. In \bibinfo{booktitle}{\emph{ICCV}}.
\newblock


\bibitem[Barni et~al\mbox{.}(2019)]%
        {barni2019new}
\bibfield{author}{\bibinfo{person}{Mauro Barni}, \bibinfo{person}{Kassem Kallas}, {and} \bibinfo{person}{Benedetta Tondi}.} \bibinfo{year}{2019}\natexlab{}.
\newblock \showarticletitle{A new backdoor attack in cnns by training set corruption without label poisoning}. In \bibinfo{booktitle}{\emph{ICIP}}.
\newblock


\bibitem[Bennani et~al\mbox{.}(2020)]%
        {bennani2020generalisation}
\bibfield{author}{\bibinfo{person}{Mehdi~Abbana Bennani}, \bibinfo{person}{Thang Doan}, {and} \bibinfo{person}{Masashi Sugiyama}.} \bibinfo{year}{2020}\natexlab{}.
\newblock \showarticletitle{Generalisation guarantees for continual learning with orthogonal gradient descent}.
\newblock \bibinfo{journal}{\emph{arXiv preprint arXiv:2006.11942}} (\bibinfo{year}{2020}).
\newblock


\bibitem[Chen et~al\mbox{.}(2017)]%
        {chen2017targeted}
\bibfield{author}{\bibinfo{person}{Xinyun Chen}, \bibinfo{person}{Chang Liu}, \bibinfo{person}{Bo Li}, \bibinfo{person}{Kimberly Lu}, {and} \bibinfo{person}{Dawn Song}.} \bibinfo{year}{2017}\natexlab{}.
\newblock \showarticletitle{Targeted backdoor attacks on deep learning systems using data poisoning}.
\newblock \bibinfo{journal}{\emph{arXiv preprint arXiv:1712.05526}} (\bibinfo{year}{2017}).
\newblock


\bibitem[Chen and Liu(2022)]%
        {chen2022lifelong}
\bibfield{author}{\bibinfo{person}{Zhiyuan Chen} {and} \bibinfo{person}{Bing Liu}.} \bibinfo{year}{2022}\natexlab{}.
\newblock \bibinfo{booktitle}{\emph{Lifelong machine learning}}.
\newblock \bibinfo{publisher}{Springer Nature}.
\newblock


\bibitem[Cubuk et~al\mbox{.}(2017)]%
        {cubuk2017intriguing}
\bibfield{author}{\bibinfo{person}{Ekin~D Cubuk}, \bibinfo{person}{Barret Zoph}, \bibinfo{person}{Samuel~S Schoenholz}, {and} \bibinfo{person}{Quoc~V Le}.} \bibinfo{year}{2017}\natexlab{}.
\newblock \showarticletitle{Intriguing properties of adversarial examples}.
\newblock \bibinfo{journal}{\emph{arXiv preprint arXiv:1711.02846}} (\bibinfo{year}{2017}).
\newblock


\bibitem[Dempster et~al\mbox{.}(1977)]%
        {dempster1977maximum}
\bibfield{author}{\bibinfo{person}{Arthur~P Dempster}, \bibinfo{person}{Nan~M Laird}, {and} \bibinfo{person}{Donald~B Rubin}.} \bibinfo{year}{1977}\natexlab{}.
\newblock \showarticletitle{Maximum likelihood from incomplete data via the EM algorithm}.
\newblock \bibinfo{journal}{\emph{Journal of the royal statistical society: series B (methodological)}} \bibinfo{volume}{39}, \bibinfo{number}{1} (\bibinfo{year}{1977}), \bibinfo{pages}{1--22}.
\newblock


\bibitem[Doan et~al\mbox{.}(2021)]%
        {doan2021theoretical}
\bibfield{author}{\bibinfo{person}{Thang Doan}, \bibinfo{person}{Mehdi~Abbana Bennani}, \bibinfo{person}{Bogdan Mazoure}, \bibinfo{person}{Guillaume Rabusseau}, {and} \bibinfo{person}{Pierre Alquier}.} \bibinfo{year}{2021}\natexlab{}.
\newblock \showarticletitle{A theoretical analysis of catastrophic forgetting through the ntk overlap matrix}. In \bibinfo{booktitle}{\emph{AISTATS}}.
\newblock


\bibitem[Dosovitskiy et~al\mbox{.}(2020)]%
        {dosovitskiy2020image}
\bibfield{author}{\bibinfo{person}{Alexey Dosovitskiy}, \bibinfo{person}{Lucas Beyer}, \bibinfo{person}{Alexander Kolesnikov}, \bibinfo{person}{Dirk Weissenborn}, \bibinfo{person}{Xiaohua Zhai}, \bibinfo{person}{Thomas Unterthiner}, \bibinfo{person}{Mostafa Dehghani}, \bibinfo{person}{Matthias Minderer}, \bibinfo{person}{Georg Heigold}, \bibinfo{person}{Sylvain Gelly}, {et~al\mbox{.}}} \bibinfo{year}{2020}\natexlab{}.
\newblock \showarticletitle{An image is worth 16x16 words: Transformers for image recognition at scale}.
\newblock \bibinfo{journal}{\emph{arXiv preprint arXiv:2010.11929}} (\bibinfo{year}{2020}).
\newblock


\bibitem[Gao et~al\mbox{.}(2021)]%
        {gao2021deepgem}
\bibfield{author}{\bibinfo{person}{Angela Gao}, \bibinfo{person}{Jorge Castellanos}, \bibinfo{person}{Yisong Yue}, \bibinfo{person}{Zachary Ross}, {and} \bibinfo{person}{Katherine Bouman}.} \bibinfo{year}{2021}\natexlab{}.
\newblock \showarticletitle{DeepGEM: Generalized expectation-maximization for blind inversion}. In \bibinfo{booktitle}{\emph{NeurIPS}}.
\newblock


\bibitem[Gao et~al\mbox{.}(2023)]%
        {gao2023effectiveness}
\bibfield{author}{\bibinfo{person}{Yinghua Gao}, \bibinfo{person}{Dongxian Wu}, \bibinfo{person}{Jingfeng Zhang}, \bibinfo{person}{Guanhao Gan}, \bibinfo{person}{Shu-Tao Xia}, \bibinfo{person}{Gang Niu}, {and} \bibinfo{person}{Masashi Sugiyama}.} \bibinfo{year}{2023}\natexlab{}.
\newblock \showarticletitle{On the effectiveness of adversarial training against backdoor attacks}.
\newblock \bibinfo{journal}{\emph{IEEE Transactions on Neural Networks and Learning Systems}} (\bibinfo{year}{2023}).
\newblock


\bibitem[Geirhos et~al\mbox{.}(2020)]%
        {geirhos2020shortcut}
\bibfield{author}{\bibinfo{person}{Robert Geirhos}, \bibinfo{person}{J{\"o}rn-Henrik Jacobsen}, \bibinfo{person}{Claudio Michaelis}, \bibinfo{person}{Richard Zemel}, \bibinfo{person}{Wieland Brendel}, \bibinfo{person}{Matthias Bethge}, {and} \bibinfo{person}{Felix~A Wichmann}.} \bibinfo{year}{2020}\natexlab{}.
\newblock \showarticletitle{Shortcut learning in deep neural networks}.
\newblock \bibinfo{journal}{\emph{Nature Machine Intelligence}} \bibinfo{volume}{2}, \bibinfo{number}{11} (\bibinfo{year}{2020}), \bibinfo{pages}{665--673}.
\newblock


\bibitem[Gu et~al\mbox{.}(2019)]%
        {gu2019badnets}
\bibfield{author}{\bibinfo{person}{Tianyu Gu}, \bibinfo{person}{Kang Liu}, \bibinfo{person}{Brendan Dolan-Gavitt}, {and} \bibinfo{person}{Siddharth Garg}.} \bibinfo{year}{2019}\natexlab{}.
\newblock \showarticletitle{Badnets: Evaluating backdooring attacks on deep neural networks}.
\newblock \bibinfo{journal}{\emph{IEEE Access}}  \bibinfo{volume}{7} (\bibinfo{year}{2019}), \bibinfo{pages}{47230--47244}.
\newblock


\bibitem[He et~al\mbox{.}(2016)]%
        {he2016deep}
\bibfield{author}{\bibinfo{person}{Kaiming He}, \bibinfo{person}{Xiangyu Zhang}, \bibinfo{person}{Shaoqing Ren}, {and} \bibinfo{person}{Jian Sun}.} \bibinfo{year}{2016}\natexlab{}.
\newblock \showarticletitle{Deep residual learning for image recognition}. In \bibinfo{booktitle}{\emph{CVPR}}.
\newblock


\bibitem[Hinton et~al\mbox{.}(2015)]%
        {hinton2015distilling}
\bibfield{author}{\bibinfo{person}{Geoffrey Hinton}, \bibinfo{person}{Oriol Vinyals}, {and} \bibinfo{person}{Jeff Dean}.} \bibinfo{year}{2015}\natexlab{}.
\newblock \showarticletitle{Distilling the knowledge in a neural network}.
\newblock \bibinfo{journal}{\emph{arXiv preprint arXiv:1503.02531}} (\bibinfo{year}{2015}).
\newblock


\bibitem[Huang et~al\mbox{.}(2022)]%
        {huang2022backdoor}
\bibfield{author}{\bibinfo{person}{Kunzhe Huang}, \bibinfo{person}{Yiming Li}, \bibinfo{person}{Baoyuan Wu}, \bibinfo{person}{Zhan Qin}, {and} \bibinfo{person}{Kui Ren}.} \bibinfo{year}{2022}\natexlab{}.
\newblock \showarticletitle{Backdoor defense via decoupling the training process}. In \bibinfo{booktitle}{\emph{ICLR}}.
\newblock


\bibitem[Jain(2022)]%
        {jain2022hugging}
\bibfield{author}{\bibinfo{person}{Shashank~Mohan Jain}.} \bibinfo{year}{2022}\natexlab{}.
\newblock \showarticletitle{Hugging face}.
\newblock In \bibinfo{booktitle}{\emph{Introduction to transformers for NLP: With the hugging face library and models to solve problems}}. \bibinfo{publisher}{Springer}, \bibinfo{pages}{51--67}.
\newblock


\bibitem[Kirkpatrick et~al\mbox{.}(2017)]%
        {kirkpatrick2017overcoming}
\bibfield{author}{\bibinfo{person}{James Kirkpatrick}, \bibinfo{person}{Razvan Pascanu}, \bibinfo{person}{Neil Rabinowitz}, \bibinfo{person}{Joel Veness}, \bibinfo{person}{Guillaume Desjardins}, \bibinfo{person}{Andrei~A Rusu}, \bibinfo{person}{Kieran Milan}, \bibinfo{person}{John Quan}, \bibinfo{person}{Tiago Ramalho}, \bibinfo{person}{Agnieszka Grabska-Barwinska}, {et~al\mbox{.}}} \bibinfo{year}{2017}\natexlab{}.
\newblock \showarticletitle{Overcoming catastrophic forgetting in neural networks}.
\newblock \bibinfo{journal}{\emph{Proceedings of the national academy of sciences}} \bibinfo{volume}{114}, \bibinfo{number}{13} (\bibinfo{year}{2017}), \bibinfo{pages}{3521--3526}.
\newblock


\bibitem[Kornblith et~al\mbox{.}(2019)]%
        {kornblith2019similarity}
\bibfield{author}{\bibinfo{person}{Simon Kornblith}, \bibinfo{person}{Mohammad Norouzi}, \bibinfo{person}{Honglak Lee}, {and} \bibinfo{person}{Geoffrey Hinton}.} \bibinfo{year}{2019}\natexlab{}.
\newblock \showarticletitle{Similarity of neural network representations revisited}. In \bibinfo{booktitle}{\emph{ICML}}. PMLR.
\newblock


\bibitem[Krizhevsky et~al\mbox{.}(2009)]%
        {krizhevsky2009learning}
\bibfield{author}{\bibinfo{person}{Alex Krizhevsky}, \bibinfo{person}{Geoffrey Hinton}, {et~al\mbox{.}}} \bibinfo{year}{2009}\natexlab{}.
\newblock \showarticletitle{Learning multiple layers of features from tiny images}.
\newblock \bibinfo{journal}{\emph{Technical Report}} (\bibinfo{year}{2009}).
\newblock


\bibitem[Lee et~al\mbox{.}(2019)]%
        {lee2019wide}
\bibfield{author}{\bibinfo{person}{Jaehoon Lee}, \bibinfo{person}{Lechao Xiao}, \bibinfo{person}{Samuel Schoenholz}, \bibinfo{person}{Yasaman Bahri}, \bibinfo{person}{Roman Novak}, \bibinfo{person}{Jascha Sohl-Dickstein}, {and} \bibinfo{person}{Jeffrey Pennington}.} \bibinfo{year}{2019}\natexlab{}.
\newblock \showarticletitle{Wide neural networks of any depth evolve as linear models under gradient descent}. In \bibinfo{booktitle}{\emph{NeurIPS}}.
\newblock


\bibitem[Li et~al\mbox{.}(2022)]%
        {li2022backdoor}
\bibfield{author}{\bibinfo{person}{Yiming Li}, \bibinfo{person}{Yong Jiang}, \bibinfo{person}{Zhifeng Li}, {and} \bibinfo{person}{Shu-Tao Xia}.} \bibinfo{year}{2022}\natexlab{}.
\newblock \showarticletitle{Backdoor learning: A survey}.
\newblock \bibinfo{journal}{\emph{IEEE Transactions on Neural Networks and Learning Systems}} \bibinfo{volume}{35}, \bibinfo{number}{1} (\bibinfo{year}{2022}), \bibinfo{pages}{5--22}.
\newblock


\bibitem[Li et~al\mbox{.}(2021a)]%
        {li2021invisible}
\bibfield{author}{\bibinfo{person}{Yuezun Li}, \bibinfo{person}{Yiming Li}, \bibinfo{person}{Baoyuan Wu}, \bibinfo{person}{Longkang Li}, \bibinfo{person}{Ran He}, {and} \bibinfo{person}{Siwei Lyu}.} \bibinfo{year}{2021}\natexlab{a}.
\newblock \showarticletitle{Invisible backdoor attack with sample-specific triggers}. In \bibinfo{booktitle}{\emph{ICCV}}.
\newblock


\bibitem[Li et~al\mbox{.}(2021b)]%
        {li2021anti}
\bibfield{author}{\bibinfo{person}{Yige Li}, \bibinfo{person}{Xixiang Lyu}, \bibinfo{person}{Nodens Koren}, \bibinfo{person}{Lingjuan Lyu}, \bibinfo{person}{Bo Li}, {and} \bibinfo{person}{Xingjun Ma}.} \bibinfo{year}{2021}\natexlab{b}.
\newblock \showarticletitle{Anti-backdoor learning: Training clean models on poisoned data}. In \bibinfo{booktitle}{\emph{NeurIPS}}.
\newblock


\bibitem[Li et~al\mbox{.}(2021c)]%
        {li2021neural}
\bibfield{author}{\bibinfo{person}{Yige Li}, \bibinfo{person}{Xixiang Lyu}, \bibinfo{person}{Nodens Koren}, \bibinfo{person}{Lingjuan Lyu}, \bibinfo{person}{Bo Li}, {and} \bibinfo{person}{Xingjun Ma}.} \bibinfo{year}{2021}\natexlab{c}.
\newblock \showarticletitle{Neural attention distillation: Erasing backdoor triggers from deep neural networks}.
\newblock \bibinfo{journal}{\emph{arXiv preprint arXiv:2101.05930}} (\bibinfo{year}{2021}).
\newblock


\bibitem[Li et~al\mbox{.}(2023)]%
        {li2023reconstructive}
\bibfield{author}{\bibinfo{person}{Yige Li}, \bibinfo{person}{Xixiang Lyu}, \bibinfo{person}{Xingjun Ma}, \bibinfo{person}{Nodens Koren}, \bibinfo{person}{Lingjuan Lyu}, \bibinfo{person}{Bo Li}, {and} \bibinfo{person}{Yu-Gang Jiang}.} \bibinfo{year}{2023}\natexlab{}.
\newblock \showarticletitle{Reconstructive neuron pruning for backdoor defense}. In \bibinfo{booktitle}{\emph{ICML}}.
\newblock


\bibitem[Liang et~al\mbox{.}(2024)]%
        {liang2024badclip}
\bibfield{author}{\bibinfo{person}{Siyuan Liang}, \bibinfo{person}{Mingli Zhu}, \bibinfo{person}{Aishan Liu}, \bibinfo{person}{Baoyuan Wu}, \bibinfo{person}{Xiaochun Cao}, {and} \bibinfo{person}{Ee-Chien Chang}.} \bibinfo{year}{2024}\natexlab{}.
\newblock \showarticletitle{Badclip: Dual-embedding guided backdoor attack on multimodal contrastive learning}. In \bibinfo{booktitle}{\emph{CVPR}}.
\newblock


\bibitem[Lin et~al\mbox{.}(2024)]%
        {lin2024unveiling}
\bibfield{author}{\bibinfo{person}{Weilin Lin}, \bibinfo{person}{Li Liu}, \bibinfo{person}{Shaokui Wei}, \bibinfo{person}{Jianze Li}, {and} \bibinfo{person}{Hui Xiong}.} \bibinfo{year}{2024}\natexlab{}.
\newblock \showarticletitle{Unveiling and mitigating backdoor vulnerabilities based on unlearning weight changes and backdoor activeness}.
\newblock \bibinfo{journal}{\emph{Advances in Neural Information Processing Systems}}  \bibinfo{volume}{37} (\bibinfo{year}{2024}), \bibinfo{pages}{42097--42122}.
\newblock


\bibitem[Liu et~al\mbox{.}(2018)]%
        {liu2018fine}
\bibfield{author}{\bibinfo{person}{Kang Liu}, \bibinfo{person}{Brendan Dolan-Gavitt}, {and} \bibinfo{person}{Siddharth Garg}.} \bibinfo{year}{2018}\natexlab{}.
\newblock \showarticletitle{Fine-pruning: Defending against backdooring attacks on deep neural networks}. In \bibinfo{booktitle}{\emph{RAID}}.
\newblock


\bibitem[Liu et~al\mbox{.}(2020)]%
        {liu2020reflection}
\bibfield{author}{\bibinfo{person}{Yunfei Liu}, \bibinfo{person}{Xingjun Ma}, \bibinfo{person}{James Bailey}, {and} \bibinfo{person}{Feng Lu}.} \bibinfo{year}{2020}\natexlab{}.
\newblock \showarticletitle{Reflection backdoor: A natural backdoor attack on deep neural networks}. In \bibinfo{booktitle}{\emph{ECCV}}.
\newblock


\bibitem[Lopez-Paz and Ranzato(2017)]%
        {lopez2017gradient}
\bibfield{author}{\bibinfo{person}{David Lopez-Paz} {and} \bibinfo{person}{Marc'Aurelio Ranzato}.} \bibinfo{year}{2017}\natexlab{}.
\newblock \showarticletitle{Gradient episodic memory for continual learning}. In \bibinfo{booktitle}{\emph{NeurIPS}}.
\newblock


\bibitem[McCloskey and Cohen(1989)]%
        {mccloskey1989catastrophic}
\bibfield{author}{\bibinfo{person}{Michael McCloskey} {and} \bibinfo{person}{Neal~J Cohen}.} \bibinfo{year}{1989}\natexlab{}.
\newblock \showarticletitle{Catastrophic interference in connectionist networks: The sequential learning problem}.
\newblock In \bibinfo{booktitle}{\emph{Psychology of learning and motivation}}. Vol.~\bibinfo{volume}{24}. \bibinfo{publisher}{Elsevier}, \bibinfo{pages}{109--165}.
\newblock


\bibitem[Min et~al\mbox{.}(2024)]%
        {min2024uncovering}
\bibfield{author}{\bibinfo{person}{Rui Min}, \bibinfo{person}{Zeyu Qin}, \bibinfo{person}{Nevin~L Zhang}, \bibinfo{person}{Li Shen}, {and} \bibinfo{person}{Minhao Cheng}.} \bibinfo{year}{2024}\natexlab{}.
\newblock \showarticletitle{Uncovering, Explaining, and Mitigating the Superficial Safety of Backdoor Defense}. In \bibinfo{booktitle}{\emph{NeurIPS}}.
\newblock


\bibitem[Nguyen and Tran(2021)]%
        {nguyen2021wanet}
\bibfield{author}{\bibinfo{person}{Anh Nguyen} {and} \bibinfo{person}{Anh Tran}.} \bibinfo{year}{2021}\natexlab{}.
\newblock \showarticletitle{Wanet--imperceptible warping-based backdoor attack}.
\newblock \bibinfo{journal}{\emph{arXiv preprint arXiv:2102.10369}} (\bibinfo{year}{2021}).
\newblock


\bibitem[Nguyen et~al\mbox{.}(2019)]%
        {nguyen2019toward}
\bibfield{author}{\bibinfo{person}{Cuong~V Nguyen}, \bibinfo{person}{Alessandro Achille}, \bibinfo{person}{Michael Lam}, \bibinfo{person}{Tal Hassner}, \bibinfo{person}{Vijay Mahadevan}, {and} \bibinfo{person}{Stefano Soatto}.} \bibinfo{year}{2019}\natexlab{}.
\newblock \showarticletitle{Toward understanding catastrophic forgetting in continual learning}.
\newblock \bibinfo{journal}{\emph{arXiv preprint arXiv:1908.01091}} (\bibinfo{year}{2019}).
\newblock


\bibitem[Nguyen and Tran(2020)]%
        {nguyen2020input}
\bibfield{author}{\bibinfo{person}{Tuan~Anh Nguyen} {and} \bibinfo{person}{Anh Tran}.} \bibinfo{year}{2020}\natexlab{}.
\newblock \showarticletitle{Input-aware dynamic backdoor attack}. In \bibinfo{booktitle}{\emph{NeurIPS}}.
\newblock


\bibitem[Parisi et~al\mbox{.}(2019)]%
        {parisi2019continual}
\bibfield{author}{\bibinfo{person}{German~I Parisi}, \bibinfo{person}{Ronald Kemker}, \bibinfo{person}{Jose~L Part}, \bibinfo{person}{Christopher Kanan}, {and} \bibinfo{person}{Stefan Wermter}.} \bibinfo{year}{2019}\natexlab{}.
\newblock \showarticletitle{Continual lifelong learning with neural networks: A review}.
\newblock \bibinfo{journal}{\emph{Neural networks}}  \bibinfo{volume}{113} (\bibinfo{year}{2019}), \bibinfo{pages}{54--71}.
\newblock


\bibitem[Park et~al\mbox{.}(2019)]%
        {park2019relational}
\bibfield{author}{\bibinfo{person}{Wonpyo Park}, \bibinfo{person}{Dongju Kim}, \bibinfo{person}{Yan Lu}, {and} \bibinfo{person}{Minsu Cho}.} \bibinfo{year}{2019}\natexlab{}.
\newblock \showarticletitle{Relational knowledge distillation}. In \bibinfo{booktitle}{\emph{CVPR}}.
\newblock


\bibitem[Qi et~al\mbox{.}(2023)]%
        {qi2023revisiting}
\bibfield{author}{\bibinfo{person}{Xiangyu Qi}, \bibinfo{person}{Tinghao Xie}, \bibinfo{person}{Yiming Li}, \bibinfo{person}{Saeed Mahloujifar}, {and} \bibinfo{person}{Prateek Mittal}.} \bibinfo{year}{2023}\natexlab{}.
\newblock \showarticletitle{Revisiting the assumption of latent separability for backdoor defenses}. In \bibinfo{booktitle}{\emph{ICLR}}.
\newblock


\bibitem[Qiao et~al\mbox{.}(2019)]%
        {qiao2019defending}
\bibfield{author}{\bibinfo{person}{Ximing Qiao}, \bibinfo{person}{Yukun Yang}, {and} \bibinfo{person}{Hai Li}.} \bibinfo{year}{2019}\natexlab{}.
\newblock \showarticletitle{Defending neural backdoors via generative distribution modeling}. In \bibinfo{booktitle}{\emph{NeurIPS}}.
\newblock


\bibitem[Radford et~al\mbox{.}(2021)]%
        {radford2021learning}
\bibfield{author}{\bibinfo{person}{Alec Radford}, \bibinfo{person}{Jong~Wook Kim}, \bibinfo{person}{Chris Hallacy}, \bibinfo{person}{Aditya Ramesh}, \bibinfo{person}{Gabriel Goh}, \bibinfo{person}{Sandhini Agarwal}, \bibinfo{person}{Girish Sastry}, \bibinfo{person}{Amanda Askell}, \bibinfo{person}{Pamela Mishkin}, \bibinfo{person}{Jack Clark}, {et~al\mbox{.}}} \bibinfo{year}{2021}\natexlab{}.
\newblock \showarticletitle{Learning transferable visual models from natural language supervision}. In \bibinfo{booktitle}{\emph{ICML}}.
\newblock


\bibitem[Rolnick et~al\mbox{.}(2019)]%
        {rolnick2019experience}
\bibfield{author}{\bibinfo{person}{David Rolnick}, \bibinfo{person}{Arun Ahuja}, \bibinfo{person}{Jonathan Schwarz}, \bibinfo{person}{Timothy Lillicrap}, {and} \bibinfo{person}{Gregory Wayne}.} \bibinfo{year}{2019}\natexlab{}.
\newblock \showarticletitle{Experience replay for continual learning}. In \bibinfo{booktitle}{\emph{NeurIPS}}.
\newblock


\bibitem[Salem et~al\mbox{.}(2022)]%
        {salem2022dynamic}
\bibfield{author}{\bibinfo{person}{Ahmed Salem}, \bibinfo{person}{Rui Wen}, \bibinfo{person}{Michael Backes}, \bibinfo{person}{Shiqing Ma}, {and} \bibinfo{person}{Yang Zhang}.} \bibinfo{year}{2022}\natexlab{}.
\newblock \showarticletitle{Dynamic backdoor attacks against machine learning models}. In \bibinfo{booktitle}{\emph{S\&P}}.
\newblock


\bibitem[Semenova et~al\mbox{.}(2022)]%
        {semenova2022understanding}
\bibfield{author}{\bibinfo{person}{Nadezhda Semenova}, \bibinfo{person}{Laurent Larger}, {and} \bibinfo{person}{Daniel Brunner}.} \bibinfo{year}{2022}\natexlab{}.
\newblock \showarticletitle{Understanding and mitigating noise in trained deep neural networks}.
\newblock \bibinfo{journal}{\emph{Neural Networks}}  \bibinfo{volume}{146} (\bibinfo{year}{2022}), \bibinfo{pages}{151--160}.
\newblock


\bibitem[Sharma et~al\mbox{.}(2018)]%
        {sharma2018conceptual}
\bibfield{author}{\bibinfo{person}{Piyush Sharma}, \bibinfo{person}{Nan Ding}, \bibinfo{person}{Sebastian Goodman}, {and} \bibinfo{person}{Radu Soricut}.} \bibinfo{year}{2018}\natexlab{}.
\newblock \showarticletitle{Conceptual captions: A cleaned, hypernymed, image alt-text dataset for automatic image captioning}. In \bibinfo{booktitle}{\emph{Proceedings of the 56th Annual Meeting of the Association for Computational Linguistics (Volume 1: Long Papers)}}. \bibinfo{pages}{2556--2565}.
\newblock


\bibitem[Simonyan and Zisserman(2014)]%
        {simonyan2014very}
\bibfield{author}{\bibinfo{person}{Karen Simonyan} {and} \bibinfo{person}{Andrew Zisserman}.} \bibinfo{year}{2014}\natexlab{}.
\newblock \showarticletitle{Very deep convolutional networks for large-scale image recognition}.
\newblock \bibinfo{journal}{\emph{arXiv preprint arXiv:1409.1556}} (\bibinfo{year}{2014}).
\newblock


\bibitem[Stallkamp et~al\mbox{.}(2011)]%
        {stallkamp2011german}
\bibfield{author}{\bibinfo{person}{Johannes Stallkamp}, \bibinfo{person}{Marc Schlipsing}, \bibinfo{person}{Jan Salmen}, {and} \bibinfo{person}{Christian Igel}.} \bibinfo{year}{2011}\natexlab{}.
\newblock \showarticletitle{The German traffic sign recognition benchmark: a multi-class classification competition}. In \bibinfo{booktitle}{\emph{IEEE IJCNN}}.
\newblock


\bibitem[Thrun(1995)]%
        {thrun1995lifelong}
\bibfield{author}{\bibinfo{person}{Sebastian Thrun}.} \bibinfo{year}{1995}\natexlab{}.
\newblock \showarticletitle{A lifelong learning perspective for mobile robot control}. In \bibinfo{booktitle}{\emph{IROS}}.
\newblock


\bibitem[Tung and Mori(2019)]%
        {tung2019similarity}
\bibfield{author}{\bibinfo{person}{Frederick Tung} {and} \bibinfo{person}{Greg Mori}.} \bibinfo{year}{2019}\natexlab{}.
\newblock \showarticletitle{Similarity-preserving knowledge distillation}. In \bibinfo{booktitle}{\emph{ICCV}}.
\newblock


\bibitem[Turner et~al\mbox{.}(2019)]%
        {turner2019label}
\bibfield{author}{\bibinfo{person}{Alexander Turner}, \bibinfo{person}{Dimitris Tsipras}, {and} \bibinfo{person}{Aleksander Madry}.} \bibinfo{year}{2019}\natexlab{}.
\newblock \showarticletitle{Label-consistent backdoor attacks}.
\newblock \bibinfo{journal}{\emph{arXiv preprint arXiv:1912.02771}} (\bibinfo{year}{2019}).
\newblock


\bibitem[Van~der Maaten and Hinton(2008)]%
        {van2008visualizing}
\bibfield{author}{\bibinfo{person}{Laurens Van~der Maaten} {and} \bibinfo{person}{Geoffrey Hinton}.} \bibinfo{year}{2008}\natexlab{}.
\newblock \showarticletitle{Visualizing data using t-SNE.}
\newblock \bibinfo{journal}{\emph{Journal of machine learning research}} \bibinfo{volume}{9}, \bibinfo{number}{11} (\bibinfo{year}{2008}).
\newblock


\bibitem[Wang et~al\mbox{.}(2019)]%
        {wang2019neural}
\bibfield{author}{\bibinfo{person}{Bolun Wang}, \bibinfo{person}{Yuanshun Yao}, \bibinfo{person}{Shawn Shan}, \bibinfo{person}{Huiying Li}, \bibinfo{person}{Bimal Viswanath}, \bibinfo{person}{Haitao Zheng}, {and} \bibinfo{person}{Ben~Y Zhao}.} \bibinfo{year}{2019}\natexlab{}.
\newblock \showarticletitle{Neural cleanse: Identifying and mitigating backdoor attacks in neural networks}. In \bibinfo{booktitle}{\emph{S\&P}}.
\newblock


\bibitem[Wang et~al\mbox{.}(2024)]%
        {wang2024comprehensive}
\bibfield{author}{\bibinfo{person}{Liyuan Wang}, \bibinfo{person}{Xingxing Zhang}, \bibinfo{person}{Hang Su}, {and} \bibinfo{person}{Jun Zhu}.} \bibinfo{year}{2024}\natexlab{}.
\newblock \showarticletitle{A comprehensive survey of continual learning: Theory, method and application}.
\newblock \bibinfo{journal}{\emph{IEEE Transactions on Pattern Analysis and Machine Intelligence}} (\bibinfo{year}{2024}).
\newblock


\bibitem[Wei et~al\mbox{.}(2023)]%
        {wei2023shared}
\bibfield{author}{\bibinfo{person}{Shaokui Wei}, \bibinfo{person}{Mingda Zhang}, \bibinfo{person}{Hongyuan Zha}, {and} \bibinfo{person}{Baoyuan Wu}.} \bibinfo{year}{2023}\natexlab{}.
\newblock \showarticletitle{Shared adversarial unlearning: Backdoor mitigation by unlearning shared adversarial examples}. In \bibinfo{booktitle}{\emph{NeurIPS}}.
\newblock


\bibitem[Weng et~al\mbox{.}(2020)]%
        {weng2020trade}
\bibfield{author}{\bibinfo{person}{Cheng-Hsin Weng}, \bibinfo{person}{Yan-Ting Lee}, {and} \bibinfo{person}{Shan-Hung~Brandon Wu}.} \bibinfo{year}{2020}\natexlab{}.
\newblock \showarticletitle{On the trade-off between adversarial and backdoor robustness}. In \bibinfo{booktitle}{\emph{NeurIPS}}.
\newblock


\bibitem[Wu et~al\mbox{.}(2022)]%
        {wu2022backdoorbench}
\bibfield{author}{\bibinfo{person}{Baoyuan Wu}, \bibinfo{person}{Hongrui Chen}, \bibinfo{person}{Mingda Zhang}, \bibinfo{person}{Zihao Zhu}, \bibinfo{person}{Shaokui Wei}, \bibinfo{person}{Danni Yuan}, {and} \bibinfo{person}{Chao Shen}.} \bibinfo{year}{2022}\natexlab{}.
\newblock \showarticletitle{Backdoorbench: A comprehensive benchmark of backdoor learning}. In \bibinfo{booktitle}{\emph{NeurIPS}}.
\newblock


\bibitem[Wu and Wang(2021)]%
        {wu2021adversarial}
\bibfield{author}{\bibinfo{person}{Dongxian Wu} {and} \bibinfo{person}{Yisen Wang}.} \bibinfo{year}{2021}\natexlab{}.
\newblock \showarticletitle{Adversarial neuron pruning purifies backdoored deep models}. In \bibinfo{booktitle}{\emph{NeurIPS}}.
\newblock


\bibitem[Wu et~al\mbox{.}(2017)]%
        {wu2017tiny}
\bibfield{author}{\bibinfo{person}{Jiayu Wu}, \bibinfo{person}{Qixiang Zhang}, {and} \bibinfo{person}{Guoxi Xu}.} \bibinfo{year}{2017}\natexlab{}.
\newblock \showarticletitle{Tiny imagenet challenge}.
\newblock \bibinfo{journal}{\emph{Technical report}} (\bibinfo{year}{2017}).
\newblock


\bibitem[Xu et~al\mbox{.}(2024)]%
        {xu2024towards}
\bibfield{author}{\bibinfo{person}{Xiong Xu}, \bibinfo{person}{Kunzhe Huang}, \bibinfo{person}{Yiming Li}, \bibinfo{person}{Zhan Qin}, {and} \bibinfo{person}{Kui Ren}.} \bibinfo{year}{2024}\natexlab{}.
\newblock \showarticletitle{Towards reliable and efficient backdoor trigger inversion via decoupling benign features}. In \bibinfo{booktitle}{\emph{ICLR}}.
\newblock


\bibitem[Yin et~al\mbox{.}(2024)]%
        {yin2024physical}
\bibfield{author}{\bibinfo{person}{Wen Yin}, \bibinfo{person}{Jian Lou}, \bibinfo{person}{Pan Zhou}, \bibinfo{person}{Yulai Xie}, \bibinfo{person}{Dan Feng}, \bibinfo{person}{Yuhua Sun}, \bibinfo{person}{Tailai Zhang}, {and} \bibinfo{person}{Lichao Sun}.} \bibinfo{year}{2024}\natexlab{}.
\newblock \showarticletitle{Physical Backdoor: Towards Temperature-based Backdoor Attacks in the Physical World}. In \bibinfo{booktitle}{\emph{CVPR}}.
\newblock


\bibitem[Yu et~al\mbox{.}(2022)]%
        {yu2022availability}
\bibfield{author}{\bibinfo{person}{Da Yu}, \bibinfo{person}{Huishuai Zhang}, \bibinfo{person}{Wei Chen}, \bibinfo{person}{Jian Yin}, {and} \bibinfo{person}{Tie-Yan Liu}.} \bibinfo{year}{2022}\natexlab{}.
\newblock \showarticletitle{Availability attacks create shortcuts}. In \bibinfo{booktitle}{\emph{SIGKDD}}.
\newblock


\bibitem[Zeng et~al\mbox{.}(2022)]%
        {zeng2021adversarial}
\bibfield{author}{\bibinfo{person}{Yi Zeng}, \bibinfo{person}{Si Chen}, \bibinfo{person}{Won Park}, \bibinfo{person}{Z~Morley Mao}, \bibinfo{person}{Ming Jin}, {and} \bibinfo{person}{Ruoxi Jia}.} \bibinfo{year}{2022}\natexlab{}.
\newblock \showarticletitle{Adversarial unlearning of backdoors via implicit hypergradient}. In \bibinfo{booktitle}{\emph{ICLR}}.
\newblock


\bibitem[Zeng et~al\mbox{.}(2023)]%
        {zeng2023narcissus}
\bibfield{author}{\bibinfo{person}{Yi Zeng}, \bibinfo{person}{Minzhou Pan}, \bibinfo{person}{Hoang~Anh Just}, \bibinfo{person}{Lingjuan Lyu}, \bibinfo{person}{Meikang Qiu}, {and} \bibinfo{person}{Ruoxi Jia}.} \bibinfo{year}{2023}\natexlab{}.
\newblock \showarticletitle{Narcissus: A practical clean-label backdoor attack with limited information}. In \bibinfo{booktitle}{\emph{CCS}}.
\newblock


\bibitem[Zhang et~al\mbox{.}(2024)]%
        {zhang2024exploring}
\bibfield{author}{\bibinfo{person}{Kaiyuan Zhang}, \bibinfo{person}{Siyuan Cheng}, \bibinfo{person}{Guangyu Shen}, \bibinfo{person}{Guanhong Tao}, \bibinfo{person}{Shengwei An}, \bibinfo{person}{Anuran Makur}, \bibinfo{person}{Shiqing Ma}, {and} \bibinfo{person}{Xiangyu Zhang}.} \bibinfo{year}{2024}\natexlab{}.
\newblock \showarticletitle{Exploring the orthogonality and linearity of backdoor attacks}. In \bibinfo{booktitle}{\emph{S\&P}}.
\newblock


\bibitem[Zheng et~al\mbox{.}(2022a)]%
        {zheng2022data}
\bibfield{author}{\bibinfo{person}{Runkai Zheng}, \bibinfo{person}{Rongjun Tang}, \bibinfo{person}{Jianze Li}, {and} \bibinfo{person}{Li Liu}.} \bibinfo{year}{2022}\natexlab{a}.
\newblock \showarticletitle{Data-free backdoor removal based on channel lipschitzness}. In \bibinfo{booktitle}{\emph{ECCV}}.
\newblock


\bibitem[Zheng et~al\mbox{.}(2022b)]%
        {zheng2022pre}
\bibfield{author}{\bibinfo{person}{Runkai Zheng}, \bibinfo{person}{Rongjun Tang}, \bibinfo{person}{Jianze Li}, {and} \bibinfo{person}{Li Liu}.} \bibinfo{year}{2022}\natexlab{b}.
\newblock \showarticletitle{Pre-activation distributions expose backdoor neurons}. In \bibinfo{booktitle}{\emph{NeurIPS}}.
\newblock


\bibitem[Zhu et~al\mbox{.}(2024)]%
        {zhu2024breaking}
\bibfield{author}{\bibinfo{person}{Mingli Zhu}, \bibinfo{person}{Siyuan Liang}, {and} \bibinfo{person}{Baoyuan Wu}.} \bibinfo{year}{2024}\natexlab{}.
\newblock \showarticletitle{Breaking the false sense of security in backdoor defense through re-activation attack}. In \bibinfo{booktitle}{\emph{NeurIPS}}.
\newblock


\end{thebibliography}
